\newcommand{\blind}{0}
\def\E{\mathbb{E}}
\def\N{\mathbb{N}} 
\def\P{\mathbb{P}}
\def\1{\mathbbm{1}}
\def\I{{\bf I}}
\DeclareMathOperator\dif{d\!}
\newcommand{\e}{\mathrm{e}}
\newtheorem{theorem}{Theorem}[section]
\newtheorem{proposition}[theorem]{Proposition}
\newtheorem{lemma}[theorem]{Lemma}
\newtheorem{corollary}[theorem]{Corollary}
\newtheorem{assumption}[theorem]{Assumption}
\newtheorem{remark}[theorem]{Remark}
\begin{document}

\def\spacingset#1{\renewcommand{\baselinestretch}%
{#1}\small\normalsize} \spacingset{1}

%%%%%%%%%%%%%%%%%%%%%%%%%%%%%%%%%%%%%%%%%%%%%%%%%%%%%%%%%%%%%%%%%%%%%%%%%%%%%%

\if0\blind
{
  \title{\bf Weighted Averaged Stochastic Gradient Descent: \\
Asymptotic Normality and Optimality}
\author{Ziyang Wei \hspace{.2cm}\\
    Department of Statistics, University of Chicago\\
    Wanrong Zhu \\
    Department of Statistics, University of California, Irvine\\
    and \\
    Wei Biao Wu \\
    Department of Statistics, University of Chicago}
    
  \maketitle
} \fi

\if1\blind
{
  \bigskip
  \bigskip
  \bigskip
  \begin{center}
    {\LARGE\bf Title}
\end{center}
  \medskip
} \fi

\bigskip
\begin{abstract}
Stochastic Gradient Descent (SGD) is one of the most popular algorithms in statistical and machine learning due to its computational and memory efficiency. Various averaging schemes have been proposed to accelerate the convergence of SGD in different settings. In this paper, we explore a general averaging scheme for SGD. Specifically, we establish the asymptotic normality of a broad range of weighted averaged SGD solutions and provide asymptotically valid online inference approaches. Furthermore, we propose an adaptive averaging scheme that exhibits both optimal statistical rate and favorable non-asymptotic convergence, drawing insights from the optimal weight for the linear model in terms of non-asymptotic mean squared error (MSE).
\end{abstract}

\noindent%
{\it Keywords:}  stochastic gradient descent, statistical inference, averaging scheme, asymptotic normality, online learning
\vfill

\newpage
\spacingset{1.5} % DON'T change the spacing!
\section{Introduction}
\label{sec:Intr}
Stochastic Gradient Descent (SGD) is one of the most popular optimization algorithms. It (along with its variants) plays a crucial role in statistical and machine learning problems \citep{robbins1951stochastic, lai2003stochastic, bottou2018optimization}. To find the minimizer of a convex function $F$, SGD produces a sequence of iterates through an unbiased estimate of $F$’s gradient/subgradient. The algorithm is easy to implement and popular in applications for its effectiveness, memory and computational efficiency, and online property.

Theoretical properties of SGD have been extensively studied in stochastic approximation theory and machine learning from asymptotic convergence to non-asymptotic analysis \citep{kushner2003stochastic, moulines2011non, toulis2017asymptotic, harvey2019tight}.  After $n$ iterations, optimal convergence rates of optimization error are $\mathcal{O}(1/n)$ and $\mathcal{O}(1/\sqrt{n})$ for strongly-convex and convex problems \citep{nemirovski2009robust, lacoste2012simpler}. It is shown that SGD with a learning rate proportional to the inverse of the number of iterations is not optimal under the wrong setting of the proportionality constant or without strong convexity assumptions. The idea of using \emph{averaging} to accelerate convergence is proposed by \citet{ruppert1988efficient} and \citet{polyak1992acceleration}. They demonstrated that using a learning rate with slower decays, combined with uniform averaging, robustly leads to information-theoretically optimal asymptotic variance. This averaging scheme, where all iterates are averaged, is known as Polyak-Ruppert averaging or averaged SGD (ASGD).   However, it is not optimal from a non-asymptotic perspective \citep{moulines2011non, NIPS2014_f29c21d4}. Moreover, for non-smooth objective functions, neither the final iterate nor ASGD can achieve the optimal convergence rate. To address these issues, various modified versions of ASGD have been proposed, such as suffix averaging \citep{rakhlin2011making} and polynomial-decay averaging \citep{shamir2013stochastic} for non-smooth problems, exponential weighted moving average (EWMA) for capturing time variation, elastic averaging in parallel computing environment \citep{zhang2015deep} and a simple weight proportional to $\mathcal{O}(n)$ for the projected stochastic
subgradient method \citep{lacoste2012simpler}.
%Some of these averaging schemes satisfy our assumptions, and therefore our result is applicable.

As mentioned earlier, employing a suitable averaging scheme in specific settings can help accelerate convergence and necessitates only a straightforward modification to the original SGD algorithm. This paper delves deeper into the characteristics of averaging by examining a comprehensive averaging scheme. Our main objectives include understanding the variability and statistical efficiency of a general weighted averaged SGD. Under certain mild assumptions, we find the optimal condition for the central limit theorem (CLT) of weighted averaged SGD solutions. The condition is optimal in the sense that it is sufficient and cannot be weakened under certain settings. We also establish the asymptotic normality of a generic weighting scheme commonly seen in practice (with slightly stronger yet more easily verifiable conditions on weights). The theoretical results are applicable to a wide range of existing algorithms, including the last iterate of SGD, the polynomial-decay and suffix averaged SGD.  Moreover, statistical inference for weighted averaged SGD is a valuable byproduct owing to its close relation to ASGD. We refine the functional CLT of SGD originally presented in \cite{lee2022fast} with much weaker assumptions. This means we can effectively utilize inference methods, such as covariance matrix estimation \citep{chen2020statistical, zhu2021online} or asymptotic pivotal statistics \citep{lee2022fast, zhu2024high}, originally designed for ASGD in this context. 
Beyond asymptotic normality, we also investigate finite sample convergence. When considering finite sample MSE, it is challenging to identify a single averaging scheme that is optimal for all objective functions. To gain insights, we examine the linear model to derive adaptively weighted SGD iterates that minimize the finite sample MSE. Our numerical study indicates that the adaptive weight derived from the linear model not only achieves the optimal statistical rate but also exhibits favorable non-asymptotic convergence on many models.

\noindent{\textbf{Related Works.}} There has recently been increasing interest in the statistical inference of SGD and other stochastic approximation procedures. One line of the work is to study the limiting distribution. Following the celebrated asymptotic normality results of ASGD \citep{ruppert1988efficient, polyak1992acceleration}, similar asymptotic normality results are proposed for specific algorithms in different settings such as non-convex settings, linear systems, and online decision-making \citep{yu2021non-convex, mou2020linear, chen2021onine}. Also, various methods based on covariance matrix estimation or bootstrap/subsampling are proposed to construct asymptotically valid confidence intervals \citep{chen2020statistical,zhu2021online, li2018statistical, Fang18boot, liang2019statistical, lee2022fast}.
In addition to asymptotic results, there are also many non-asymptotic works related to inference, for example, non-asymptotic normal approximation of SGD \citep{anastasiou19a} and concentration analysis \citep{davis2021low, harvey2019tight, lou2022beyond}.

%\noindent{\bf Summary of Main Results.}
%We summarize our contribution as follows.  1. We derive the first asymptotic normality results for general weighted averaged SGD solutions. The results can be applied to many averaging schemes including polynomial-decay averaging and suffix averaging. 2. We estimate the limiting covariance matrix and make it possible to conduct statistical inference.3. We modify the suffix averaging procedure to an online fashion using the idea of online batch scheme. 4. We propose new averaging schemes with optimal finite sample performance for the mean estimation model and the linear model.    

%\subsection{Notation and Organization of the Paper} 

The remainder of this paper is organized as follows. In Section \ref{sec2}, we introduce the main theorem for asymptotic normality along with statistical inference methods that can be implemented in an online fashion. In Section \ref{sec3}, we present two significant examples: polynomial-decay averaging and suffix averaging, both of which are applicable to our theorem. Additionally, we adapt the suffix averaging process for online implementation. In Section \ref{sec4}, we identify optimal weights for the linear model and propose a novel adaptive weighting scheme that achieves both optimal statistical rates and favorable finite sample performance. Numerical results are provided in Section \ref{sec5}. Finally, Section \ref{sec6} encompasses a discussion and outlines potential future work.

\section{Asymptotic Normality for Weighted SGD} 
\label{sec2}
\subsection{Overview}
In many statistical estimation and machine learning problems, we need to estimate the minimizer of a convex objective function $F(x)$, mapping from $\mathbb{R}^d$ to $\mathbb{R}$, as $F(x)=\E_{\xi \sim \Pi}f(x,\xi),$ where $f(x,\xi)$ is a loss function and $\xi$ is a random variable following the distribution $\Pi$. Assuming the minimizer $$x^{*}=\mathop{\arg\min}_{x \in \mathbb{R}^d } F(x)$$ exists, the SGD updates the iterate as follows (initialized at $x_0$),
\begin{equation}\label{eq:1}
	x_i=x_{i-1}-\eta_i\nabla f(x_{i-1},\xi_i), \ i\ge 1,
\end{equation}
where $\xi_{i}$ are $\emph{i.i.d}$ from $\Pi$, $\eta_i$ is the $i$-th step size or learning rate, $\nabla f(x,\xi)$ is the gradient of $f$ with respect to the first argument, and $x_0$ is a starting point. Let $\bar{x}_n = \sum_{i=1}^{n}x_{i} / n$ be the uniform average. Under suitable conditions,  \citet{polyak1992acceleration} shows that
\begin{equation}\label{eq:pjclt}
\sqrt{n}(\bar{x}_n-x^*) \stackrel{D}{\to} \mathcal{N}(0, V), \mbox{ where } V = A^{-1}SA^{-1},
	A=\nabla^2F(x^*), S=\mathbb{E}([\nabla f(x^*,\xi)][\nabla f(x^*,\xi)]^T).
\end{equation}
In this paper, one of our goals is to establish the asymptotic normality for the general weighted average 
\begin{equation}\label{wasgd}
	\tilde{x}_n=\sum_{i=1}^n w_{n,i} x_i,
\end{equation}
where $w_{n,i}$, $1\leq i \leq n$, denotes the weight of $x_i$ after the $n$-th update with $\sum_{i=1}^n w_{n,i} = 1$. We propose a general condition on $\{ w_{n,i}\}_{1\leq i \leq n}$ for the weighted averaged SGD \eqref{wasgd} to be asymptotically normal. Additionally, we also provide easily verifiable conditions for the asymptotic normality result with a modified weighting condition. To be specific, we show that under certain mild assumptions,
	\begin{equation*}
		\sqrt{n}(\tilde{x}_n-x^*) \stackrel{D}{\to} \mathcal{N}(0,w V), \mbox{ where }
		w = \lim_{n \rightarrow \infty}n\sum_{i=1}^{n}(w_{n,i})^{2}
	\end{equation*}
is a prefactor caused by weighting, and $V$ is defined in \eqref{eq:pjclt}. We demonstrate that this result holds for many existing averaging schemes, as well as the adaptive averaging we present in Section \ref{newave}.

We now introduce some notation. Throughout the paper, for $t\in \mathbb{R}$, $\lfloor t \rfloor = \max\{i \in \mathbb Z: i \le t\}$ and $\lceil t \rceil = \min\{i \in \mathbb Z: i \ge t\}$. We use $\mathscr{F}_{i}$ to denote the nested $\sigma$-algebra generated by $\{\xi_1, \ldots , \xi_i\}$. Let $\mathbb{E}_i$ denote the conditional expectation $\mathbb{E}(\cdot|\mathscr{F}_i)$, and $\mathbb{P}_i$ denote the conditional probability $\mathbb{P}(\cdot|\mathscr{F}_i)$. We use $\stackrel{D}{\to}$ to denote convergence in distribution, and $\stackrel{P}{\to}$ to denote convergence in probability. For a vector $a = (a_1, \ldots, a_p)^\top$ let the norm $| a | = (\sum_{i=1}^p a_i^2 )^{1/2}$. Define $|A|_F$, $|A|$, $\lambda_{\text{max}}(A)$, and $\lambda_{\text{min}}(A)$ as the Frobenius norm, operator norm, the largest eigenvalue, and the smallest eigenvalue of a matrix $A$. For notation simplicity we use $C$, $\tilde{C}$, $K$ and $C_1$, $C_2, \ldots$, to denote different constants, whose values may change in different equations.

%\subsection{Assumptions}
\subsection{Main Results}
We begin by introducing several assumptions. For a random variable $X$, we write $X \in {\cal L}^p$, $p > 0$, if $\| X \|_p : = (\E |X|^p )^{1/p} < \infty$, and write $\| X \| = \| X \|_2$.

%There are certain common assumptions in existing literature which guarantee the convergence and asymptotic normality of ASGD estimate (see, e.g., \citet{polyak1992acceleration}, \citet{chen2020statistical} and \citet{zhu2021online}). In this paper, we impose similar assumptions which are enough for the analysis of weighted SGD.
\begin{assumption} \label{as1}
$F(x)$ is continuously differentiable and strongly convex with parameter $\mu>0$. That is, for any $x_1$ and $x_2$, the inner product $\Braket{\nabla F(x_2) - \nabla F(x_1), x_2-x_1} \ge {\mu} |x_1-x_2 |^2$, or equivalently
$$F(x_2)\ge F(x_1)+\Braket{\nabla F(x_1) ,x_2-x_1}+\frac{\mu}{2} |x_1-x_2 |^2.$$
\end{assumption} 
\begin{assumption}\label{as2}
The function $f(x,\xi)$ is continuously differentiable w.r.t. $x$ for any $\xi$, $\sigma(x) := \|\nabla f(x,\xi)\| < \infty$, and $\nabla f(x,\xi)$ is stochastic Lipschitz continuous with parameter $L_2$, i.e., for any $x_1$ and $x_2$,
	$$ \| \nabla f(x_1,\xi)-\nabla f(x_2,\xi) \| :=
	(\E |\nabla f(x_1,\xi)-\nabla f(x_2,\xi)|^2)^{\frac{1}{2}}\leq L_2 |x_1-x_2|_2.$$
\end{assumption}

Assumption \ref{as1} requires strong convexity and the existence of Hessian at the true parameter, which are needed to derive the asymptotic normality of ASGD solutions. These properties are also important for obtaining the desired error bounds on SGD iterates and the asymptotic properties of the weighted averaged SGD. Assumption \ref{as2} ensures that Leibniz's integration rule holds. Consequently, the gradient noise $\epsilon_i= \nabla F(x_{i-1})-\nabla f(x_{i-1},\xi_i)$ is a martingale difference, i.e., $\mathbb{E}_{n-1} \epsilon_n = \mathbb{E}( \epsilon_n |\mathscr{F}_{n-1}) =0$. The stochastic Lipschitz continuity condition of the gradient guarantees the $L$-smoothness of $F$ and the boundedness of gradient noise moments. Easily verified examples include linear and logistic regression. Similar assumptions have been adopted in the literature on asymptotics and statistical inference of SGD \citep{polyak1992acceleration, chen2020statistical,li2022root,zhu2021online,listochastic}.

With the aforementioned assumptions in place, we will now present our main results regarding the very general condition of the asymptotic normality for weighted averaged SGD solutions. Theorem \ref{tm0} allows general weights $w_{n, i}$, and it is quenched in the sense that it can allow any starting point $x_0$. Recall (\ref{eq:pjclt}) for $A$ and $S$. We first introduce the expression of the asymptotic covariance matrix of $\tilde{x}_n$:
\begin{equation}\label{asymcov}
	V_n=\sum_{i=1}^n \Phi_{n,i}S\Phi_{n,i} \mbox{ where } \Phi_{n,i} = \eta_i \sum_{k=i}^n w_{n,k} \prod_{j=i+1}^k ({\textbf{I}}_d-\eta_jA).     
\end{equation}

\begin{theorem}\label{tm0}
Given SGD \eqref{eq:1} with step size $\eta_i=\eta i^{-\alpha}$ for some $\eta>0$ and $1/2<\alpha<1$, we consider the general averaging scheme (\ref{wasgd}) with $w_{n,i} \ge 0$. (i) Under Assumptions \ref{as1} and \ref{as2}, the following condition
\begin{equation}\label{cond1}
\max_{1\leq i \leq n} |V_n^{-\frac{1}{2}} \Phi_{n,i} | \rightarrow 0
\end{equation}
is sufficient for the quenched central limit theorem: for any starting point $x_0$,
	\begin{equation}
		\label{cond2}
		V_n^{-1/2}(\tilde{x}_n-x^*)\stackrel{D}{\to} \mathcal{N}(0,{\textbf{I}}_d).
	\end{equation}
(ii) Consider the special case where \( f(x, \xi) = \frac{1}{2} | \xi - A^{\frac{1}{2}} x |^2 \) and assume that \( \nabla f(x^*, \xi_i) \) is not normally distributed. Then condition (\ref{cond1}) is also \textit{necessary} for the CLT (\ref{cond2}).   
\end{theorem}

Theorem \ref{tm0}(ii) represents the first known necessary condition for the CLT to hold for SGD solutions. The proof relies on the deep Lévy-Cramér theorem, which characterizes the factor closeness of Gaussian distributions.

\begin{remark}
In the appendix Section \ref{sec:general} we provide a more general version of quenched CLT, which further allows a general schedule of the learning rates $\eta_i$. 
\end{remark}

By setting $w_{n,n}=1$ and $w_{n,i}=0$ for $1\leq i <n$, Theorem \ref{tm0} implies the following Corollary \ref{CLTlast}, which asserts CLT for the last iterate of SGD when $1/2 < \alpha < 1$. To this end we shall show the convergence of the covariance matrix. Details can be found in the appendix. Our imposed conditions appear quite mild. Similar CLTs under restrictive conditions are obtained in \cite{Chung54}. The special case with $\alpha = 1$ is considered in \cite{sacks1958asymptotic, fabian1968asymptotic, McLeish1976, borkar2008stochastic,toulis2017asymptotic}, among others. In other words, we provide a novel and self-contained proof of the CLT for the last iterate under mild conditions.  

\begin{corollary}\label{CLTlast}
Under Assumptions \ref{as1} and \ref{as2}, the last SGD iterate in \eqref{eq:1} with step size $\eta_i=\eta i^{-\alpha}$ for some $\eta>0$ and $1/2 < \alpha<1$ is asymptotically normal:
$$n^{\frac{\alpha}{2}}(x_n-x^*) \stackrel{D}{\to} \mathcal{N}(0,\eta {V_0}),$$
where $V_0$ is the unique solution to the Lyapunov equation $AV_0 + V_0A = S.$
\end{corollary}

Since it is usually implicit whether a specific weighting scheme meets condition \eqref{cond1}, we introduce in Theorem \ref{tm2} and Remark \ref{reducecond} two more explicit ways to ensure the CLT for a general weighted averaging scheme.

\begin{theorem}\label{tm2}
Given SGD iterates in \eqref{eq:1} with step size $\eta_i=\eta i^{-\alpha}$ for some $\eta>0$ and $1/2<\alpha<1$, we consider the general averaging scheme (\ref{wasgd}) where the weight $w_{n,i}$ satisfies the following conditions:
\begin{enumerate}
\item $|w_{n,i}| \leq C n^{-1}$ for some constant $C$.
\item  $w = \lim_{n \rightarrow \infty}n\sum_{i=1}^{n}(w_{n,i})^{2}$ exists.
\item piecewise Lipshitz: there exist functions $\{f_n\}$ from $[0,1]$ to $\mathbb{R}$, a finite set $\{t_1,t_2,...,t_N\} \subseteq [0,1]$ and a constant $L_1>0$ such that $w_{n,k}=f_n(k/n)/(\sum_{k=1}^nf_n(k/n))$ with $\sum_{k=1}^nf_n(k/n) \asymp n$ and 
    $|f_n(s_1)-f_n(s_2)| \leq L_1|s_1-s_2|,$
    for all $n \in \mathbb{N}^+$ and all $s_1$, $s_2 \in [t_j,t_{j+1}]$ where $1\leq j \leq N-1$.
\end{enumerate}
Then under Assumptions \ref{as1} and \ref{as2}, we have the CLT $\sqrt{n}(\tilde{x}_n-x^*) \stackrel{D}{\to} \mathcal{N}(0,w V)$, where $V$ is defined in \eqref{eq:pjclt}
\end{theorem}

\begin{remark}
The asymptotic covariance of the weighted average $\tilde{x}_n$ here is composed of a prefactor  $w$ and the sandwich form $A^{-1}SA^{-1}$. In the context of ASGD, where $w_{n,i} = 1/n$, the prefactor $w = 1$, which aligns with our results. We also show in the appendix that, this asymptotic covariance is exactly the limit of $nV_n=n\Phi_{n,i}S\Phi_{n,i}$ as defined in Theorem \ref{tm0}, indicating the consistency of our main results. 
\end{remark}

\begin{remark}\label{reducecond}
The piecewise Lipshitz condition requires that the majority of weights should not undergo drastic changes. This mild condition is satisfied by a variety of averaging schemes (see Sections \ref{sec3} and \ref{sec4}). A slightly stronger yet simpler condition is $|w_{n,i+1}-w_{n,i}| \leq \tilde{C} n^{-2}$ for some constant $\tilde{C}>0$.
\end{remark}

\subsection{Online Statistical Inference}
The asymptotic normality result presented in Theorem \ref{tm2} enables statistical inference for the weighted averaged SGD, which includes tasks such as constructing confidence intervals. We will explore two online methods, which, although originally designed for ASGD, are applicable in this context as well.

\noindent{\textbf{Via covariance matrix estimation.}}
One approach to construct confidence intervals involves estimating the limiting covariance matrix $\Sigma$ and directly utilizing the asymptotic normality result outlined in Theorem \ref{tm2}. For example, the $95\%$ confidence interval for the $j$-th element $x^{*}_{j}$ can be constructed as:
$$(\tilde{x}_{n})_{j} + z_{0.975}\sqrt{w(\hat{V}_{n})_{jj}/n},$$
where  $z_{0.975}$ corresponds to the $97.5\%$ percentile for a standard normal distribution, $\hat{V}_{n}$ is the estimate for the sandwich form matrix $V = A^{-1}SA^{-1}.$ The prefactor $w$ can be readily computed for any specific averaging scheme, meaning that we only need to estimate the sandwich form matrix  $V$. It is important to note that this sandwich form matrix $V$ is the asymptotic covariance matrix of ASGD $\bar{x}_{n}$. There are several established approaches to estimating the asymptotic covariance matrix of ASGD in an online fashion, such as the plug-in method, which estimates $A$ and $S$ separately, and the batch-means method, which employs SGD iterates to directly construct $\hat{V}_{n}$ \citep{chen2020statistical,zhu2021online}. As we are utilizing the vanilla SGD algorithm, these aforementioned approaches can be applied to obtain a consistent sandwich form matrix estimate $\hat{V}_{n}$.
Then, we can construct the asymptotic valid confidence intervals. 

\noindent{\textbf{Via asymptotically pivotal statistics.}}
An alternative approach to constructing confidence intervals is to use asymptotically pivotal statistics. One method involves random scaling, where we studentize $\sqrt{n}(\tilde{x}_n-x^*)$ using the  random scaling matrix $\hat V_{rs,n}$ as described in \cite{lee2022fast}, i.e., 
$$
\hat V_{rs, n} = \frac{1}{n}\sum_{s=1}^{n}\left\{\frac{1}{\sqrt{n}}\sum_{i=1}^{s}(x_{i} - \bar{x}_{n})\right\}\left\{\frac{1}{\sqrt{n}}\sum_{i=1}^{s}(x_{i} - \bar{x}_{n})\right\}^{T}.
$$
The following functional CLT in Theorem \ref{th:fclt} concerns weak convergence for the partial sum process $S_j = \sum_{i=1}^j (x_i - x^*)$. The imposed conditions are quite mild and they are easily verifiable. The moment condition in Theorem \ref{th:fclt} is nearly optimal since it only requires $(2+\epsilon)$th moment. Our functional CLT substantially relaxes the early similar result in \cite{lee2022fast} which requires a much stronger moment condition with $p \ge 2/(1-\alpha)$. By the continuous mapping theorem, (\ref{pivotal}) follows from (\ref{fclt}). 

\begin{theorem}
\label{th:fclt} Assume $\E (|\nabla f(x^*,\xi)|^p) < \infty$ for some $p > 2$. Let $I\! B(u) = (I\! B_1(u), \ldots, I\! B_d(u))$ be the standard $d$-dimensional Brownian motion. Under Assumptions \ref{as1} and \ref{as2}, we have the functional CLT:
\begin{equation}\label{fclt}
\{ S_j/\sqrt n, 1 \le j \le n\} \Rightarrow V^{1/2} \{ I\! B(u), 0 \le u \le 1 \}.
\end{equation}
Consequently we have the asymptotic pivotal convergence
\begin{equation}\label{pivotal}
 {\sqrt{n}((\tilde{x}_n)_{j}-x^*_{j})}/{\sqrt{(\hat V_{rs,n})_{jj}}}\stackrel{D}{\to}
 U\, \mbox{ where } U = I\! B_1(1)/{\sqrt{\int_{0}^{1}\{I\! B_1(r) - r I\! B_1(1)\}^2dr}}.   
\end{equation}
\end{theorem}

With (\ref{pivotal}), one can construct confidence intervals based on Theorem \ref{th:fclt}. For instance, the $95\%$ confidence interval for the $j$-th element $x^{*}_{j}$ can be constructed as: $(\tilde{x}_{n})_{j} \pm u_{rs, 0.975} \{ (\hat V_{rs,n})_{jj} / n \}^{1/2},$ where $u_{rs, 0.975}$ is the $97.5\%$ quantile for $U$. Note that the random scaling matrix $\hat V_{rs, n}$ can be updated recursively, thereby enabling the construction of confidence intervals in an online fashion. Besides the random calling method, one can also employ the parallel run method to construct a $t$-type statistic using a small number of parallel runs of weighted averaged SGD; we refer to \cite{zhu2024high} for details.

\section{Examples}\label{sec3}
In this section, we shall apply our results to two widely used specific examples of averaging schemes: polynomial-decay averaging \citep{shamir2013stochastic} and suffix averaging \citep{rakhlin2011making}. The original suffix averaging scheme is not recursive. Here we shall modify it to an online fashion in which estimates can be computed recursively. We will show that they are not statistically optimal, with a constant prefactor that is strictly greater than unity; see Corollaries \ref{poly} and \ref{Co2} below.

 %Another example will be shown in Section \ref{newave} where we introduce a new weighting scheme called adaptive averaging.

\subsection{Polynomial-decay Averaging}
With a small number $\gamma \ge 1$, the polynomial-decay averaging \citep{shamir2013stochastic} is defined as follows: Given iterates $\{x_{i}\}_{i=1}^{\infty}$, $\tilde{x}_1 = x_{1}$, and for any $n\ge 1$,
\begin{equation}\label{eq:2}
	\tilde{x}_n=(1-\frac{\gamma+1}{\gamma+n})\tilde{x}_{n-1}+\frac{\gamma+1}{\gamma+n}x_n.
\end{equation}
The recursion form in \eqref{eq:2} can be rewritten as the weighted average $\tilde{x}_n=\sum_{i=1}^n w_{n,i} x_i$ with weights
\begin{equation*}
	%\begin{aligned}	
w_{n,i} = \frac{\gamma+1}{\gamma+i} \prod_{j=i+1}^{n} \frac{j-1}{j+\gamma}  =\frac{\gamma+1}{n}\frac{\Gamma(\gamma+i+1)\Gamma(n+1)}{\Gamma(\gamma+n+1)\Gamma(i+1)}, \mbox{ where }
\Gamma(x)=\int_0^{\infty} t^{x-1}e^{-t}dt.
	%	\end{aligned}	 
\end{equation*}
The weights $w_{n,i}$ satisfy conditions in Theorem \ref{tm2}. Therefore we have the following CLT in view of 
$$\lim_{n \rightarrow \infty} n\sum_{i=1}^n (w_{n,i})^2=\frac{(\gamma+1)^2}{2\gamma+1} =: \tau.$$ 

\begin{corollary}\label{poly}
Consider SGD \eqref{eq:1} with step size $\eta_i=\eta i^{-\alpha}$ for $\eta>0$, $0.5<\alpha<1$, and polynomial-decay averaging $\tilde{x}_n$ defined in \eqref{eq:2}. Under Assumptions \ref{as1} and \ref{as2}, we have
$\sqrt{n}(\tilde{x}_n-x^*) \stackrel{D}{\to} \mathcal{N}\left(0,\tau V\right).$
\end{corollary}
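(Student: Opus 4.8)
The plan is to recognize that Corollary~\ref{poly} is a direct specialization of Theorem~\ref{tm2}: once we verify that the polynomial-decay weights $w_{n,i} = \theta_{n,i}$ satisfy the three conditions of the theorem and compute the associated prefactor $w$, the asymptotic normality follows immediately with covariance $w\,A^{-1}SA^{-T}$. Thus the entire argument reduces to a careful analysis of the closed-form weights
$$\theta_{n,i} = \frac{\gamma+1}{n}\frac{\Gamma(\gamma+i+1)\Gamma(n+1)}{\Gamma(\gamma+n+1)\Gamma(i+1)}.$$

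First I would dispatch Condition~1. The normalization $\sum_{i=1}^n \theta_{n,i} = 1$ follows by induction directly from the convex-combination form of the recursion~\eqref{eq:2}, since each update is an affine combination of $\tilde{x}_{n-1}$ and $x_n$ with coefficients summing to one. For the bound $|\theta_{n,i}| \leq Cn^{-1}$, I would apply the standard Gamma-ratio asymptotics $\Gamma(z+a)/\Gamma(z+b) \sim z^{a-b}$ to obtain $\theta_{n,i} \sim \tfrac{\gamma+1}{n}(i/n)^{\gamma}$; since $i\le n$, the factor $(i/n)^{\gamma} \le 1$ is uniformly bounded, giving the claim (for small $i$ the weight is in fact even smaller than $C/n$). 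Next, for Condition~2, I would compute $w = \lim_{n\to\infty} n\sum_{i=1}^n \theta_{n,i}^2$ by substituting the same asymptotic and recognizing a Riemann sum:
$$n\sum_{i=1}^n \theta_{n,i}^2 \sim (\gamma+1)^2 \cdot \frac{1}{n}\sum_{i=1}^n \Bigl(\frac{i}{n}\Bigr)^{2\gamma} \longrightarrow (\gamma+1)^2 \int_0^1 x^{2\gamma}\,dx = \frac{(\gamma+1)^2}{2\gamma+1},$$
which is exactly the prefactor in the stated covariance.

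The remaining and \emph{most delicate} piece is Condition~3, the smoothness condition. Rather than tackling the double sum directly, I would verify the simpler sufficient condition $|\theta_{n,i+1} - \theta_{n,i}| \leq \tilde{C}n^{-2}$ flagged after Theorem~\ref{tm2}. From the product form one computes the exact ratio $\theta_{n,i+1}/\theta_{n,i} = (\gamma+i)/i$, whence $\theta_{n,i+1} - \theta_{n,i} = (\gamma/i)\theta_{n,i}$. Plugging in the bound $\theta_{n,i} \lesssim n^{-1}(i/n)^{\gamma}$ yields $|\theta_{n,i+1}-\theta_{n,i}| \lesssim \gamma(\gamma+1)\, i^{\gamma-1}n^{-(\gamma+1)}$, and since $\gamma \ge 1$ and $i \le n$ we have $i^{\gamma-1} \le n^{\gamma-1}$, producing the desired $O(n^{-2})$ bound.

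With all three conditions confirmed and $w = (\gamma+1)^2/(2\gamma+1)$ in hand, Theorem~\ref{tm2} closes the argument. The main obstacle is purely technical: ensuring that the Gamma-ratio asymptotics hold with \emph{uniform-in-$i$} error control, so that both the Riemann-sum passage in Condition~2 and the difference bound in Condition~3 are rigorous over the full range $1 \le i \le n$ rather than only for $i$ comparable to $n$. Everything else is a routine application of the general theorem.
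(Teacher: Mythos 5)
Your proposal is correct and follows the same overall strategy as the paper: verify the three conditions of Theorem \ref{tm2} for $w_{n,i}=\theta_{n,i}$, compute the prefactor $w=(\gamma+1)^2/(2\gamma+1)$, and conclude. The differences are local. For Conditions 1 and 2, the paper (Lemma \ref{lemmap}) gets the bound $\theta_{n,i}\le(\gamma+1)/n$ from the exact inequality $\Gamma(\gamma+i+1)\Gamma(n+1)<\Gamma(\gamma+n+1)\Gamma(i+1)$ and makes the Riemann-sum limit rigorous via the approximation $\Gamma_n(x)\to\Gamma(x)$, splitting the sum at a fixed threshold $N$ --- precisely the uniform-in-$i$ error control you flag as the main technical obstacle; note also that your ratio identity $\theta_{n,i+1}/\theta_{n,i}=(\gamma+i)/i$ gives an even simpler route to Condition 1, since it shows the weights are increasing in $i$ with maximum $\theta_{n,n}=(\gamma+1)/(\gamma+n)\le(\gamma+1)/n$, no Gamma asymptotics needed. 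For the consecutive-difference bound, your argument (ratio identity plus the envelope $\theta_{n,i}\lesssim n^{-1}(i/n)^{\gamma}$) is cleaner than the paper's, which instead shows via a recursion that consecutive differences are maximized at the last index and bounds $|\theta_{n+1,n+1}-\theta_{n+1,n}|=\gamma(\gamma+1)/((\gamma+n+1)(\gamma+n))$ directly; both yield $\tilde{C}=\gamma(\gamma+1)$. The one thing to be aware of: you invoke the remark after Theorem \ref{tm2} (that $|w_{n,i+1}-w_{n,i}|\le\tilde{C}n^{-2}$ implies the smoothness condition) as a black box. That implication is Lemma \ref{bound}, and its proof --- the three-step decomposition of $\sum_i\sum_k|w_{n,k}-w_{n,i}|\eta_i e^{-\lambda m_i^k}$ with the exponential-sum estimates --- is carried out \emph{inside} the paper's proof of this very corollary and constitutes the bulk of its technical work. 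Since the main text presents that reduction as established in the appendix, citing it is legitimate, but your proposal sidesteps rather than replaces the hardest analytic step; everything you do prove is sound.
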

%Since $(\gamma +1)^2/(2\gamma+1) >1$, the covariance of polynomial-decay averaged SGD is larger than that of ASGD.

\subsection{Suffix Averaging}
The $\kappa$-suffix averaging in \cite{rakhlin2011making} is defined as the average of the last $\lceil \kappa n \rceil$ iterates:
\begin{equation}\label{suffix}
\tilde{x}_n=\frac{1}{\lceil \kappa n \rceil} \sum_{i=\lceil(1-\kappa)n\rceil}^n x_i, \,
  0<\kappa<1
\end{equation}
For $\kappa$-suffix averaging,  the weight $w_{n,i} = 1/\lceil \kappa n \rceil$ for $i>(1-\kappa)n$ otherwise $0$. The weight satisfies conditions in Theorem \ref{tm2} with $\lim_{n \rightarrow \infty}n\sum_{i=1}^{n}(w_{n,i})^{2} = 1/\kappa.$ Thus we have the following CLT:

\begin{corollary}\label{Co2}
Consider SGD iterates in \eqref{eq:1} with step size $\eta_i=\eta i^{-\alpha}$ for $\eta>0$, $0.5<\alpha<1$, and $\tilde{x}_n$ defined in \eqref{suffix}. Under Assumptions \ref{as1} and \ref{as2}, we have
$\sqrt{n}(\tilde{x}_n-x^*) \stackrel{D}{\to} \mathcal{N}(0, {\kappa}^{-1} V).$
\end{corollary}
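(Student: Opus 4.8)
The plan is to obtain Corollary \ref{Co2} as a direct application of Theorem \ref{tm2}, so the whole task reduces to checking that the $\kappa$-suffix weights $w_{n,i} = \1\{i \ge \lceil(1-\kappa)n\rceil\}/\lceil \kappa n\rceil$ satisfy the three listed weight conditions; the stated limiting covariance then follows automatically with prefactor $w = 1/\kappa$. Conditions 1 and 2 are immediate computations. The number of nonzero weights is $\lceil\kappa n\rceil$ (up to a harmless off-by-one when $\kappa n$ is an integer), so the weights sum to one and each obeys $|w_{n,i}| = 1/\lceil\kappa n\rceil \le (\kappa n)^{-1}$, giving $C = 1/\kappa$; moreover $n\sum_{i=1}^n (w_{n,i})^2 = n/\lceil\kappa n\rceil \to 1/\kappa$, so $w$ exists and equals $1/\kappa$.

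First I would isolate the only genuinely nontrivial point, the smoothness condition (condition 3), and stress that the convenient sufficient criterion $|w_{n,i+1}-w_{n,i}|\le \tilde C n^{-2}$ is \emph{not} available here: the suffix weight is piecewise constant but has a single jump of size $1/\lceil\kappa n\rceil = \Theta(n^{-1})$ at the threshold $m := \lceil(1-\kappa)n\rceil$, which is far larger than $n^{-2}$. Hence one must estimate
\[
\Delta_n := \sum_{i=1}^n\sum_{k=i+1}^n |w_{n,k}-w_{n,i}|\,\eta_i \exp\Big(-\lambda\sum_{t=i+1}^k \eta_t\Big)
\]
directly. The key observation is that $w_{n,i}$ takes only the two values $0$ and $1/\lceil\kappa n\rceil$, so for $k>i$ the difference is nonzero only when the pair straddles the boundary, i.e. $i<m\le k$, in which case it equals $1/\lceil\kappa n\rceil$. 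This collapses the double sum to
\[
\Delta_n = \frac{1}{\lceil\kappa n\rceil}\sum_{i=1}^{m-1}\eta_i \sum_{k=m}^n \exp\Big(-\lambda\sum_{t=i+1}^k \eta_t\Big).
\]

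I would then factor the exponent across the boundary as $\sum_{t=i+1}^k = \sum_{t=i+1}^m + \sum_{t=m+1}^k$. The inner sum over $k$ is a geometric-type series: using $\eta_t \ge \eta_n = \eta n^{-\alpha}$ for $t\le n$ gives $\sum_{k=m}^n \exp(-\lambda\sum_{t=m+1}^k\eta_t)\le (1-e^{-\lambda\eta_n})^{-1} = O(n^{\alpha})$, since $1-e^{-\lambda\eta_n}\sim\lambda\eta n^{-\alpha}$. The remaining factor $\sum_{i=1}^{m-1}\eta_i \exp(-\lambda\sum_{t=i+1}^m\eta_t)$ is a standard discrete convolution that is $O(1)$, bounded by roughly $1/\lambda$ via the telescoping comparison with $\int \eta(s)e^{-\lambda\int_s^m \eta}\,ds$. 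Combining these two bounds with the $1/\lceil\kappa n\rceil = \Theta(n^{-1})$ prefactor yields $\Delta_n = O(n^{\alpha-1})$, which tends to $0$ since $\alpha<1$, establishing condition 3.

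With all three conditions verified, Theorem \ref{tm2} applies verbatim and delivers $\sqrt{n}(\tilde x_n - x^*)\stackrel{D}{\to}\mathcal{N}(0,\kappa^{-1}A^{-1}SA^{-T})$. I expect the smoothness estimate to be the main obstacle — not because any single bound is deep, but because the $\Theta(n^{-1})$ boundary jump forces the use of the full double-sum condition rather than the one-line sufficient criterion, and one must track the competing $O(n^{\alpha})$ growth of the geometric sum against the $O(n^{-1})$ weight scale to confirm that the product still vanishes.
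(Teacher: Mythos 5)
Your proposal is correct, and its skeleton matches the paper's proof: both reduce Corollary \ref{Co2} to checking the three weight conditions of Theorem \ref{tm2}, dispose of conditions 1 and 2 by the same one-line computations, and observe that in the smoothness sum only straddling pairs $i < m \le k$ (with $m$ the first suffix index) contribute, each with difference $1/\lceil\kappa n\rceil$, leading to a bound of order $n^{\alpha-1}$. Where you genuinely differ is in how the collapsed double sum is estimated. The paper first replaces $\exp(-\lambda\sum_{t=i+1}^k\eta_t)$ by $\exp\bigl(-\tfrac{\lambda\eta}{1-\alpha}(k^{1-\alpha}-(i+1)^{1-\alpha})\bigr)$ (Lemma A.1 of \citet{zhu2021online}), factors this into an $i$-part and a $k$-part, and then invokes its auxiliary Lemma \ref{exp} (two integral-comparison bounds for sums involving $\exp(\pm\nu x^{1-\alpha})$), so that two individually exponentially large and exponentially small factors cancel at the boundary. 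You instead split the exponent additively at the boundary index $m$, which factors the double sum into two individually well-behaved pieces: the $k$-sum, dominated by a geometric series with ratio $e^{-\lambda\eta_n}$ and hence $O(n^{\alpha})$, and the $i$-sum $\sum_{i<m}\eta_i e^{-\lambda\sum_{t=i+1}^m\eta_t}$, which is $O(1)$ by the telescoping/Riemann comparison (with the harmless factor $e^{\lambda\eta_1}$ correcting for the exponential being evaluated at the left endpoint of each increment). Your route is more elementary and self-contained, avoiding both imported lemmas and any cancellation of $\exp(\Theta(n^{1-\alpha}))$ quantities, while the paper's route produces the reusable Lemma \ref{exp} that it exploits elsewhere. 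Your observation that the simpler sufficient criterion $|w_{n,i+1}-w_{n,i}|\le \tilde{C}n^{-2}$ is unavailable here, because of the single $\Theta(n^{-1})$ jump at the threshold, is also exactly right and explains why the paper, too, verifies the double-sum condition directly for this example.
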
 
%Since $1/\kappa >1$, the covariance of $\kappa$-suffix averaged SGD is larger than that of ASGD.

\begin{remark}[Online algorithm for suffix averaging]
Since $(1-\kappa)n$ depends on $n$, the $\kappa$-suffix averaging cannot be computed on-the-fly unless all iterations are stored and the stopping time $n$ is known beforehand. To enable “on-the-fly” computation, we modify the suffix averaging procedure to an online method. We employ the concept of online batch scheme: divide the rounds into blocks and track iterations within the current block (or the most recent blocks). The block sizes are pre-defined based on various objectives and training parameters. For a pre-defined sequence $(a_{m})_{m\ge 0}$, we treat $x_{a_{m}}$ as the start of the $m$-th block. Let $m_{t}$ denote the block index for the $t$-th iteration, satisfying $a_{m_t}\le t < a_{m_t+1}$. 

In the online suffix averaging procedure, we partition the rounds into exponentially increasing blocks, and maintain the average of the last two blocks;
see Figure \ref{fig:wxample} for an example of possible realizations. In particular, we set $a_{m} = \lfloor 2^{m-1} \rfloor +1, m\ge 0$. Then  $B_{0} = \{x_{1}\}, B_{1} = \{x_{2}\}, B_{2} = \{x_{3}, x_{4}\}, B_{3} = \{x_{5}, x_{6}, x_{7}, x_{8}\}, ...$,  
and the end index of the $m$-th block is $2^{m}$. We have $m_t =\lceil \log_{2}t\rceil$, i.e., $\lfloor 2^{m_t -1} \rfloor < t \le 2^{m_t}$. Given the SGD iterates $x_{1}, x_{2}, \ldots$, the online suffix averaging procedure is defined as follows: $\hat x_{1} = x_{1}, \hat x_{2} = (x_{1}+x_{2})/2 $, 
\begin{equation}\label{eq:online_suffix}  
	\hat x_{t}=\frac{1}{t - 2^{ \lceil \log_{2}t\rceil-2}}\left(\sum_{k=2^{\lceil \log_{2}t\rceil-2}+1}^{2^{\lceil \log_{2}t\rceil-1}}x_{k} + \sum_{k=2^{\lceil \log_{2}t\rceil-1}+1}^{t}x_{k}\right), \,\, t \ge 3.
\end{equation}
Note that $1/2 < (t - 2^{\lceil \log_{2}t\rceil-2})/t\le 3/4$ for $t\ge 3$. 
Therefore, the online suffix averaging is a form of robust suffix averaging with $1/2 <\kappa\le 3/4$, i.e., the average would always correspond to a constant-portion suffix of all iterates. The online suffix average $\hat x_{t}$ in \eqref{eq:online_suffix} can be updated recursively; see Algorithm \ref{alg}.

\begin{figure} 
	\centering
	\includegraphics[width=0.7\textwidth]{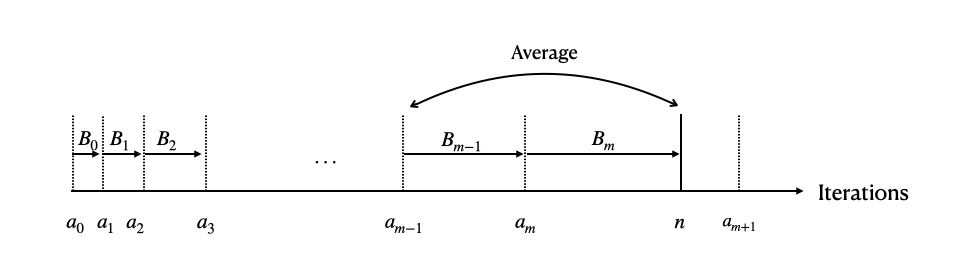}
	\caption{Realizations of online suffix averaging. Here $a_{m}, m\ge 0$, is the index of the starting point of the $m$-th block.}
	\label{fig:wxample} 
\end{figure}

\begin{algorithm}[h]
\caption{Online suffix averaging}\label{alg}
	\SetAlgoLined
    \hrule
	{\textbf{Input}}: step sizes $\{\eta_{t}\}_{t\ge 1}$, initialization %$m = -1, S_{0} = 0, S_{1} = 0, \theta_{0}$\;
	$x_{0}, m = 0, S_{0} = 0, S_{1} = 0$\\
	% 	$\theta_{1} = \theta_{0} -\eta_{1}\hat{g}_{1}$\; 
	%\For{$t= 1, 2, ...$}{
	\For{$t= 1,2, ..., $}{
		$x_{t} = x_{t-1} -\eta_{t}\hat{g}_{t}$ ($\hat{g}_{t}$ is stochastic gradient)  \\
		\eIf{$t > 2^{m}$}{
			$m = m+1$, $S_{0} = S_{1}$, $S_{1} = x_{t}$\;
		}{ 
		$S_{1} = S_{1} + x_{t}$\;
		}
	{\textbf{Output}} (if necessary): $\hat x_{t} = (S_{0} + S_{1})/(t - \lfloor 2^{m-2}\rfloor)$  
	} 
    \hrule
\end{algorithm}
\end{remark}

%\section{Optimal Weights for Mean Estimation Model}
\section{Non-asymptotic Mean Squared Error (MSE)}
\label{sec4}
%\textcolor{red}{comment early stopping}
In addition to the asymptotic distribution and statistical convergence rates, it is also important to consider finite sample performance when dealing with finite data problems or when early stopping is desired. In this section, we will examine the optimal weight for a linear model in terms of finite sample MSE, building upon the concept of \emph{best linear unbiased estimation} (BLUE). Furthermore, we will introduce a novel adaptive averaging scheme based on insights from the mean estimation model. This particular scheme is both statistically efficient with optimal variance, and has a fast finite sample convergence rate, outperforming existing averaging schemes in the mean estimation model. For (\ref{wasgd}), we can evaluate its non-asymptotic performance through its $\text{MSE}(\tilde{x}_n) = \E|\tilde{x}_{n} - x^{*}|^2.$ To obtain the MSE-optimal weights, we consider
\begin{equation}\label{eq:optMSE}
	\min_{c = (c_1,\ \cdots,\ c_n): \, c^{T}\1 = 1}\mathbb{E}|\sum_{i=1}^n c_ix_i-x^{*}|^2.
\end{equation}
The solution to the above constrained optimization problem is 
\begin{equation}\label{matrixform}
	c = \frac{\Sigma^{-1}\1}{\1^{T}\Sigma^{-1}\1}, \mbox{ where }
    \Sigma = (\E((x_{i} - x^{*})^{T}(x_{j} - x^{*})))_{1\le i,j \le n}. 
\end{equation}
The solution depends on the correlation between SGD iterates and can vary across different models. In this section, we shall study the linear regression model which can have a close-form solution of the BLUE weights. The latter can provide insights into the properties of an optimal weight in a generalized form. 

\subsection{Linear model}
%\noindent{\bf Linear regression model.}
Consider the following linear regression model:
\begin{equation}\label{eq:linearmodel}
	b_i = a_ix^* + \epsilon_{i},
\end{equation} 
where $x^*$ denotes the unknown parameter of interest, $\epsilon_{i}$ are \emph{i.i.d.}  standard normal, and $\xi_i = (a_{i}, b_{i})$ denote the observed streaming data. To solve the above linear regression problem, we consider the squared loss function $F(x) = \E(f(x,\xi_i))=\E (a_i x-b_i)^2/2,$ and SGD sequence with step size $\eta_{i}$ at the $i$-th iteration:
\begin{equation}\label{onedl}
	x_i=x_{i-1}-\eta_ia_i(a_ix_{i-1}-b_i).
\end{equation}
%The SGD error sequence $x_{i}-x^*$ takes the recursion form 
%$$x_i-x^*=(1-\eta_ia^2_i)(x_{i-1}-x^*)+\eta_ia_i(b_i-a_ix^*)$$

\begin{proposition}\label{1dlm0}
Consider the linear model in \eqref{eq:linearmodel} and SGD sequence $x_i$ defined in \eqref{onedl} with step size $\eta_1=a_1^{-2}$ and general $\eta_{i}, i\ge 2$. %($\eta_i=a_1^{-2}i^{-\alpha}$ and $0.5<\alpha <1$) 
The unique solution to the optimization problem  \eqref{eq:optMSE}  is given by
$$c_{n,i}=\frac{a_{i+1}^2+ \eta_i^{-1}-\eta_{i+1}^{-1}}{S_n}, 1\leq i \leq n-1, \, c_{n,n}=\frac{1}{\eta_nS_n}, \mbox{ where } S_n=\sum_{i=1}^n a_i^2.$$
\end{proposition}

The weights in Proposition \ref{1dlm0} are adjusted by the learning rates $\eta_{i}$. However, one characteristic of these weights is that the last weight is significantly larger than the preceding ones. In comparison with the equal weights scheme, our $c_{n, i}$ is intuitively attractive by putting most weight on the last iterate.

\subsection{A new averaging scheme: adaptive weighted averaging}
\label{newave}
	%\noindent{\bf A new average scheme.} 
    In the special case of the mean estimation model, where $a_{i} = 1, \forall i\ge 1$, %and if we choose a polynomially decaying step size $\eta_{i} = \eta i^{-\alpha}$, 
    then the weights are given by:
%\begin{equation}\label{eq:optimalweight}	 
%c_{n,i}=\frac{1+\eta_i^{-1}-\eta_{i+1}^{-1}}{n}, 1\leq i \leq n-1, c_{n,n}=\frac{n^{\alpha-1}}{\eta}.
%\end{equation}
\begin{equation}\label{eq:optimalweight}	 
c_{n,i}=\frac{1+\eta_i^{-1}-\eta_{i+1}^{-1}}{n}, \, 1\leq i \leq n-1, \, c_{n,n}=\frac{1}{n\eta_{n}}.
\end{equation}
When the number of iteration increases from $n$ to $n+1$, we have $c_{n+1,i}=(n+1)^{-1} n c_{n,i}, 1\leq i\leq n-1.$ Therefore the weighted averaged SGD $\tilde{x}_n$ with above optimal weights can also be recursively updated: 
%\begin{equation}\label{eq:optimal_recursion}
%		\tilde{x}_{n+1}=\frac{n}{n+1}\tilde{x}_{n}+\frac{1-\eta_{n+1}^{-1}}{n+1}x_n+\frac{(n+1)^{\alpha-1}}{\eta}x_{n+1}.
%	\end{equation}
\begin{equation}\label{eq:optimal_recursion}
		\tilde{x}_{n+1}=\frac{n}{n+1}\tilde{x}_{n}+\frac{1-\eta_{n+1}^{-1}}{n+1}x_n+\frac{\eta_{n+1}^{-1}}{n+1}x_{n+1}.
	\end{equation}
	The newly introduced averaging scheme, referred to as \emph{adaptive weighted averaging}, effectively decreases the weight of earlier iterates in comparison to the newly arriving ones. From the definition of the optimal weights, the weights in \eqref{eq:optimalweight} minimizes the finite sample MSE among all possible weights for the mean estimation model.
		%We will show in Section \ref{sec6} that it has a better finite sample performance than ASGD in some other settings, including non-smooth settings.
On the other hand, it also satisfies the condition in Theorem \ref{tm0}, which indicates that the corresponding weighted averaged SGD satisfies CLT with the asymptotic covariance matrix being the same as that of ASGD estimates; see the following Corollary \ref{Co3}. Thus, the proposed adaptive weighted average achieves both fast finite sample convergence rates and the optimal statistical rate.
	
%\begin{corollary}\label{Co3}
%Under the settings in Theorem \ref{tm0}, let $\tilde{x}_n=\sum_{i=1}^n c_{n,i} x_i$ with $c_{n,i}$ defined in \eqref{eq:optimalweight}. Then we have 
%$$\sqrt{n}(\tilde{x}_n-x^*) \stackrel{D}{\to} \mathcal{N}(0,A^{-1}SA^{-1}).$$
%\end{corollary}

	\begin{corollary}\label{Co3}
		Under the settings in Theorem \ref{tm0}, for weights $c_{n,i}$ in \eqref{eq:optimalweight}, we have 
		$\sqrt{n}(\tilde{x}_n-x^*) \stackrel{D}{\to} \mathcal{N}(0,V).$
	\end{corollary}

	\noindent{\textbf{Connection with uniform averaging.}} It is interesting to compare the adaptive weighted average in \eqref{eq:optimal_recursion} with uniform average (ASGD). The recursion of ASGD $\bar{x}_{n}$ takes the following form 
	\begin{equation}\label{eq:asgd}
		\bar x_{n+1} = \frac{n}{n+1}\bar x_{n} + \frac{1}{n+1}x_{n+1}.
	\end{equation} 
	To build the connection between \eqref{eq:optimal_recursion} and \eqref{eq:asgd}, we rewrite \eqref{eq:optimal_recursion} as
	%\begin{equation}\label{eq:optimal_recursion2} 
	%	\tilde{x}_{n+1}=\frac{n}{n+1}\tilde x_{n} + \frac{1}{n+1}x_{n+1} 	+ \frac{(n+1)^{\alpha}-1}{n+1}(x_{n+1} - x_{n}). 
	%\end{equation}
\begin{equation}\label{eq:optimal_recursion2} 
		\tilde{x}_{n+1}=\frac{n}{n+1}\tilde x_{n} + \frac{1}{n+1}x_{n+1} 
		+ \frac{\eta_{n+1}^{-1}-1}{n+1}(x_{n+1} - x_{n}). 
	\end{equation}
	Thus we can consider the proposed adaptive weighted average as a modified ASGD with a correction term, where the correction term reduces the weight of earlier iterates and increases the weight of the latest iteration. This modification bears a certain similarity with other existing variance-reduced modifications on SGD where a correction term is applied on stochastic gradients, such as SGD with momentum \citep{Nesterov1983AMF,kingma2014adam, cutkosky2019momentum}.

	\section{Numerical Experiment} 
	\label{sec5}
	In this section, we check the asymptotic normality property of the general weighted averaged SGD and investigate the non-asymptotic performance of various averaging schemes in different settings.

\subsection{Asymptotic normality for different averaging schemes}\label{sec:simu1}
To verify the asymptotic normality and the limiting covariance matrix derived in Theorem \ref{tm2}, we consider three averaging schemes: polynomial-decay, suffix averaging (as described in Section \ref{sec3}), and the adaptive averaging scheme proposed in \eqref{eq:optimal_recursion}. We focus on two classes of loss functions: squared loss $f(x,\xi_i  = (a_{i}, b_{i}))=(a_i^T x-b_i)^2/2$ for the linear regression model, and logit loss: $f(x,\xi_i = (a_{i}, b_{i}))=\log (1+\exp (-b_ia^T_ix))$ for the logistic regression model. In both models,  we assume that the data $\xi_i=(a_i,b_i)$, are independent, where $a_i$ represents the explanatory variable generated from $\mathcal{N}(0,\I_d)$, and $b_i$ represents the response variable generated from two different distributions correspondingly. For linear regression, we assume $b_i \sim \mathcal{N}(a_i^Tx^*,1)$, while for logistic regression, $b_i \in \{1,-1\}$ is generated from a Bernoulli distribution, where $\mathbb{P}(b_i|a_i)=1/(1+\exp (-b_ia^T_ix^*))$. For asymptotic normality we are going to verify in Theorem \ref{tm2}. For squared loss, it is easy to derive that $A=S=\I_d$ and therefore $V = \I_d$. For logit loss, %the formulation of $A$ and $S$ is given by
	%$$A=S=\E \frac{a_ia_i^T}{[1+\exp (a_i^Tx^*)][1+\exp (-a_i^Tx^*)]}.$$
	since the explicit forms for $A$ and $S$ are difficult to obtain, we use Monte-Carlo simulation to numerically compute the sandwich form matrix $V$.

	In simulations, we set $d=5$ and the true parameter $x^*=(1,-2,0,0,4)^T$ for both models. We generate SGD sequences with $\eta_{i}=i^{-\alpha}, \alpha=0.505$, and apply different averaging schemes. For the polynomial-decay averaging, we choose $\gamma=3$ \cite{shamir2013stochastic}, and for the suffix averaging we choose $\kappa=0.5$ \cite{rakhlin2011making}. Then the prefactors $w$ for polynomial-decay and suffix averaging schemes are $16/7$ and $2$. 
	For polynomial decay and suffix averaged SGD, we plot the density of the standardized error with and without prefactor $w$, i.e., $w^{-1}V^{-1}\sqrt{n}(\tilde{x}_n-x^*)$ and $V^{-1}\sqrt{n}(\tilde{x}_n-x^*)$. For adaptive weighted SGD, the prefactor is $1$ according to Corollary \ref{Co3}, so we only plot the  density of the standardized error $V^{-1}\sqrt{n}(\tilde{x}_n-x^*)$. 
	
	\begin{figure}[t] 
		\centering
		\subfigure[Squared Loss]{\includegraphics[width=0.45\textwidth]{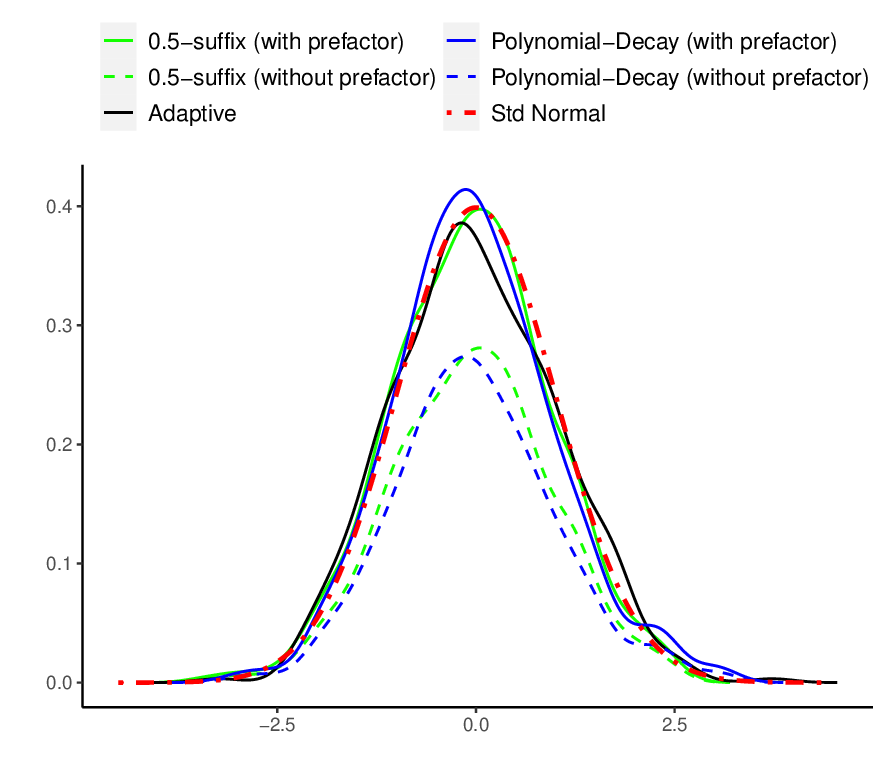}
		} 
		\subfigure[Logit Loss]{\includegraphics[width=0.45\textwidth]{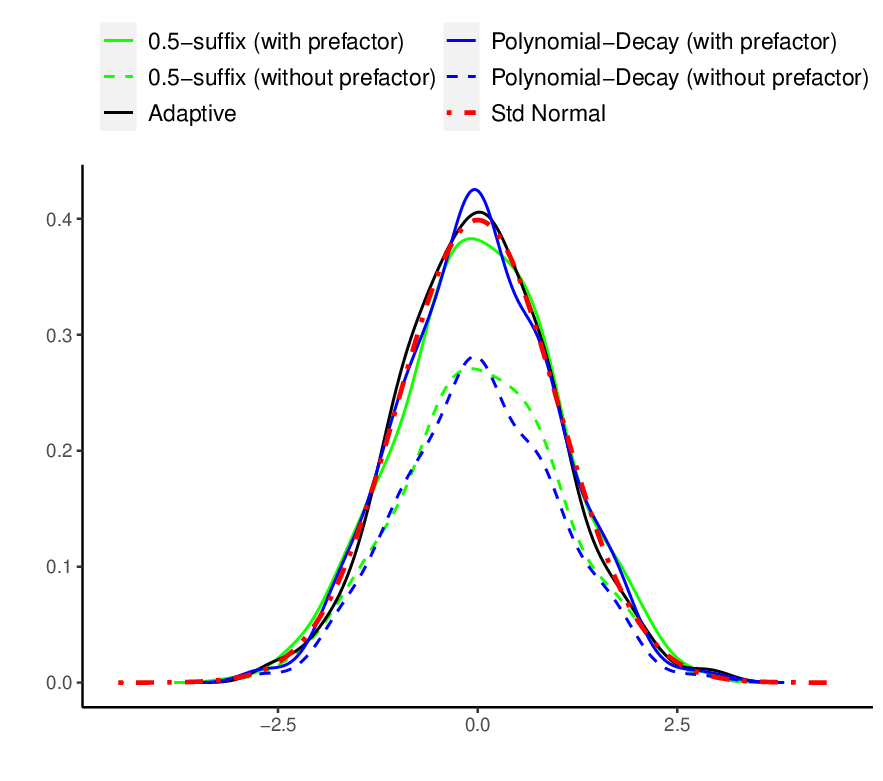}
		}\\
		\caption{Density plot for the standardized error with and without prefactor $w$. Here the number of iterations $n=100000$, and all the measurements are averaged over 450 independent runs. The red line denotes a standard normal distribution}
		\label{fig1} 
	\end{figure}

As shown in Figure \ref{fig1}, the standardized error (scaled with the prefactor $w$) exhibits an approximate standard normal distribution for all three averaging schemes. However, for the polynomial decay and suffix averaging schemes, the standardized error without the prefactor $w$ has a significantly different density compared to the standard normal distribution. These findings support the conclusion stated in Theorem \ref{tm2}, affirming the validity of the asymptotic normality of weighted SGD solutions and the correctness of the limiting covariance matrix.

\subsection{Non-asymptotic performance of different averaging schemes}
In this section, we will show that the adaptive weighted averaging scheme in \eqref{eq:optimal_recursion} has a good non-asymptotic performance in different cases.
	
	\subsubsection{Linear model: optimal MSE}
We first validate the optimality in terms of finite sample MSE in the linear regression model, which is a generalization of the mean estimation model. The linear regression model we examine employs identical simulation settings as described in Section \ref{sec:simu1}. Additionally, we incorporate a specific scenario of the mean estimation model with $a_{i}=1$ and $x^{*}=0$. We present coordinate-averaged MSE at certain steps in Figure \ref{fig:MSE} and include more detailed results in the appendix. We can see that the adaptive weighted averaging outperforms other averaging schemes. When $n$ is large enough ASGD has a similar performance with adaptive weighted SGD. It is also consistent with our conclusion that adaptive weighted SGD and ASGD have the same limiting covariance. For the linear regression model, MSE of the optimal weighted SGD is always smaller than that of ASGD. It indicates that adaptive weighted SGD has the potential to beat ASGD in not only the mean estimation model but also a more general optimization problem.  
	
	\begin{figure}[h]
		\centering
		\subfigure[Linear model]{\includegraphics[width=0.45\textwidth]{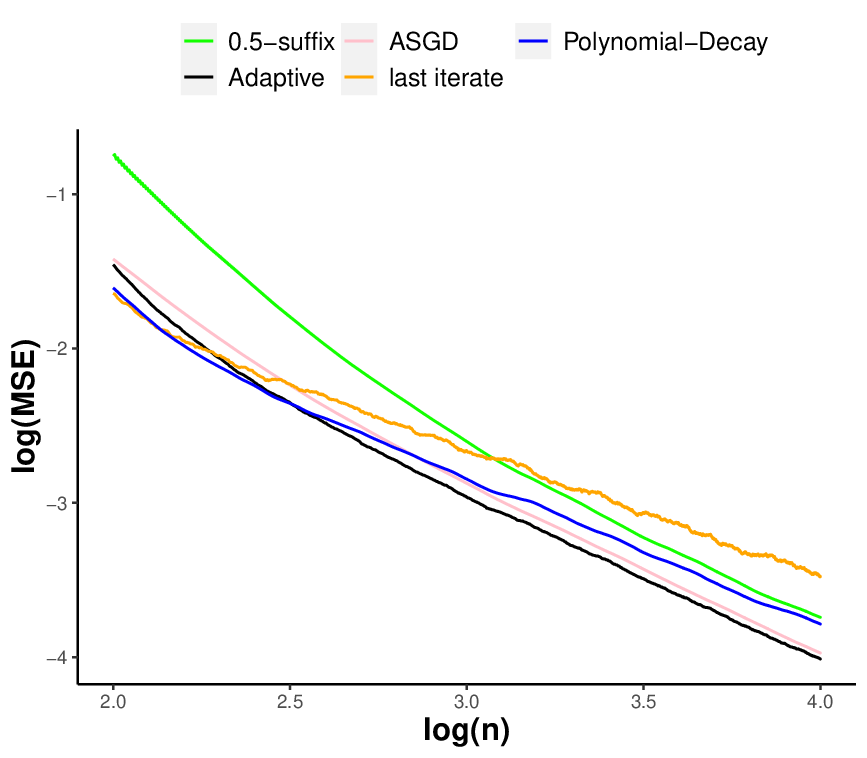} \includegraphics[width=0.45\textwidth]{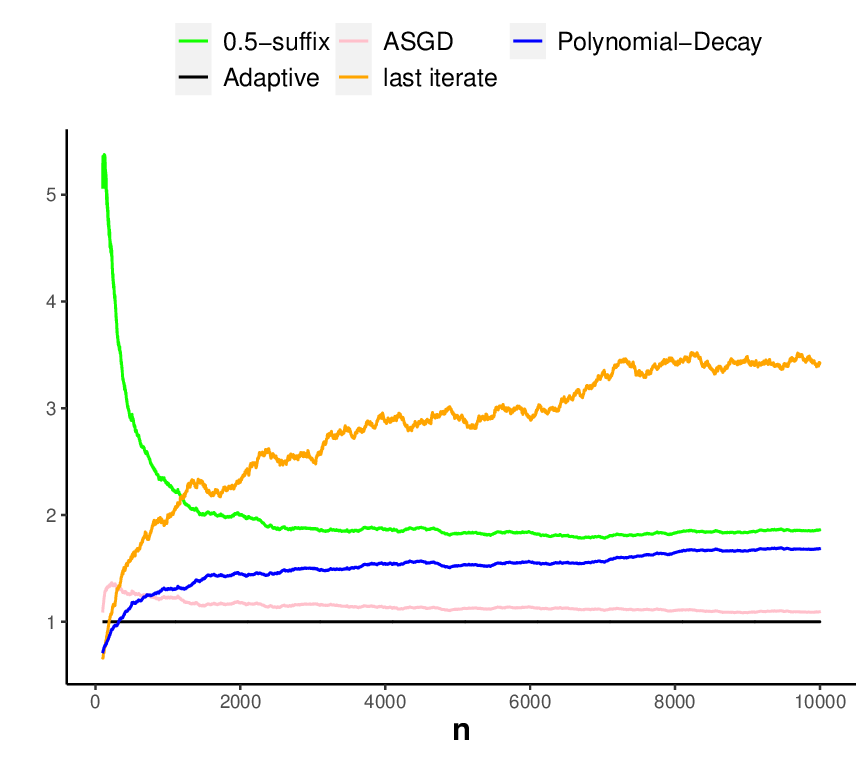}}\\
		\subfigure[Mean estimation model]{\includegraphics[width=0.45\textwidth]{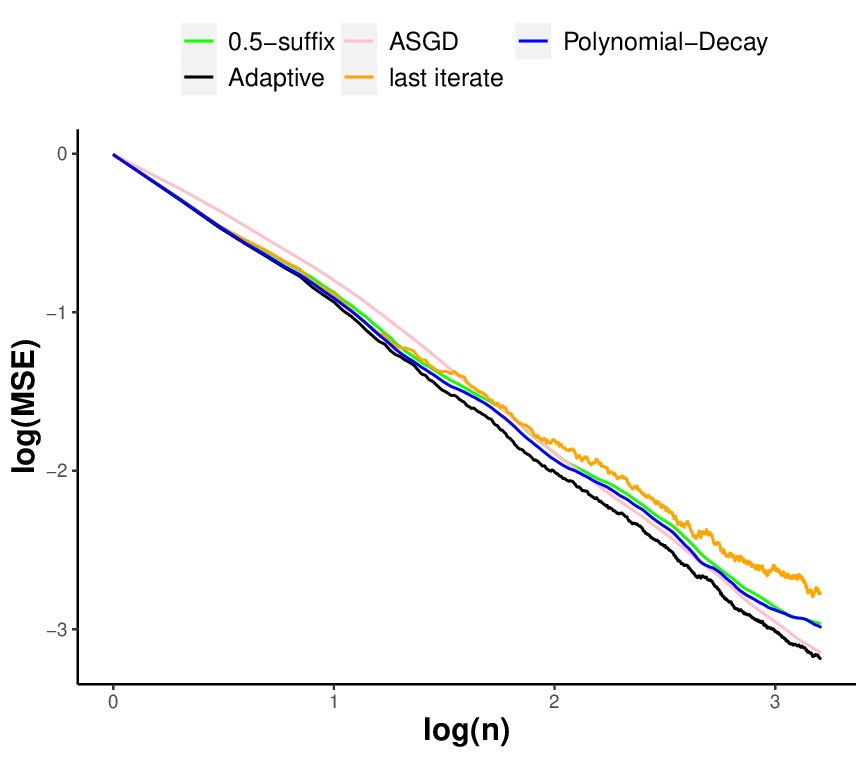} \includegraphics[width=0.45\textwidth]
			{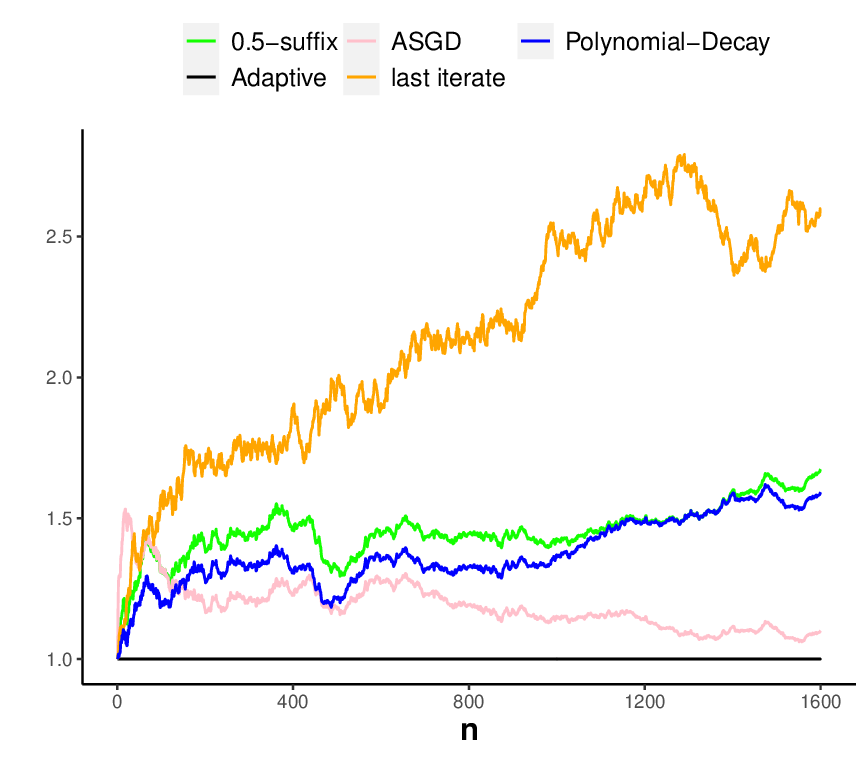}}\\
		\caption{Left: Log-log plots for MSE. %, $n$ is the number of iterations, the logarithm is with base $10$. 
			Right: the curves stand for the ratio of MSE between different averaging schemes and adaptive weighted averaging at each step. The baseline (black line) is for the adaptive weighted averaging. Here the step size $\eta_i=i^{-0.8}$, and all the measurements are averaged over 400 independent runs. }
		\label{fig:MSE} 
	\end{figure}
	
\subsubsection{Expectile regression: trend of optimal weight}
Expectiles have important applications in finance and risk management. It is closely associated with the commonly adopted measures: Value at Risk (VaR) and Conditional Expected Shortfall (CES). It was proposed by \cite{newey1987asymmetric}, and has been widely used and researched in statistical and economic literature \citep{efron1991regression, taylor2008estimating}. We discuss expectile regression as it has a non-smooth objective function,
	$$F(x) = \E_{y\sim\Pi}(|\rho - 1_{\{y < x\}}|(y - x)^2), 0<\rho<1.$$
For the expectile estimation problem, the optimal weights in Section \ref{sec4} do not have a closed form solution. So we use Monte-Carlo simulation to numerically compute the inverse of the covariance matrix and obtain the oracle weights based on \eqref{matrixform}. We then compare these oracle weights with all the weighting schemes that we studied previously. The weights for each SGD iterate are plotted in Figure \ref{fig:weights} with the total iteration $n=50$. The most remarkable feature of the oracle weight is its \emph{highest} weight assigned to the last iterate. The adaptive weight we proposed in Section \ref{newave} is able to capture this characteristic, while the other averaging schemes fail to recover it. This observation shows that our adaptive weighted averaging scheme is also promising for the non-smooth optimization problem as it aligns the closest with the trend of the oracle weights. 
	
\begin{figure}
\centering	\includegraphics[width=0.6\textwidth]{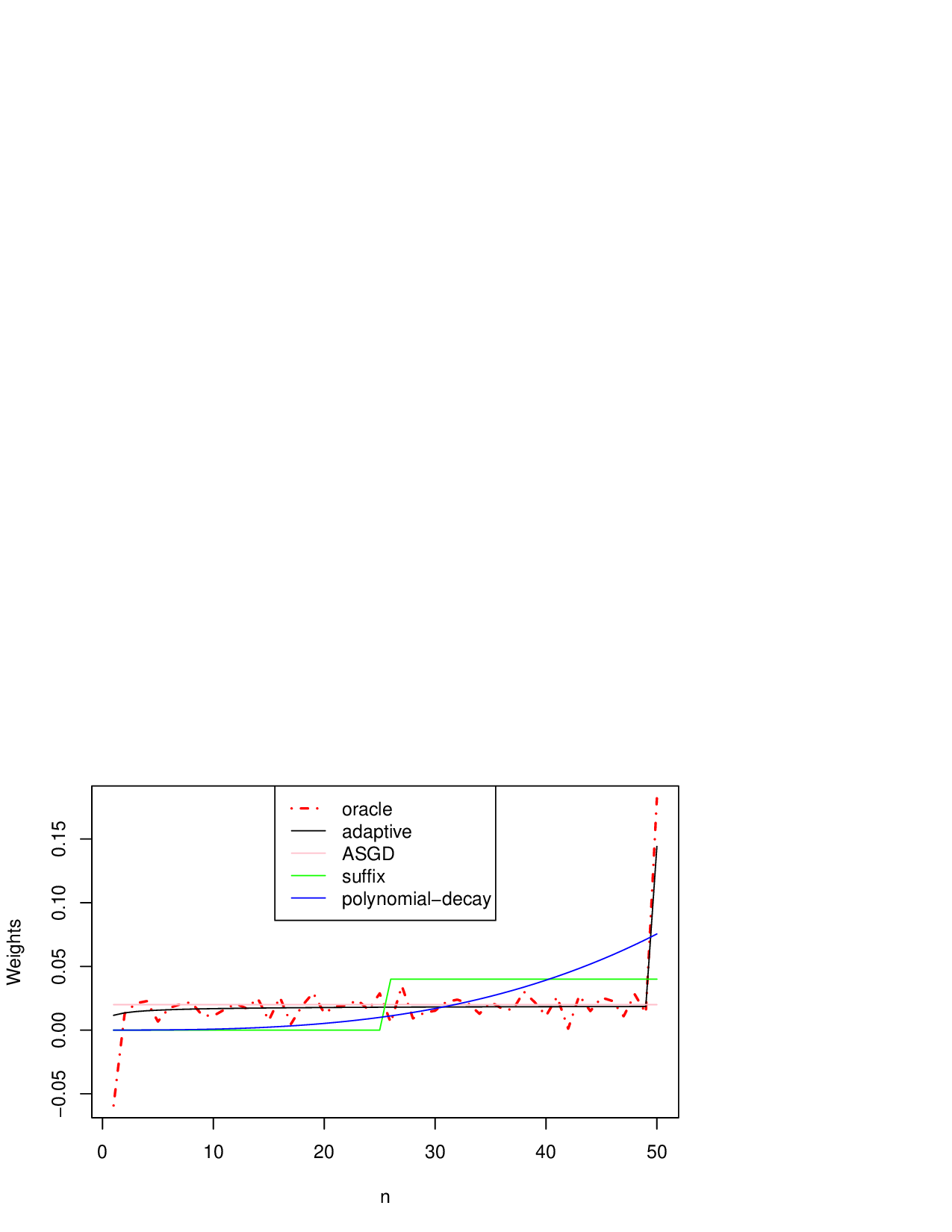}
\caption{Comparison of different weight schemes under expectile regression model with $\rho=0.8$ and the step size $\eta_i = i^{-0.505}$. The oracle weights are numerically computed via Monte-Carlo simulation with 50000 repetitions.} %$\eta_i=i^{-0.505}
\label{fig:weights} 
\end{figure}

\section{Discussion}\label{sec6}
This paper provides optimal conditions for CLT for weighted averaged SGD. We also study CLT for a broad range of weighted averaged SGD solutions, showing that the limiting covariance matrix is equal to the ``sandwich'' form, i.e., the limiting covariance matrix of ASGD, with a prefactor. This constitutes the first asymptotic distribution result for general weighted averaged SGD and holds significant importance for statistical inference.  While applicable to many existing averaging schemes, a challenging future work is to extend the result under weaker assumptions and conditions. We highlight that although certain existing weighted averaged SGD methods exhibit faster convergence than ASGD in the non-asymptotic view or in specific settings without strong assumptions, they may incur a larger variance, suggesting a trade-off. Additionally, we delve into the non-asymptotic mean squared error (MSE) analysis of weighted averaged SGD in the linear regression model and propose a novel averaging scheme—adaptive averaged SGD—which exhibits asymptotic normality with optimal limiting covariance and favorable finite sample MSE.

\bibliography{Bibliography-MM-MC}

\newpage
\appendix
\section{Proof of Main Results}
We first introduce some notation. For positive sequences $a_{n,i}$ and $b_{n,i}$ with $1\leq i \leq n$ (or simply $a_n$ and $b_n$), we say $a_{n,i}\lesssim b_{n,i}$ if there exists a constant $C$ such that $a_{n,i} \leq Cb_{n,i}$ for all $1\leq i \leq n$, and $a_{n,i}\gtrsim b_{n,i}$ if there exists a constant $C$ such that $a_{n,i} \ge Cb_{n,i}$ for all $1\leq i \leq n$, and $a_{n,i}\asymp b_{n,i}$ if $a_{n,i}\lesssim b_{n,i}$ and $a_{n,i}\gtrsim b_{n,i}$.
\subsection{Technical Lemmas}
Before the proof of main results, we present some technical lemmas and defer the proof to section \ref{AL}. The first one is a lemma in \citet{polyak1992acceleration}:
\begin{lemma}\label{polyak}
	Choose the step size as $\eta_i=\eta i^{-\alpha}$ with $\eta>0$ and $0.5<\alpha<1$. For a real symmetric positive definite matrix $A$, define a matrices sequence $Y_i^k$: $Y_i^i={\textbf{I}}$ and for any $k>i$:
	$$Y_i^{k}=\prod_{j=i+1}^k ({\textbf{I}}-\eta_jA).$$
	We also define $\bar{Y}_i^n$ and $\phi^n_i$ as follows,
	$$\bar{Y}_i^n=\eta_i\sum^{n}_{k=i}Y^k_i, \ n\ge i,$$
	$$\phi^n_i=A^{-1}-\bar{Y}^n_i.$$
	Then $\exists \ 0<K<\infty$ such that $\forall \ j$ and $i \ge j$
	$$|\phi^n_i|\leq K,$$
	$$\lim_{n \rightarrow \infty}\frac{1}{n} \sum^{n}_{i=1}|\phi^n_i|=0.$$
\end{lemma}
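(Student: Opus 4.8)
The plan is to exploit the fact that every matrix appearing in the lemma is a polynomial in $A$, so they all commute and can be simultaneously diagonalized. Writing $A = U\Lambda U^{T}$ with $\Lambda = \mathrm{diag}(\lambda_{(1)},\dots,\lambda_{(d)})$ and each $\lambda_{(\ell)}\in[\lambda_{\min}(A),\|A\|_2]$, the operator norm $\|\phi_i^n\|_2$ equals $\max_{\ell}|\phi_i^n(\lambda_{(\ell)})|$, where $\phi_i^n(\lambda) = \lambda^{-1} - \eta_i\sum_{k=i}^{n}\prod_{j=i+1}^{k}(1-\eta_j\lambda)$ is the scalar analogue. Since the spectrum is compact and bounded away from $0$, it suffices to bound $|\phi_i^n(\lambda)|$ uniformly over $\lambda$ in this set, and all constants below are understood to be uniform over the spectrum. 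The first key step is to derive a backward recursion in $i$. Splitting off the $k=i$ term and factoring $(I-\eta_{i+1}A)$ out of $\sum_{k=i+1}^{n}Y_i^k$ gives $\bar{Y}_i^n = \eta_i I + (\eta_i/\eta_{i+1})(I-\eta_{i+1}A)\bar{Y}_{i+1}^n$; substituting $\bar{Y}_\cdot^n = A^{-1}-\phi_\cdot^n$ and simplifying yields
\begin{equation*}
\phi_i^n = \Bigl(1-\frac{\eta_i}{\eta_{i+1}}\Bigr)A^{-1} + \frac{\eta_i}{\eta_{i+1}}(I-\eta_{i+1}A)\,\phi_{i+1}^n, \qquad \phi_n^n = A^{-1}-\eta_n I.
\end{equation*}

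In scalar form this reads $\phi_i^n = s_i + r_i\,\phi_{i+1}^n$ with $s_i = (1-\eta_i/\eta_{i+1})\lambda^{-1}$ and $r_i = (\eta_i/\eta_{i+1})(1-\eta_{i+1}\lambda)$. Because $\eta_i = \eta i^{-\alpha}$, one has $\eta_i/\eta_{i+1} = (1+1/i)^{\alpha} = 1 + O(1/i)$, so $|s_i|\lesssim 1/i$, and for all $i$ beyond a threshold $i_1$ chosen so that $\eta_{i+1}\lambda<1$ the estimate $(1+1/i)^\alpha \le 1 + C_\alpha/i$ together with $i^{-1}\ll i^{-\alpha}$ (this is where $\alpha<1$ enters) gives the contraction $0< r_i \le 1-\tfrac12\eta_{i+1}\lambda$. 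Unrolling the recursion from $i$ to $n$ then produces
\begin{equation*}
|\phi_i^n| \le \sum_{k=i}^{n-1}|s_k|\prod_{j=i}^{k-1}|r_j| + |\phi_n^n|\prod_{j=i}^{n-1}|r_j|, \qquad \prod_{j=i}^{k-1}|r_j| \le \exp\Bigl(-\tfrac{\lambda}{2}\sum_{j=i+1}^{k}\eta_j\Bigr).
\end{equation*}
The technical heart is the summation estimate $\sum_{k=i}^{\infty}\exp(-c\sum_{j=i+1}^{k}\eta_j)\lesssim i^{\alpha}$: splitting into $k\le 2i$, where $\sum_{j=i+1}^k\eta_j \gtrsim i^{-\alpha}(k-i)$ turns the sum into a geometric-type series with ratio $1-\Theta(i^{-\alpha})$ and hence total $\lesssim i^{\alpha}$, and $k>2i$, where $\sum_{j=i+1}^k\eta_j \gtrsim k^{1-\alpha}$ makes the tail a convergent constant. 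Combining, $|\phi_i^n|\lesssim i^{-1}\cdot i^{\alpha} + \exp(-\tfrac{\lambda}{2}\sum_{j=i+1}^n\eta_j) \lesssim i^{\alpha-1} + 1$, which is bounded; the finitely many indices $i<i_1$ are absorbed by running the recursion backward through them (each step multiplies by a bounded factor and adds a bounded term). This proves $\|\phi_i^n\|_2\le K$.

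For the Cesàro statement I carry the sharper per-index bound $|\phi_i^n|\lesssim i^{\alpha-1} + \exp(-\tfrac{\lambda}{2}\sum_{j=i+1}^n\eta_j)$ into the average. The first piece gives $\frac1n\sum_{i=1}^n i^{\alpha-1}\asymp \frac1n\cdot n^{\alpha} = n^{\alpha-1}\to0$. For the second piece I apply the summation estimate in the reversed direction, $\sum_{i=1}^n\exp(-c\sum_{j=i+1}^n\eta_j)\lesssim n^{\alpha}$ (the same near/far split, now in the index $i$ close to $n$), so that $\frac1n\sum_{i=1}^n\exp(-\tfrac{\lambda}{2}\sum_{j=i+1}^n\eta_j)\lesssim n^{\alpha-1}\to0$. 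The $O(i_1/n)$ contribution of the small indices is negligible, and again $\alpha<1$ is used in both terms; hence $\frac1n\sum_{i=1}^n\|\phi_i^n\|_2\to0$.

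The main obstacle I anticipate is the summation estimate $\sum_k\exp(-c\sum_{j=i+1}^k\eta_j)\lesssim i^\alpha$ and its reversed counterpart: pinning down the exponent $\alpha$ exactly requires comparing $\sum_{j=i+1}^k\eta_j$ with the integral $\frac{\eta}{1-\alpha}(k^{1-\alpha}-i^{1-\alpha})$ and cleanly separating the regime where the summand is $\Theta(1)$ (an effective window of length $\asymp i^\alpha$) from the geometrically small tail. A secondary nuisance is the cluster of early indices where $\eta_j\lambda$ may exceed $1$ so that $|r_j|>1$; this is harmless only because there are finitely many such $j$ uniformly over the spectrum, but it must be tracked so that the constant $K$ and the rate do not depend on the particular eigenvalue $\lambda$.
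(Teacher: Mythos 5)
Your proposal is correct, but it takes a genuinely different route from the paper, because the paper does not actually prove this lemma: it states that the result is a ``simple reduction'' of Lemma 1 in Polyak and Juditsky (1992) and inherits it by citation. Your argument is a self-contained replacement, and its key steps check out: the backward recursion $\phi_i^n = \bigl(1-\eta_i/\eta_{i+1}\bigr)A^{-1} + (\eta_i/\eta_{i+1})(I-\eta_{i+1}A)\phi_{i+1}^n$ with $\phi_n^n = A^{-1}-\eta_n I$ is algebraically right; the reduction to scalar estimates via simultaneous diagonalization is legitimate precisely because $A$ is real symmetric positive definite, so all of $Y_i^k$, $\bar Y_i^n$, $\phi_i^n$ are polynomials in $A$; the contraction $0<r_i\le 1-\tfrac12\eta_{i+1}\lambda$ past a spectrum-uniform threshold (using $\lambda\ge\lambda_{\min}(A)$ for the contraction and $\lambda\le\|A\|_2$ for positivity) is where $\alpha<1$ enters, as you say; and both window estimates $\sum_{k\ge i}\exp(-c\sum_{j=i+1}^k\eta_j)\lesssim i^{\alpha}$ and $\sum_{i\le n}\exp(-c\sum_{j=i+1}^n\eta_j)\lesssim n^{\alpha}$ are sound, as is the absorption of the finitely many early indices. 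Comparing what each approach buys: the paper's citation is shorter and rests on a result proved for general (non-symmetric, merely stable) matrices, which is the setting Polyak--Juditsky needed; your proof is elementary and self-contained but leans essentially on symmetry of $A$ (available here since $A=\nabla^2F(x^*)$ with $F$ strongly convex), and in exchange it is quantitatively sharper --- it delivers the explicit per-index bound $\|\phi_i^n\|_2 \lesssim i^{\alpha-1} + \exp\bigl(-\tfrac{\lambda_{\min}(A)}{2}\sum_{j=i+1}^n\eta_j\bigr)$ and hence the rate $n^{\alpha-1}$ for the Ces\`aro average, rather than bare convergence to zero.
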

Lemma~\ref{polyak}~ is a simple reduction of Lemma 1 in \citet{polyak1992acceleration}. The term $Y_i^k$ appears frequently in the explicit form of weighted SGD solutions. The next lemma further investigates the order of some quantities related to it. We defer the proof to section \ref{AL}.

\begin{lemma} \label{lemma Yij}
	Let $\lambda > 0$ be a constant. Define a real sequence $\{Y_{(\lambda)}{}_i^k\}$ with
	\begin{equation}
		Y_{(\lambda)}{}_i^k = \left\{\begin{array}{cc}
			1 & \text{if $i = k$,} \\
			\displaystyle\prod_{j = i+1}^k (1 - \lambda \eta_j) & \text{if $i < k$.}
		\end{array}\right.
	\end{equation}
	Then for all $0 \le i \le k$, we have
	\begin{enumerate}
		\item \label{lemma Yij-1} \begin{equation}
			|Y_{(\lambda)}{}_i^k| \asymp \exp\left\{\frac{\lambda\eta}{1-\alpha} \left(i^{1-\alpha} - k^{1-\alpha} \right)  \right\}
			\le \exp\left\{ - \lambda\eta i^{-\alpha}(k-i) \right\}.
		\end{equation}
		\item \label{lemma Yij-2} For $\beta, \gamma > 0$,
		\begin{equation}
			\sum_{i=1}^k \exp\left(\beta i^{1-\alpha}\right) i^{-\gamma\alpha} \asymp \exp\left(\beta k^{1-\alpha}\right) k^{-(\gamma -1)\alpha},
		\end{equation}
		which implies
		\begin{equation}
			\sum_{i=1}^k |Y_{(\lambda)}{}_i^k|^\beta |i^{-\alpha}|^\gamma
			\asymp k^{-(\gamma - 1)\alpha}.
		\end{equation}
        \item For any $\beta >0$,
        $$\sum^k_{i=j}\exp (-\beta i^{1-\alpha})\lesssim \exp(-\beta j^{1-\alpha})j^{\alpha}.$$
	\end{enumerate}
\end{lemma}
Here and in the sequel, we treat $x_0$ as fixed. We also invoke Lemma 3.1 in \cite{zhu2021online}, which have the following conclusion: for some constant $C$,
$$\| x_n-x^*\| \leq Cn^{-\alpha/2}, \ \ \\| x_n-x^*\|^2 \leq Cn^{-\alpha}.$$

\subsection{Proof of Theorem \ref{tm0}}\label{ptm0}
\begin{proof}
%Without loss of generality let $x^* = 0$. Write
%\begin{eqnarray}
%x_i&=&x_{i-1}-\eta_i Ax_{i-1}-\eta_i  R_{i}-\eta_i D_{i}-\eta_i \nabla f(0, \xi_i),\cr 
%\tilde y_n &=& \sum_{i=1}^n w_{n,i} y_i, \mbox{ where } y_i=y_{i-1}-\eta_i Ay_{i-1} -\eta_i \nabla f(0, \xi_i), \,\, y_0 = 0
%\end{eqnarray}
%and
%\begin{equation*}
   % R_{i}=\nabla F(x_{i-1})-A x_{i-1}, \,\, D_{i}= \nabla f(x_{i-1}, \xi_i)-\nabla F(x_{i-1})- \nabla f(0, \xi_i).
%\end{equation*}
%Then
%\begin{align*}	\tilde x_n - \tilde y_n = \eta_1^{-1}({\textbf{I}}_d-\eta_1A)\Phi_{n,1} x_0-\sum_{i=1}^n \Phi_{n,i} R_i - \sum_{i=1}^n \Phi_{n,i} D_i.
%\end{align*}

We decompose the SGD iterates as follows,
\begin{equation}\label{sgddecomp}
    x_i-x^*=x_{i-1}-x^*-\eta_i A(x_{i-1}-x^*)-\eta_i  R_{i}-\eta_i D_{i}-\eta_i \nabla f(x^*, \xi_i),
\end{equation}
where 
$$R_{i}=\nabla F(x_{i-1})-A(x_{i-1}-x^*)=\mathcal{O}(|x_{i-1}-x^*|^2),$$
$$D_{i}= \nabla f(x_{i-1}, \xi_i)-\nabla F(x_{i-1})- \nabla f(x^*, \xi_i)$$
due to Taylor's expansion and stochastic Lipshitz property. Notice that $\nabla F(x^*)=0$ here. Denote $\delta_{i} = x_{i} - x^{*}$ the error sequence. The weighted averaged error sequence $\tilde{\delta}_n=\sum_{i=1}^n w_{n,i}\delta_i$ takes the following form, 
\begin{align}\label{wsgddecomp}
\tilde{\delta}_n &= \sum_{i=1}^n w_{n,i}\prod_{j=1}^i ({\bf I}_d-\eta_jA)\delta_0-\sum_{i=1}^n \Phi_{n,i} (R_i+D_i+\nabla f(x^*, \xi_i))\\
&= \eta_1^{-1}({\bf I}_d-\eta_1A)\Phi_{n,1}\delta_0-\sum_{i=1}^n \Phi_{n,i} (R_i+D_i+\nabla f(x^*, \xi_i)).
\end{align}

It is clear that $|   V_n^{-1/2}\eta_1^{-1}({\textbf{I}}_d-\eta_1A)\Phi_{n,1}| \rightarrow 0$ by Condition (\ref{cond1}). Notice that $\{D_i\}$ is a martingale difference sequence. For any $\nu>0$, there exists an integer $K$ such that $2(K+1)^{-\alpha}<\nu$. For any $n >K$, we decompose the martingle difference error term by $K$,

\begin{align*}	 
 \| V_n^{-1/2}\sum_{i=1}^n \Phi_{n,i} D_i\|^2& =  \| V_n^{-1/2}\sum_{i=1}^K \Phi_{n,i} D_i\|^2+ \| V_n^{-1/2}\sum_{i=K+1}^n \Phi_{n,i} D_i\|^2
\\&\leq  \sum_{i=1}^K  |  V_n^{-1/2}\Phi_{n,i} |^2   \| D_i\|^2 +\sum_{i=K+1}^n | V_n^{-1/2}\Phi_{n,i}|^2  \| D_i\|^2
		\\& \lesssim  \sum_{i=1}^K | V_n^{-1/2}\Phi_{n,i} |^2 i^{-\alpha} +\sum_{i=K+1}^n | V_n^{-1/2}\Phi_{n,i}|^2 (K+1)^{-\alpha}\\ 
        & \lesssim  \sum_{i=1}^K | V_n^{-1/2}\Phi_{n,i} |^2 i^{-\alpha}+(K+1)^{-\alpha}\sum_{i=K+1}^n | V_n^{-1/2}\Phi_{n,i}S^{1/2}|^2 .
\end{align*}	 
Since $K$ is fixed, Condition (\ref{cond1}) implies that there exists $N>0$ such that for all $n\ge N$, the first term is smaller than $\nu$. For the second term, notice that
\begin{align*}	
\sum_{i=K+1}^n | V_n^{-1/2}\Phi_{n,i}S^{1/2}|^2 &=\sum_{i=K+1}^n | V_n^{-1/2}\Phi_{n,i}S\Phi_{n,i}V_n^{-1/2}| \\ 
&=\sum_{i=K+1}^n | V_n^{-1}\Phi_{n,i}S\Phi_{n,i}| \leq \textbf{trace}({\textbf{I}}_d)=d
\end{align*}	
because $\sum_{i=1}^n  V_n^{-1}\Phi_{n,i}S\Phi_{n,i}={\textbf{I}}_d $ and the largest eigenvalue of a positive definite matrix is bounded by its trace. Finally, we have
$$ \| V_n^{-1/2}\sum_{i=1}^n \Phi_{n,i} D_i\|^2 \lesssim \nu + d\nu.$$
As a result, the martingale difference error term converges to $0$.

The other error term also vanishes since 

\begin{align*}	 
 \| V_n^{-1/2} \sum_{i=1}^n \Phi_{n,i} R_i\| & \leq  \sum_{i=1}^n  | V_n^{-1/2} \Phi_{n,i} | \| R_i\|
		\\& \lesssim   \sum_{i=1}^n |  V_n^{-1/2}\Phi_{n,i} | i^{-\alpha}\\
        & \leq \sum_{i=1}^K |  V_n^{-1/2}\Phi_{n,i} | i^{-\alpha}+\sum_{i=K+1}^n |  V_n^{-1/2}\Phi_{n,i} | i^{-\alpha}.
\end{align*}

By the Cauchy-Schwarz inequality,
$$(\sum_{i=K+1}^n |  V_n^{-1/2}\Phi_{n,i} | i^{-\alpha})^2 \leq  (\sum_{i=K+1}^n |  V_n^{-1/2}\Phi_{n,i} |^2) \sum_{i=K+1}^ni^{-2\alpha}\lesssim d(K+1)^{1-2\alpha}.$$  
Then we can use the identical argument as before to show that $\mathbb{E} | V_n^{-1/2} \sum_{i=1}^n \Phi_{n,i} R_i| \rightarrow0$.

The main term that does not vanish is $V_n^{-1/2}\sum_{i=1}^n \Phi_{n,i} \nabla f(x^*, \xi_i)$. To show it converges to the standard normal distribution, it suffices to prove that the sequence satisfies the Lindeberg condition, i.e., for any $\nu >0$,
\begin{equation}
    \label{Lind}
 \lim_{n\rightarrow \infty} \sum_{i=1}^n\mathbb{E} \{|V_n^{-1/2}\Phi_{n,i} \nabla f(x^*, \xi_i) |^2 \mathbbm{1}_{|V_n^{-1/2}\Phi_{n,i} \nabla f(x^*, \xi_i) |^2 \ge \nu}  \} =0.   
\end{equation}
By Assumption \ref{as2}, $\| \nabla f(x^*, \xi_i) \|^2 < \infty$. Let $M_n=\max_{1\leq i \leq n} |V_n^{-1/2}\Phi_{n,i} |^2$, we have
\begin{align*}	 
& \ \ \ \ \mathbb{E} \{|V_n^{-1/2}\Phi_{n,i} \nabla f(x^*, \xi_i) |^2 \mathbbm{1}_{|V_n^{-1/2}\Phi_{n,i}|^2 |\nabla f(x^*, \xi_i) |^2 \ge \nu}  \} \\
& \leq  \mathbb{E} \{|V_n^{-1/2}\Phi_{n,i} \nabla f(x^*, \xi_i) |^2 \mathbbm{1}_{M_n |\nabla f(x^*, \xi_i) |^2 \ge \nu}  \} 
		\\& \leq | V_n^{-1/2} \Phi_{n,i}|^2 | \E \{\nabla f(x^*, \xi_i) |^2 \mathbbm{1}_{M_n |\nabla f(x^*, \xi_i) |^2 \ge \nu} \},
\end{align*}	 
and $\E \{\nabla f(x^*, \xi_i) |^2 \mathbbm{1}_{M_n |\nabla f(x^*, \xi_i) |^2 \ge \nu} \}\rightarrow 0$. The reason is that by Condition (\ref{cond1}), the indicators in (\ref{Lind}) go to $0$, and the Dominated convergence theorem. Then it yields
\begin{align*}	 
& \ \ \ \  \lim_{n\rightarrow \infty} \sum_{i=1}^n\mathbb{E} \{|V_n^{-1/2}\Phi_{n,i} \nabla f(x^*, \xi_i) |^2 \mathbbm{1}_{|V_n^{-1/2}\Phi_{n,i} \nabla f(x^*, \xi_i) |^2 \ge \nu}  \} \\
& \leq \sum_{i=1}^n| V_n^{-1/2} \Phi_{n,i}|^2  \lim_{n\rightarrow \infty} \E \{\nabla f(x^*, \xi_i) |^2 \mathbbm{1}_{M_n |\nabla f(x^*, \xi_i) |^2 \ge \nu} \}
		\\&\leq d\lim_{n\rightarrow \infty} \E \{ |\nabla f(x^*, \xi_i) |^2 \mathbbm{1}_{M_n |\nabla f(x^*, \xi_i) |^2 \ge \nu} \}=0.
\end{align*}		
So (\ref{Lind}) holds and the conclusion is proved.

To show that (\ref{cond2}) implies Condition (\ref{cond1}), we construct a counterexample such that CLT does not hold when Condition (\ref{cond1}) fails. Consider 
$$f(x,\xi)= \frac{1}{2}| \xi-A^{\frac{1}{2}}x|^2=\frac{1}{2}x^TAx-\xi^TA^{\frac{1}{2}}x+\frac{1}{2}\xi^T\xi.$$

where $\xi \in \mathbb{R}^d$ is a mean zero random variable with $\mathbb{E}(\xi\xi^T)=A^{-\frac{1}{2}}SA^{-\frac{1}{2}}$. We have $F(x)=(x^TAx+\mathbb{E}\xi^T\xi)/2$ and $x^*=0$. The SGD iterates take the form
\begin{equation}\label{simplesgd}
    x_i = x_{i-1}-(Ax_{i-1}-A^{\frac{1}{2}}\xi_i)=({\textbf{I}}_d-A)x_{i-1}+A^{\frac{1}{2}}\xi_i.
\end{equation}
Let $z_i=A^{\frac{1}{2}}\xi_i$. The standardized weighted averaged SGD is formulated as
$$\tilde{x}_n= \eta_1^{-1}V_n^{-1/2}({\textbf{I}}_d-\eta_1A)\Phi_{n,1}x_0+V_n^{-\frac{1}{2}}\sum_{i=1}^n\Phi_{n,i}z_i,$$
where $x_0$ is fixed. So we only need to consider the distribution of $V_n^{-\frac{1}{2}}\sum_{i=1}^n\Phi_{n,i}z_i$. When Condition (\ref{cond1}) fails, there exists a subsequence $n_k$ such that $$\lim_{k \rightarrow \infty}\max_{1\leq i \leq n_k} |V_{n_k}^{-\frac{1}{2}} \Phi_{n_k,i} | = c>0. $$ 

Without loss of generality, we can assume the original sequence has this limit and the maximum is taken on $i=1$. Since the matrix sequence $V_{n}^{-1/2} \Phi_{n,1}$ is uniformly bounded in the operator norm, it also has a convergence subsequence. So we can further assume that $V_{n}^{-1/2} \Phi_{n,1}$ converges to some matrix $\Gamma$. Since matrix norms are continuous, we have $|\Gamma|=c$. We can choose the unit eigenvector of the largest eigenvalue of $\Gamma$ and denote it as $v$. Since the CLT in condition \ref{cond2} holds, we have 
$$v^TV_{n}^{-\frac{1}{2}}\Phi_{n,1}z_1+\Theta_n \rightarrow \mathcal{N}(0,1), \mbox{ where }
\Theta_n = v^T\sum_{i=2}^n V_{n}^{-\frac{1}{2}} \Phi_{n,i}z_i.
$$
Moreover, $v^TV_{n}^{-\frac{1}{2}}\Phi_{n,1}z_1 \perp\!\!\!\!\perp \Theta_n$ and $v^TV_{n}^{-\frac{1}{2}}\Phi_{n,1}z_1 \rightarrow cv^Tz_1$. Denote the characteristic function of $v^TV_{n}^{-\frac{1}{2}}\Phi_{n,1}z_1$ and $\Theta_n$ as $\phi_n(t)$ and $\psi_n(t)$, we have 
$$ \phi_n(t)\psi_n(t)=e^{\frac{-t^2}{2}},$$
and hence
$$ \psi_n(t) = \frac{e^{\frac{-t^2}{2}}}{\phi_n(t)}\rightarrow \frac{e^{\frac{-t^2}{2}}}{\mathbb{E} e^{itcv^Tz_1}}.$$
The limit of characteristic functions is still a character function, which implies that $\Theta_n$ has a limiting distribution $\Theta_{\infty}$. Finally, we have that the independent sum $cv^Tz_1 + \Theta_{\infty} $ follows a standard Gaussian distribution. By the Lévy-Cramér theorem (cf \cite{chow1997probability}), $cv^Tz_1$ must also be a Gaussian variable as well. This contradicts condition \ref{cond2} which requires CLT holds for non-Gaussian $z_1$. So we have proved the necessity.
\end{proof}

\subsection{Proof of Corollary \ref{CLTlast}}
\begin{proof}
Define the polynomial $p_{n,i}(x)= \eta_i \prod_{j=i+1}^n (1-\eta_jx)$. To prove 
$$\max_{1\leq i \leq n} |V_n^{-\frac{1}{2}} \Phi_{n,i} | \rightarrow 0,$$ where $V_n=\sum_{i=1}^n \Phi_{n,i}S\Phi_{n,i}$ and $\Phi_{n,i} = \eta_i \prod_{j=i+1}^n ({\textbf{I}}_d-\eta_jA)=p_{n,i}(A)$, it suffices to show that 
$$\max_{1\leq i \leq n} \frac{p_{n,i}(\lambda_{\text{min}}(A))^2}{\sum_{i=1}^np_{n,i}(\lambda_{\text{max}}(A))^2} \rightarrow 0,$$ 
because 
$$| \Phi_{n,i}| = \lambda_{\text{max}}(\Phi_{n,i})=p_{n,i}(\lambda_{\text{min}}(A)),$$
and
\[
\lambda_{\text{min}}(V_n) \geq \lambda_{\text{min}}(S) \lambda_{\text{min}}\left(\sum_{i=1}^n \Phi_{n,i}^2\right)\ge \lambda_{\text{min}}(S) \sum_{i=1}^n p_{n,i} (\lambda_{\text{max}}(A))^2,
\]
hence
\[
|V_n^{-\frac{1}{2}}| \leq \frac{1}{\sqrt{\lambda_{\text{min}}(S) \sum_{i=1}^n p_{n,i} (\lambda_{\text{max}}(A))^2}}.
\]

By Lemma \ref{lemma Yij}, we have
\begin{align*} \frac{p_{n,i}(\lambda_{\text{min}}(A))^2}{\sum_{i=1}^np_{n,i}(\lambda_{\text{max}}(A))^2} &\asymp  i^{-2\alpha}n^{\alpha}\exp\left\{\frac{2\lambda_{\text{min}}(A)\eta}{1-\alpha} \left(i^{1-\alpha} - n^{1-\alpha} \right)  \right\}\\
& \leq i^{-2\alpha}n^{\alpha}\exp\left\{ - \lambda_{\text{min}}(A)\eta n^{-\alpha}(n-i) \right\}.
\end{align*}
It is clear that $\max_{1\leq i \leq n}i^{-2\alpha}n^{\alpha}\exp\left\{ - \lambda_{\text{min}}(A)\eta n^{-\alpha}(n-i) \right\}\rightarrow 0$. So the CLT for the last iterate of SGD holds. Then it suffices to show that  $\eta_n^{-1}V_n \rightarrow V$. Notice that 
\begin{align*} 
V_{n+1}&=\eta_{n+1}^2S+\sum_{k=1}^n \Phi_{n,i}({\textbf{I}}_d-\eta_{n+1}A)S\Phi_{n,i}({\textbf{I}}_d-\eta_{n+1}A)\\
&=({\textbf{I}}_d-\eta_{n+1}A)V_n({\textbf{I}}_d-\eta_{n+1}A)+\eta_{n+1}^2S.
\end{align*}
Let $\Gamma_n=\eta_{n}^{-1}V_n$. Plug $\Gamma_n$ and $AV+VA=S$ into the formula above, we have
\begin{align*}&\ \ \ \ \ \Gamma_{n+1}-V\\
&=({\textbf{I}}_d-\eta_{n+1}A)(\Gamma_{n}-V)({\textbf{I}}_d-\eta_{n+1}A)+\eta_{n+1}^2AVA+(\eta_{n+1}^{-1}-\eta_{n}^{-1})({\textbf{I}}_d-\eta_{n+1}A)V_n({\textbf{I}}_d-\eta_{n+1}A).
\end{align*}
For $n$ large enough, the matrix norm of the last term of the left-hand side can be bounded as
\begin{align*}
| (\eta_{n+1}^{-1}-\eta_{n}^{-1})({\textbf{I}}_d-\eta_{n+1}A)V_n({\textbf{I}}_d-\eta_{n+1}A)| &\leq \eta[(n+1)^{\alpha}-n^{\alpha}]|V_n| \\
&\lesssim \eta \alpha n^{\alpha-1}\sum_{i=1}^n| \Phi_{n,i}|^2 | S| \\
&\lesssim  n^{\alpha-1}\sum_{i=1}^n \eta_i^2 \prod_{j=i+1}^n |({\textbf{I}}_d-\eta_jA)|^2 \\
&\lesssim  n^{\alpha-1}\sum_{i=1}^n \eta_i^2 \prod_{j=i+1}^n (1-\eta_j\lambda_{\text{min}}(A))^2 \\
&\asymp n^{\alpha-1}n^{-\alpha}=n^{-1},
\end{align*}
where the last step is from Lemma \ref{lemma Yij}. Notice that $\eta_{n+1}^2 \lesssim n^{-1}$. Hence there exists a universal constant $C$ such that 
$$|\Gamma_{n+1}-V | \leq (1-\eta_{n+1}\lambda_{\text{min}}(A))^2|\Gamma_n-V|+\frac{C}{n}.$$
Recursively updating the inequality we get
\begin{align*} |\Gamma_{n}-V | &\leq \sum_{i=1}^n \frac{C}{i}\prod_{j=i+1}^n (1-\frac{\eta \lambda_{\text{min}}(A)}{j^{\alpha}})^2 + \prod_{j=1}^n (1-\frac{\eta \lambda_{\text{min}}(A)}{j^{\alpha}})^2 |\Gamma_0-V | \\
& \lesssim \sum_{i=1}^n (i^{-\alpha})^{1/\alpha}\prod_{j=i+1}^n (1-\frac{\eta \lambda_{\text{min}}(A)}{j^{\alpha}})^2 + \prod_{j=1}^n (1-\frac{\eta \lambda_{\text{min}}(A)}{j^{\alpha}})^2 \\
& \asymp n^{\alpha-1}+\prod_{j=1}^n (1-\frac{\eta \lambda_{\text{min}}(A)}{j^{\alpha}})^2 \rightarrow0.
\end{align*}
The last step is also from Lemma \ref{lemma Yij}. So we have proved that $\eta_n^{-1}V_n \rightarrow V $. As a result,

$$n^{\frac{\alpha}{2}}(x_n-x^*) \stackrel{D}{\to} \mathcal{N}(0,\eta V),$$

\end{proof}

\subsection{A CLT with general weights and general form of learning rates}
\label{sec:general}
In this section we shall present a CLT with general weights and general form of learning rates. Proposition \ref{xn2} below provides an explicit bound for the MSE of $x_n$.

\begin{proposition}
	\label{xn2}
	Let Assumptions \ref{as1} and \ref{as2} be satisfied. Define
	\begin{equation}\label{vn}
		v_n = (1-2 \eta_n \mu + 2 \eta_n^2 L_2^2) v_{n-1} + 2 \eta_n^2 \sigma^2(x^*), \, n \ge 1, \,
		v_0 = |x_0 - x^*|^2.
	\end{equation}
	Then $\E |x_n - x^*|^2 \le v_n$. In particular, if $\eta_n \asymp n^{-\beta}$, $\beta \in (0, 1)$, then $v_n \lesssim n^{-\beta}$.
\end{proposition}

\begin{proof}
Recall the decomposition of SGD (\ref{sgddecomp}) and take the norm on both sides,

The error sequence can be written as,
		\begin{align*}
  \delta_i  &= \delta_{i-1} - \eta_i \nabla f(x_{i-1}, \xi_i) \cr
	  &= \delta_{i - 1} - \eta_{i} \nabla F(x_{i - 1}) + \eta_{i} \epsilon_i.
		\end{align*}
By Rio's inequality, since $\E [\epsilon_i | \delta_{i - 1} - \eta_{i} \nabla F(x_{i - 1}) ] =0$, we have
$$ \|\delta_{i}\|^{2} \leq \|\delta_{i - 1} - \eta_{i} \nabla F (x_{i - 1})\|^{2} +  \eta_{i}^{2} \|\epsilon_i\|^{2}.$$
		Further by Assumption \ref{as1} and \ref{as2}, we have 
		\begin{align*}
			\|\delta_{i}\|^{2} &\leq \|\delta_{i - 1} - \eta_{i} \nabla F (x_{i - 1})\|^{2} + \eta_{i}^{2} \|\epsilon_i\|^{2}\cr
			&\leq (1 - 2\eta_{i} \mu) \|\delta_{i - 1}\|^{2} + 2  \eta_{i}^{2} (\|\epsilon_i^*\|^{2} + L_{2}^{2} \|\delta_{i - 1}\|^{2})\cr
			&\leq (1 - 2\eta_{i} \mu+2\eta^2_iL^2_2) \|\delta_{i - 1}\|^{2} + 2  \eta_{i}^{2} \sigma^2(x^*).
		\end{align*}
		Then by definition of $v_n$ we have $\E|\delta_n|^2=\E|x_n-x^*|^2 \leq v_n$. If $\eta_i\asymp i^{-\beta}$, we have

        \begin{align*}
			v_i &\lesssim \prod_{k = 1}^{i} (1 - ck^{-\beta} )v_0+ 2 \sigma^2(x^*)  \sum_{j = 1}^{i} j^{-2\beta} \prod_{k = j + 1}^{i} (1 - ck^{-\beta} ) \cr
         &\asymp {Y_{(c)}}_0^i +  \sum_{j = 1}^{i} {Y_{(c)}}_j^i j^{-2\beta} \asymp i^{-\beta}.
		\end{align*}
        The last two steps are from Lemma \ref{lemma Yij}.
\end{proof}

Let $\lambda_1 \le \ldots \le \lambda_d$ be eigenvalues of $A$. For $j \ge 1$ define 
\begin{equation}
	b_j = \lambda_{\max}({\textbf{I}}_d-\eta_jA) = \max_{1 \le l \le d} |1 - \eta_j \lambda_j|
	= \max( |1 - \eta_j \lambda_1|, \, |1 - \eta_j \lambda_d|).
\end{equation}
For small $\eta_j$, we have $b_j = 1 - \eta_j \lambda_1 < 1$, which plays the role of contraction factor. 

\begin{theorem}
\label{tm00}
Consider the SGD \eqref{eq:1}. Let Assumptions \ref{as1} and \ref{as2} be satisfied. Assume that weights $w_{n,i}$ in (\ref{wasgd}) satisfy $\sum_{i=1}^n w_{n,i} = 1$. Let $v_n$ be defined in (\ref{vn}) and $\chi_k = \prod_{i=1}^k b_i$. Assume that
	\begin{equation}\label{J29720}
		\lim_{n\to \infty} \lambda_{\max}( V_n^{-1} ) \sum_{i=1}^n v_{i-1}
		\left( \sum_{k=i}^n |w_{n,k}| \frac{\chi_k }{ \chi_i} \eta_j \right)^2 
		= 0
	\end{equation}
	and
	\begin{equation}\label{J29721}
		\lim_{n\to \infty} \lambda_{\max}( V_n^{-1/2} ) \sum_{j=1}^n |w_{n, j}| 
		\sum_{i=1}^j   \frac{\chi_j}{\chi_i}  \eta_i v_{i-1}
		= 0.
	\end{equation}
	Further assume (\ref{cond1}).
	Then we have the quenched central limit theorem: for any starting point $x_0$,
	\begin{equation*}
		%\label{cond2}
		V_n^{-1/2}(\tilde{x}_n-x^*)\stackrel{D}{\to} \mathcal{N}(0,{\textbf{I}}_d).
	\end{equation*}

\end{theorem}
\begin{proof}
Recall the decomposition in \ref{wsgddecomp}. The same argument in \ref{ptm0} can be applied here to show that \\
$V_n^{-1/2}\sum_{i=1}^n \Phi_{n,i} \nabla f(x^*, \xi_i)$ converges to the standard normal distribution. And we still have \\
$|  V_n^{-1/2}\eta_1^{-1}({\textbf{I}}_d-\eta_1A)\Phi_{n,1}| \rightarrow 0$ by Condition (\ref{cond1}). Recall the definition of 
$$\Phi_{n,i} = \eta_i \sum_{k=i}^n w_{n,k} \prod_{j=i+1}^k ({\textbf{I}}_d-\eta_jA).$$
We can bound its operator norm by
$$|\Phi_{n,i}|\leq \eta_i\sum_{k=i}^n |w_{n,k}|\frac{\chi_k}{\chi_j}.$$
As a result,
\begin{align*}	 
\| V_n^{-1/2}\sum_{i=1}^n \Phi_{n,i} D_i\|^2& \leq  \sum_{i=1}^n  |  V_n^{-1/2}\Phi_{n,i} |^2   \| D_i\|^2 \\
&\lesssim \lambda_{\max}(V_n^{-1})\sum_{i=1}^n \eta_i^2\big(\sum_{k=i}^n |w_{n,k}|\frac{\chi_k}{\chi_j}\big)^2v_{i-1}\rightarrow0
\end{align*}	
by condition \ref{J29720}. Moreover,

\begin{align*}	 
 \| V_n^{-1/2} \sum_{i=1}^n \Phi_{n,i} R_i\| & \leq  \sum_{i=1}^n  | V_n^{-1/2} \Phi_{n,i} |  \| R_i\|
		\\& \lesssim   \sum_{i=1}^n \lambda_{\max}(V_n^{-1/2})|  \Phi_{n,i} | v_{i-1}\\
        &\leq \sum_{i=1}^n \lambda_{\max}(V_n^{-1/2})\eta_i\sum_{k=i}^n |w_{n,k}|\frac{\chi_k}{\chi_j}v_{i-1}\\
        &=\lambda_{\max}(V_n^{-1/2})\sum_{k=1}^n|w_{n,k}|\sum_{i=1}^k\frac{\chi_k}{\chi_j} \eta_iv_{i-1} \rightarrow0
\end{align*}	 
by condition \ref{J29721}. So the conclusion follows.
\end{proof}

%\textcolor{red}{Condition \ref{eta4} is very mild. It holds, for example, if $\eta_i \sim i^{-\beta}$ with $0 < \beta < 1$.}

\subsection{Proof Sketch of Theorem \ref{tm2}}
We present a sketch of the proof of Theorem \ref{tm2}. The error sequence $\delta_{i} = x_{i} - x^{*}$ takes the following form
\begin{align}\label{eq3}
	\delta_i= \delta_{i-1} - \eta_i\nabla F(x_{i-1})+\eta_i \epsilon_i, \ i\ge 1,
\end{align}
where $\epsilon_i= \nabla F(x_{i-1})-\nabla f(x_{i-1},\xi_i)$. Since $\nabla F(x^*)=0$, by Taylor's expansion of $F$ around $x^*$ we have $\nabla F(x_n) \approx A\delta_n$, which inspires the idea to approximate the general SGD sequence with a corresponding linear sequence. 

Consider a linear sequence:
\begin{equation}\label{eq:linear}
	\delta_i'= (I - \eta_i A)\delta_{i-1}' +\eta_i\epsilon_i, \ \delta_0'=\delta_0.
\end{equation}
The following lemma shows the asymptotic normality for the weighted average of the linear sequence $\delta_i'$.
 
\begin{lemma}\label{tm1}
	Let $\delta_i'$ be defined in \eqref{eq:linear}. Then under the settings in Theorem \ref{tm2}, for $\tilde{\delta}_n'= \sum^n_{i=1}w_{n,i}\delta_i'$ we have
	$$\sqrt{n}\tilde{\delta}_n' \stackrel{D}{\to} \mathcal{N}(0,w A^{-1}SA^{-1}),$$
	where $w = \lim_{n \rightarrow \infty}n\sum_{i=1}^{n}(w_{n,i})^{2}$,  $A=\nabla^2F(x^*)$, and  
	$S=\mathbb{E}([\nabla f(x^*,\xi)][\nabla f(x^*,\xi)]^T)$. 
%Furthermore, in this lemma, we can weaken the constraints so that we do not require $\sum_{i=1}^n w_{n,i}=1$, and the conclusion still holds.
\end{lemma}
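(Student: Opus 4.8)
The plan is to unroll the linear recursion \eqref{eq:linear} into an explicit sum, isolate a leading martingale term whose limit is Gaussian, and show that everything else is negligible after scaling by $\sqrt{n}$. Using the matrices $Y_i^k$ from Lemma \ref{polyak}, the solution of \eqref{eq:linear} is $\delta_k'= Y_0^k\delta_0 + \sum_{i=1}^k\eta_i Y_i^k\epsilon_i$, so that after switching the order of summation
$$\tilde\delta_n' = \sum_{k=1}^n w_{n,k}Y_0^k\delta_0 + \sum_{i=1}^n\Big(\eta_i\sum_{k=i}^n w_{n,k}Y_i^k\Big)\epsilon_i =: I_n + \sum_{i=1}^n W_{n,i}\epsilon_i.$$
First I would dispose of the initialization term $I_n$: since $\|Y_0^k\|_2\le\exp(-\lambda\sum_{t=1}^k\eta_t)$ decays like a stretched exponential and $|w_{n,k}|\le Cn^{-1}$ by condition 1, the scalar sum $\sum_k\|Y_0^k\|_2$ is bounded, giving $\|\sqrt{n}\,I_n\|_2 = O(n^{-1/2})\to 0$ deterministically.

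The core step is to decompose the weight-dependent matrix $W_{n,i}=\eta_i\sum_{k=i}^n w_{n,k}Y_i^k$. Writing $w_{n,k}=w_{n,i}+(w_{n,k}-w_{n,i})$ and recalling $\bar Y_i^n=\eta_i\sum_{k=i}^n Y_i^k = A^{-1}-\phi_i^n$ from Lemma \ref{polyak}, I obtain
$$W_{n,i} = w_{n,i}A^{-1} - w_{n,i}\phi_i^n + \eta_i\sum_{k=i+1}^n (w_{n,k}-w_{n,i})Y_i^k.$$
Because each $\epsilon_i$ is a martingale difference (Assumption \ref{as2}) and the three coefficient matrices are deterministic, the scaled noise $\sqrt{n}\sum_i W_{n,i}\epsilon_i$ splits into three orthogonal martingales, and the proof reduces to analyzing each separately.

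For the leading term $\sqrt{n}\sum_i w_{n,i}A^{-1}\epsilon_i$ I would apply the martingale CLT. Its predictable quadratic variation is $n\sum_i w_{n,i}^2 A^{-1}\E_{i-1}(\epsilon_i\epsilon_i^T)A^{-T}$, which converges in probability to $wA^{-1}SA^{-T}$: the main part tends to $wA^{-1}SA^{-T}$ because $n\sum_i w_{n,i}^2\to w$ (condition 2), while the error is controlled by $\|\E_{i-1}(\epsilon_i\epsilon_i^T)-S\|_2\lesssim\|\delta_{i-1}\|_2+\|\delta_{i-1}\|_2^2$ (Assumption \ref{as3}) together with $n w_{n,i}^2\le Cn^{-1}|w_{n,i}|$ and the standard decay $\E\|\delta_i\|_2\to 0$ of the true SGD error under Assumption \ref{as1}. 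The conditional Lindeberg condition follows from a fourth-moment Lyapunov bound, since $n^2\sum_i w_{n,i}^4\le C^2\sum_i w_{n,i}^2\to 0$ and $\sup_i\E\|\epsilon_i\|_2^4<\infty$ by Assumption \ref{as3}. This yields the claimed Gaussian limit.

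It then remains to kill the other two martingales in $L^2$. For the $\phi_i^n$ term, $\E\|\sqrt{n}\sum_i w_{n,i}\phi_i^n\epsilon_i\|_2^2 = n\sum_i w_{n,i}^2\|\phi_i^n\|_2^2\,\E\|\epsilon_i\|_2^2\lesssim n^{-1}\sum_i\|\phi_i^n\|_2^2\le Kn^{-1}\sum_i\|\phi_i^n\|_2\to 0$, using $\|\phi_i^n\|_2\le K$ and $n^{-1}\sum_i\|\phi_i^n\|_2\to 0$ from Lemma \ref{polyak}. The hard part is the weight-variation term. Setting $b_{n,i}=\eta_i\sum_{k=i+1}^n|w_{n,k}-w_{n,i}|\exp(-\lambda\sum_{t=i+1}^k\eta_t)$, the bound $\|Y_i^k\|_2\le\exp(-\lambda\sum_{t=i+1}^k\eta_t)$ gives $\|\eta_i\sum_{k>i}(w_{n,k}-w_{n,i})Y_i^k\|_2\le b_{n,i}$, so its $L^2$ norm is controlled by $n\sum_i b_{n,i}^2$. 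The obstacle is that the smoothness condition (condition 3) only supplies $\sum_i b_{n,i}\to 0$, not $n\sum_i b_{n,i}^2\to 0$. I would bridge the gap by first proving the uniform bound $b_{n,i}\lesssim n^{-1}$, which follows from $|w_{n,k}-w_{n,i}|\le 2Cn^{-1}$ and the boundedness of $\eta_i\sum_{k\ge i}\exp(-\lambda\sum_{t=i+1}^k\eta_t)$ (a standard step-size estimate underlying Lemma \ref{polyak}), and then writing $n\sum_i b_{n,i}^2\le n(\max_i b_{n,i})\sum_i b_{n,i}\lesssim\sum_i b_{n,i}\to 0$. Combining the three pieces by Slutsky finishes the proof; note that the normalization $\sum_i w_{n,i}=1$ is nowhere invoked, which is exactly the final remark of the lemma.
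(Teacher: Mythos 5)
Your proposal is correct and follows essentially the same route as the paper's proof: the same unrolling of the linear recursion, the same four-term decomposition (leading martingale $w_{n,i}A^{-1}\epsilon_i$, initialization term, $\phi_i^n$ term, and weight-variation term), the same $L^2$ bounds via Lemma \ref{polyak}, and crucially the same trick of combining the uniform bound $b_{n,i}\lesssim n^{-1}$ with the smoothness condition to get $n\sum_i b_{n,i}^2\lesssim \sum_i b_{n,i}\to 0$. The only cosmetic difference is that you verify the Lindeberg condition via a fourth-moment Lyapunov bound while the paper checks the conditional Lindeberg condition directly by Cauchy--Schwarz and Markov; both rest on Assumption \ref{as3} and the same moment bounds on $\delta_i$.
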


To prove Lemma \ref{tm1}, we decompose $\sqrt{n}\tilde{\delta}_n'$ into four terms:

\begin{equation}\label{eq:decompose}
	\begin{aligned}	
\sqrt{n}\tilde{\delta}_n'&=\sqrt{n}\sum^{n}_{i=1}w_{n,i}A^{-1}\epsilon_i+\sqrt{n}\sum^n_{i=1}w_{n,i} Y_0^i\delta_0
		\\&+\sqrt{n}\sum^n_{i=1}w_{n,i} a^n_i\epsilon_i+\sqrt{n}\sum^n_{i=1}b^n_i\epsilon_i,
	\end{aligned}	 
\end{equation}
where $a^n_i=\sum^n_{k=i}(Y^{k}_{i}\eta_i-A^{-1})$, $b^n_i=\sum^n_{k=i+1}(w_{n,k}-w_{n,i})Y^{k}_{i}\eta_i$ and 
$$Y_i^{k}=\prod_{j=i+1}^k ({\textbf{I}}-\eta_jA), k>i, Y_i^i={\textbf{I}}.$$

The last three terms in \eqref{eq:decompose} vanish as $n$ goes to infinity. The first term $\sqrt{n} \sum_{i=1}^n w_{n,i} A^{-1}\epsilon_i$ is a linear combination of martingale differences and the following lemma shows that it is asymptotically normal. 
\begin{lemma}[Martingale difference asymptotic normality]\label{mclt}
	Under the settings in Theorem \ref{tm2},
	$$\sqrt{n} \sum_{i=1}^n w_{n,i} A^{-1}\epsilon_i \stackrel{D}{\to} \mathcal{N}(0,w A^{-1}SA^{-1}).$$
\end{lemma}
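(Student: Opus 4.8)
The plan is to recognize $\sqrt{n}\sum_{i=1}^n w_{n,i}A^{-1}\epsilon_i$ as the sum of a martingale difference triangular array and apply a martingale central limit theorem. Set $\zeta_{n,i} = \sqrt{n}\, w_{n,i}A^{-1}\epsilon_i$ for $1\le i\le n$. By Assumption \ref{as2}, $\mathbb{E}_{i-1}\epsilon_i = 0$, so $\zeta_{n,i}$ is a martingale difference with respect to $\mathscr{F}_i$ and $\sum_{i=1}^n\zeta_{n,i}$ equals the quantity of interest. Reducing to the scalar case via the Cram\'er--Wold device, it suffices to verify the two standard hypotheses of the martingale CLT: (i) convergence of the conditional covariance, $\sum_{i=1}^n\mathbb{E}_{i-1}(\zeta_{n,i}\zeta_{n,i}^T)\stackrel{P}{\to} wA^{-1}SA^{-T}$, and (ii) a conditional Lindeberg condition, which I will obtain through the stronger conditional Lyapunov bound $\sum_{i=1}^n\mathbb{E}_{i-1}(\|\zeta_{n,i}\|^4)\stackrel{P}{\to}0$.

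For (i), I would decompose
\begin{align*}
\sum_{i=1}^n\mathbb{E}_{i-1}(\zeta_{n,i}\zeta_{n,i}^T)
&= n\sum_{i=1}^n w_{n,i}^2\, A^{-1}\mathbb{E}_{i-1}(\epsilon_i\epsilon_i^T)A^{-T}\\
&= \Big(n\sum_{i=1}^n w_{n,i}^2\Big)A^{-1}SA^{-T}
+ n\sum_{i=1}^n w_{n,i}^2\, A^{-1}\big(\mathbb{E}_{i-1}(\epsilon_i\epsilon_i^T)-S\big)A^{-T}.
\end{align*}
The first term converges to $wA^{-1}SA^{-T}$ by Condition 2 of Theorem \ref{tm2}. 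For the remainder, Assumption \ref{as3} gives $\|\mathbb{E}_{i-1}(\epsilon_i\epsilon_i^T)-S\|_2\le C_2(\|\delta_{i-1}\|_2+\|\delta_{i-1}\|_2^2)$, so with $|w_{n,i}|\le Cn^{-1}$ its operator norm is at most $C_2\|A^{-1}\|_2^2C^2\, n^{-1}\sum_{i=1}^n(\|\delta_{i-1}\|_2+\|\delta_{i-1}\|_2^2)$. Invoking the standard SGD moment bound $\mathbb{E}\|\delta_i\|_2^2\lesssim\eta_i$ (hence $\mathbb{E}\|\delta_i\|_2\lesssim\eta_i^{1/2}$ by Cauchy--Schwarz), the Ces\`aro average $n^{-1}\sum_{i=1}^n\mathbb{E}(\|\delta_{i-1}\|_2+\|\delta_{i-1}\|_2^2)\to0$, so the remainder vanishes in $L^1$ and thus in probability by Markov's inequality.

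For (ii), Assumption \ref{as3} bounds $\mathbb{E}_{i-1}(\|\epsilon_i\|_2^4)\le C_1(1+\|\delta_{i-1}\|_2^4)$, and using $|w_{n,i}|\le Cn^{-1}$ one gets $n^2 w_{n,i}^4\le C^4 n^{-2}$, whence
$$
\sum_{i=1}^n\mathbb{E}_{i-1}(\|\zeta_{n,i}\|^4)
\lesssim n^{-2}\sum_{i=1}^n(1+\|\delta_{i-1}\|_2^4)
= n^{-1}+n^{-2}\sum_{i=1}^n\|\delta_{i-1}\|_2^4.
$$
With the fourth-moment bound $\mathbb{E}\|\delta_i\|_2^4\lesssim\eta_i^2\asymp i^{-2\alpha}$, the expectation of the second term is $\lesssim n^{-2}\sum_{i=1}^n i^{-2\alpha}\to0$ for every $\alpha>1/2$, so the sum tends to $0$ in $L^1$ and in probability. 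Conditions (i) and (ii) together deliver the claimed Gaussian limit.

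The main obstacle is that the whole argument rests on the SGD moment bounds $\mathbb{E}\|\delta_i\|_2^2\lesssim\eta_i$ and $\mathbb{E}\|\delta_i\|_2^4\lesssim\eta_i^2$, which is precisely where Assumption \ref{as1} (strong convexity and Lipschitz gradient) and the noise control of Assumption \ref{as3} must be combined: one establishes them by a recursive estimate that balances the contraction $I-\eta_iA$ against the injected noise $\eta_i\epsilon_i$, and this recursion is the genuine technical heart. A secondary subtlety is that the conditional covariance in (i) is itself random, so I must argue convergence in probability toward a deterministic limit rather than mere convergence of expectations, which is exactly what the $L^1$-to-probability Markov step supplies.
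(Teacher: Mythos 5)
Your proposal is correct and follows essentially the same route as the paper: both treat $\sqrt{n}\,w_{n,i}A^{-1}\epsilon_i$ as a martingale difference array and verify the two hypotheses of a Hall--Heyde-type martingale CLT, with the conditional covariance handled by the identical decomposition into $\bigl(n\sum_{i=1}^n w_{n,i}^2\bigr)A^{-1}SA^{-T}$ plus a remainder controlled by Assumption \ref{as3} and the SGD moment bounds. The only cosmetic difference is that you verify a conditional Lyapunov (fourth-moment) condition, whereas the paper checks the conditional Lindeberg condition directly via Cauchy--Schwarz and Markov; both reduce to the same bound $\lesssim n^{-2}(1+\|\delta_{i-1}\|^4)$ and rest on the same moment estimates for $\delta_i$, which the paper imports from Lemma 3.2 of Zhu et al.\ (2021) rather than re-deriving.
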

Once Lemma \ref{tm1} is established, we can prove Theorem \ref{tm2} using the linear approximation technique below.

\subsection{Proof of Theorem \ref{tm2}}\label{ptm}
\begin{proof}
	Recall the error sequence of SGD iterates $\delta_n=x_n-x^{*}$. It also takes the form $\delta_n=\delta_{n-1}-\eta_n\nabla F(x_{n-1})+\eta_n\epsilon_n$. The weighted averaged error sequence is $\tilde{\delta}_n=\sum_{i=1}^n w_{n,i}\delta_i$. Since $\sum_{i=1}^n w_{n,i}=1$, we have $\tilde{\delta}_n=\tilde{x}_n-x^*$.
	We have also defined the linear error sequence 
	$$\delta'_n=\delta'_{n-1}-\eta_n A\delta'_{n-1}+\eta_n\epsilon_n, \ \delta'_0=x_0-x^*,$$
	$$\tilde{\delta}_n'=\sum^n_{i=1}w_{n,i}\delta'_i.$$
	We claim that Lemma \ref{tm1} is true, i.e., 
	$$\sqrt{n}\tilde{\delta}_n' \stackrel{D}{\to} \mathcal{N}(0,w A^{-1}SA^{-1}),$$
	then it suffices to prove that $\sqrt{n}\tilde{\delta}_n'$ and $\sqrt{n}\tilde{\delta}_n$ are asymptotically equally distributed.
	Let $s_n$ be the difference between the nonlinear and linear sequence. It also takes the following recursion form:
	\begin{equation}
		\begin{aligned}	 
			s_n=\delta_n-\delta'_n&=\delta_{n-1}-\eta_n\nabla F(x_{n-1})-(\textbf{I}-\eta_n A)\delta'_{n-1}
			\\&=({\textbf{I}}-\eta_n A)(\delta_{n-1}-\delta'_{n-1})-\eta_n(\nabla F(x_{n-1})-A\delta_{n-1})
			\\&=({\textbf{I}}-\eta_n A)s_{n-1}-\eta_n(\nabla F(x_{n-1})-A\delta_{n-1}).
		\end{aligned}	 
	\end{equation}
	Recall the definition of $Y^n_i$:
	$$Y_i^{k}=\prod_{j=i+1}^k ({\textbf{I}}-\eta_jA), k>i, Y_i^i={\textbf{I}}.$$
	We can use $Y^n_i$ to rewrite $s_n$ as
	$$s_n=\sum_{i=1}^n Y^n_i\eta_i[A\delta_{i-1}-\nabla F(x_{i-1})].$$
	Define the weighted average difference between the nonlinear and linear sequence:
	$$\tilde{s}_n=\sum^n_{i=1}w_{n,i} s_i=\sum_{i=1}^n w_{n,i} \sum_{j=1}^i Y^i_j\eta_j[A\delta_{j-1}-\nabla F(x_{j-1})].$$
	
	Note that $\sqrt{n}\tilde{\delta}_n = \sqrt{n}\tilde{s}_n+\sqrt{n}\tilde{\delta}_n'$ and $\sqrt{n}\tilde{\delta}_n' \stackrel{D}{\to} \mathcal{N}(0,w A^{-1}SA^{-1})$. To prove  
	$$\sqrt{n}\tilde{\delta}_n=\sqrt{n}(\tilde{x}_n-x^*) \stackrel{D}{\to} \mathcal{N}(0,w A^{-1}SA^{-1}),$$
	it is suffice to prove $\sqrt{n}\tilde{s}_n$ converges to $0$ in probability.
	
	\begin{equation} \label{use}
		\begin{aligned}	
			\| \tilde{s}_n\|_2& \leq \sum_{i=1}^n w_{n,i} \sum_{j=1}^i \|Y^i_j\|_2\eta_j\|A\delta_{j-1}-\nabla F(x_{j-1})\|_2
			\\&\lesssim \frac{1}{n}\sum_{j=1}^n  \|A\delta_{j-1}-\nabla F(x_{j-1})\|_2 \eta_j \Big(\sum_{i=j}^n \|Y^i_j\|_2\Big)
			\\&\lesssim \frac{1}{n}\sum_{j=1}^n  \|A\delta_{j-1}-\nabla F(x_{j-1})\|_2  j^{-\alpha}\big(1+(j+1)^{\alpha}\big)  
			\\&\lesssim \frac{1}{n}\sum_{j=1}^n  \|A\delta_{j-1}-\nabla F(x_{j-1})\|_2 
			\\&\lesssim \frac{1}{n}\sum_{j=1}^n  \|\delta_{j-1}\|_2^2 .
		\end{aligned}	 
	\end{equation}
	The second inequality is obtained by upper bounding $w_{n,i}$ and exchange the order of summations. The third inequality comes from Lemma A.2 in \citet{zhu2021online}. The last inequality is from 
	Taylor's expansion around $x^*$. From Lemma 3.2 in \citet{zhu2021online} we know that 
    $$\mathbb{E}\|\delta_{j-1}\|^2_2 
	\lesssim   (j-1)^{-\alpha}.$$
	So there exists a constant $C>0$ such that
	$$\sum_{j=1}^n\frac{1}{\sqrt{j}}\mathbb{E}\|\delta_{j-1}\|^2_2 \lesssim  \sum_{j=1}^n \frac{1}{\sqrt{j}}(j-1)^{-\alpha}\lesssim\sum_{j=1}^n j^{-0.5-\alpha} \leq C.$$
	
	By Kronecker's lemma, $$\frac{1}{\sqrt{n}}\sum_{j=1}^n\mathbb{E}\|\delta_{j-1}\|^2_2 \rightarrow 0. $$
	As a result, for any fixed $h>0$,
	$$\mathbb{P}(\sqrt{n}\| \tilde{s}_n\|_2>h)\leq \mathbb{P}\Big(\frac{1}{\sqrt{n}}\sum_{j=1}^n  \|\delta_{j-1}\|_2^2>h\Big)\leq \frac{1}{\sqrt{n}h}\mathbb{E}\sum_{j=1}^n\|\delta_{j-1}\|^2_2\rightarrow 0.$$
	
    Thus we proved that
	$\sqrt{n}\tilde{s}_n$ converges to $0$ in probability, and the theorem is proved.

\end{proof}

\subsection{Proof of Lemma \ref{tm1}}
\begin{proof}
	By definition of the linear error term, we have 
	$$\delta'_n=\prod^n_{i=1}({\textbf{I}}-\eta_i A)\delta_0+\sum^n_{i=1}\prod^n_{j=i+1}({\textbf{I}}-\eta_j A)\eta_i\epsilon_i,$$
	where the matrix sequence $Y_i^{k}, \ k\ge i$ is defined as
	$$Y_i^{k}=\prod_{j=i+1}^k ({\textbf{I}}-\eta_jA), k>i, Y_i^i={\textbf{I}}.$$
	Here we also use the convention that $\prod^n_{j=n+1} ({\textbf{I}}-\eta_jA)=I$. Then the weighted averaged error sequence $\tilde{\delta}'_n$ takes the form:
	\begin{equation}\label{f5}
		\begin{aligned}	 
			\tilde{\delta}'_n&=\sum^n_{i=1}w_{n,i}\prod^i_{j=1}({\textbf{I}}-\eta_j A)\delta_0+       \sum^n_{k=1}w_{n,k}\sum^k_{i=1}\prod^k_{j=i+1}({\textbf{I}}-\eta_j A)\eta_i\epsilon_i
			\\&=\sum^n_{i=1}w_{n,i} Y_0^i\delta_0+   \sum^n_{i=1}\sum^n_{k=i}w_{n,k} Y_i^k\eta_i\epsilon_i
			\\&=\sum^n_{i=1}w_{n,i} Y_0^i\delta_0+
			\sum^n_{i=1}w_{n,i}\sum^n_{k=i}Y^{k}_{i}\eta_i\epsilon_i+
			\sum^n_{i=1}\sum^n_{k=i+1}(w_{n,k}-w_{n,i})Y^{k}_{i}\eta_i\epsilon_i
			\\&=\sum^n_{i=1} w_{n,i} A^{-1}\epsilon_i+\sum^n_{i=1}w_{n,i} Y_0^i\delta_0+
			\sum^n_{i=1}w_{n,i}(\sum^n_{k=i}Y^{k}_{i}\eta_i-A^{-1})\epsilon_i+
			\sum^n_{i=1}\sum^n_{k=i+1}(w_{n,k}-w_{n,i})Y^{k}_{i}\eta_i\epsilon_i
			\\&\stackrel{\Delta}{=}I+II+III+IV
		\end{aligned}	 
	\end{equation}
	By Lemma A.2 in \citet{zhu2021online}, 
	$$\sum_{k=i+1}^n \|Y^{k}_{i}\|_2\lesssim (i+1)^{\alpha}.$$
	So we have,
	$$\lim_{n \rightarrow \infty}\|\sqrt{n}\sum^n_{i=1}w_{n,i} Y_0^i\delta_0\|_2\lesssim \lim_{n \rightarrow \infty} \frac{1}{n}\|\sqrt{n}\sum^n_{i=1}Y_0^i\|_2\lesssim \lim_{n \rightarrow \infty}\frac{1}{\sqrt{n}}=0.$$
	Recall $\phi^n_i=\sum^n_{k=i}Y^{k}_{i}\eta_i-A^{-1}$. Then by Lemma \ref{polyak},
 \begin{equation*}
\begin{aligned}	 
	\lim_{n \rightarrow \infty}\E\|\sqrt{n}\sum^n_{i=1}w_{n,i}(\sum^n_{k=i}Y^{k}_{i}\eta_i-A^{-1})\epsilon_i\|^2_2&=\lim_{n \rightarrow \infty}\E\|\sqrt{n}\sum^n_{i=1}w_{n,i}\phi_i^n\epsilon_i\|^2_2 \\
 &\lesssim \frac{1}{n}\sum^n_{i=1}\|\phi_i^n\|^2_2 \\
 &\lesssim \frac{1}{n}\sum^n_{i=1}\|\phi_i^n\|_2 =0.
 \end{aligned}	 
\end{equation*}
	The last inequality is because $\|\phi^n_i\|_2\leq K$. The result shows that $\sqrt{n}II$ and $\sqrt{n}III$ converge to $0$ in $L^2$ norm. For $\sqrt{n}IV$, let $a^n_i=\sum^n_{k=i+1}(w_{n,k}-w_{n,i})Y^{k}_{i}\eta_i$. By Lemma A.2 in \citet{zhu2021online}, we have
	$$\|Y^k_i\|_2\leq \exp (-\lambda\sum_{t=i+1}^k\eta_t),$$
	and
	$$\|a^n_i\|_2 \lesssim \frac{1}{n}\sum_{k=i+1}^n \|Y^k_i\|_2\eta_i \lesssim \frac{1}{n}$$
	We also have
	$$\lim_{n \rightarrow \infty}\sum_{i=1}^n \|a^n_i\|_2 \leq \lim_{n \rightarrow \infty} \sum_{i=1}^n\sum_{k=i+1}^n|w_{n,k}-w_{n,i}|\eta_i \exp (-\lambda\sum_{t=i+1}^k \eta_t).$$
    Now we claim that the term above goes to $0$. 
	As a result,
 \begin{equation*}
\begin{aligned}	\E\|\sqrt{n}IV\|^2_2&=\E\|\sqrt{n}\sum^n_{i=1}a^n_i\epsilon_i\|^2_2 \\
&\lesssim \frac{1}{n}\sum^n_{i=1}\|na_i^n\|^2_2 \\
&\lesssim \frac{1}{n}\sum^n_{i=1}\|na_i^n\|_2=\sum^n_{i=1}\|a_i^n\|_2\rightarrow0.
 \end{aligned}	 
\end{equation*}
	So we only need to show the asymptotic normality of the first term. By Lemma \ref{mclt} we have
	$$\sqrt{n} \sum_{i=1}^n w_{n,i} A^{-1}\epsilon_i\stackrel{D}{\to} \mathcal{N}(0,w A^{-1}SA^{-1}).$$
    Finally, we only need to prove the claim that 
    $$\sum_{i=1}^n\sum_{k=i+1}^n|w_{n,k}-w_{n,i}|\eta_i \exp (-\lambda\sum_{t=i+1}^k \eta_t) \rightarrow 0.$$
    By piecewise Lipshitz condition, we have that except for finite jumping points,
    $$|w_{n,k}-w_{n,k-1}| \leq L_1\frac{1}{n^2}$$
    We first assume that there are no jumping points and the inequality always holds. Then we need the following 3 steps:
	\\
	Step 1: Define $m^i_i=0$ and for any $k>1$,
	$$m_i^k=\sum_{t=i+1}^k \eta_t.$$
	Then our goal is to prove $ \sum_{i=1}^n\sum_{k=i+1}^n|w_{n,k}-w_{n,i}|\eta_i \exp (-\lambda m^k_i) \rightarrow 0$.
	Recall that $\lambda={\min}(\lambda_{\min}(A),1/(2\eta))$.
	Let $\mu=\lfloor 1/\lambda \rfloor+1$. Choose an $N \in \mathbb{N}^+$ such that $\forall k>i\ge N$,
	$$m^k_i \ge \mu \log \frac{k}{i}.$$
	Since $m^k_i \ge \eta(k^{1-\alpha}-i^{1-\alpha})/(1-\alpha)$, we can always find such an $N$.
	\\
	Step 2: Let $b^n_i=\sum_{k=i+1}^n|w_{n,k}-w_{n,i}|\eta_i \exp (-\lambda m^k_i)$. We decompose $ \sum_{i=1}^nb^n_i$ into two parts:
	\begin{equation}
		\begin{aligned}	 
			\sum^n_{i=1}b^n_i&=\sum^n_{i=1} \eta_i \sum^n_{k=i+1}(w_{n,k}-w_{n,i})\exp (-\lambda m^k_i)
			\\&\leq \sum^N_{i=1}\eta_i \sum_{k=i+1}^n|w_{n,k}-w_{n,i}|\exp (-\lambda m^k_i)+\sum^n_{i=N+1}\eta_i\sum^n_{k=i+1}|w_{n,k}-w_{n,i}|\exp (-\lambda m^k_i)
			\\&\stackrel{\Delta}{=}I_1+I_2
		\end{aligned}	 
	\end{equation}
	\\
	Step 3: Show that each term goes to $0$ when $n\rightarrow \infty$.
	For the first term we have
	$$I_1\leq \sum^N_{i=1}\eta_i\frac{2C}{n}\sum_{k=i+1}^n\exp (-\lambda m^k_i)\lesssim\frac{1}{n} \sum^N_{i=1}(i+1)^{\alpha}i^{-\alpha}\lesssim\frac{1}{n}$$
	The second inequality is due to Lemma A.1 and Lemma A.2 in \cite{zhu2021online}. Now there exists a constant $\tilde{C}$ such that $|w_{n,t+1}-w_{n,t}| \leq L_1/n^2$. So for the second term we have
	\begin{equation}
		\begin{aligned}	 
			I_2&=\sum^n_{i=N+1} \eta_i \sum^n_{k=i+1}\{\sum_{t=i}^{k-1}|w_{n,t+1}-w_{n,t}|\}e^{-\lambda m_i^k}
			\\&\lesssim \sum^n_{i=N+1} \eta_i\sum^n_{k=i+1}\sum^{k-1}_{t=i}\frac{1}{n^2}e^{-\lambda m_i^k}
			\\&\lesssim n^{\alpha-2}\sum^n_{i=N+1} \eta_i\sum^n_{k=i+1}\sum^{k-1}_{t=i}\frac{1}{t^{\alpha}}e^{-\lambda m_i^k}
			\\&\lesssim n^{\alpha-2}\sum^n_{i=N+1} \eta_i\sum^n_{k=i+1}m^k_i e^{-\lambda m_i^k}
			\\&=n^{\alpha-2}\sum^n_{i=N+1} \sum^n_{k=i+1}\frac{m^k_i\eta_i}{\eta_k}e^{-\lambda m_i^k}(m^k_i-m^{k-1}_i)
		\end{aligned}	 
	\end{equation}
	The second inequality is because $t^{\alpha}\leq n^{\alpha}$.
	Notice that $\frac{\eta_i}{\eta_k}\leq \frac{k}{i}\leq e^{\frac{m^k_i}{\mu}}$ by step 2,
	\begin{equation}
		\begin{aligned}	 
			I_2&\lesssim n^{\alpha-2}\sum^n_{i=N+1} \sum^n_{k=i+1}m^k_i e^{(\frac{1}{\mu}-\lambda) m_i^k}(m^k_i-m^{k-1}_i)
			\\&\lesssim n^{\alpha-2}\sum^n_{i=N+1}\int^{+\infty}_0me^{(\frac{1}{\mu}-\lambda)m}dm
			\\&\lesssim n^{\alpha-1}\rightarrow 0.
		\end{aligned}	 
	\end{equation}
	We have shown that 
	$$\sum^n_{i=1}b^n_i \lesssim n^{\alpha-1} \rightarrow 0.$$ 
	So the claim is proved under the condition of no jumping points. Then we deal with the finite jumping points case. Without loss of generality, we assume there is one point $0<\kappa<1$ such that the Lipshitz condition holds on $[0,\kappa]$ and $[\kappa,1]$. The case of multiple but finite points follows the same argument.

	Let $d_n=w_{n,\lceil \kappa n \rceil}-w_{n,\lfloor \kappa n \rfloor}$ and $w'_{n,i} = -d_n$ for $i>\kappa n$ and $0$ otherwise. Notice that $|w'_{n,i}| \lesssim 1/n$.We have
    $$|w_{n,k}-w_{n,i}|\leq|w_{n,k}+w'_{n,k}-w_{n,i}-w'_{n,i}|+|w'_{n,k}-w'_{n,i}|,$$ 
    and the term in the claim is a linear combination of $|w_{n,k}-w_{n,i}|$. So we can plug in the triangle inequality where $|w_{n,k}+w'_{n,k}-w_{n,i}-w'_{n,i}|$ has no jumping points, for which $$\lim_{n\rightarrow \infty}\sum_{i=1}^n \eta_i\sum_{k=i+1}^n|w_{n,k}+w'_{n,k}-w_{n,i}-w'_{n,i}|\exp (-\lambda\sum_{t=i+1}^k \eta_t)=0$$
    is proved. So we only need to verify the claim for $|w'_{n,k}-w'_{n,i}|$. 
    By Lemma A.1. in \citet{zhu2021online} we have 
	$$\exp (-\lambda\sum_{t=i+1}^k \eta_t) \leq \exp \bigg(-\frac{\lambda \eta}{1-\alpha}\big(k^{1-\alpha}-(i+1)^{1-\alpha}\big)\bigg).$$ 
    By Lemma \ref{lemma Yij},
\begin{equation}
		\begin{aligned}	 
			& \ \ \ \  \ \ \  \sum_{i=1}^n \eta_i\sum_{k=i+1}^n|w'_{n,k}-w'_{n,i}|\exp (-\lambda\sum_{t=i+1}^k \eta_t)\\
			&\leq \sum^{\lfloor \kappa n \rfloor}_{i=1} \eta_i \sum^n_{k=\lceil \kappa n \rceil} \frac{1}{d_n}\exp \bigg(-\frac{\lambda \eta}{1-\alpha}\big(k^{1-\alpha}-(i+1)^{1-\alpha}\big)\bigg)
			\\&\lesssim \frac{1}{n}\sum^{\lfloor \kappa n \rfloor}_{i=1} i^{-\alpha}\exp\bigg(\frac{\lambda \eta}{1-\alpha}(i+1)^{1-\alpha}\bigg)\sum^n_{k=\lceil \kappa n \rceil}\exp \bigg(-\frac{\lambda \eta}{1-\alpha}k^{1-\alpha}\bigg)
			\\&\lesssim \frac{1}{n}\exp\bigg(\frac{\lambda \eta}{1-\alpha} \lfloor \kappa n \rfloor^{1-\alpha} \bigg)\exp\bigg(-\frac{\lambda \eta}{1-\alpha}\lceil \kappa n \rceil^{1-\alpha}\bigg)\lceil \kappa n \rceil^{\alpha}
			\\&\lesssim \frac{\lceil \kappa n \rceil^{\alpha}}{n}
		\end{aligned}	 
	\end{equation}
So the claim $\lim_{n\rightarrow \infty}\sum_{i=1}^n \eta_i\sum_{k=i+1}^n|w'_{n,k}-w'_{n,i}|\exp (-\lambda\sum_{t=i+1}^k \eta_t)=0$ holds, and we complete the proof of Lemma \ref{tm1}.
\end{proof}

\subsection{Proof of Lemma \ref{mclt}}

\begin{proof}
We further decompose the weighted sum of the martingale difference sequence as
$$ \sqrt{n}\sum^{n}_{i=1}w_{n,i}A^{-1}\epsilon_i= \sqrt{n}\sum^{n}_{i=1}w_{n,i}A^{-1}(\epsilon_i-\nabla f(x^*,\xi_i))+\sqrt{n}\sum^{n}_{i=1}w_{n,i}A^{-1}\nabla f(x^*,\xi_i).$$
By Assumption \ref{as2}, 
$$ \mathbb{E}\|\nabla f(x_{i-1},\xi_i)-\nabla f(x^*,\xi_i)\|^2_2 \leq L^2 \| \delta_{i-1}\|^2$$
By the convexity of \(\| \cdot \|_2^2\) and Jensen's inequality,
\begin{align*}
\| \nabla F(x_{i-1}) - \nabla F(x^*) \|_{2}^2 &= \| \E_{i-1} ( \nabla f(x_{i-1}, \xi_i) - \nabla f(x^*, \xi_i) ) \|_{2}^2  \\
&\leq \E_{i-1}\| \nabla f(x_{i-1}, \xi_i) - \nabla f(x^*, \xi_i) \|_{2}^2 \\
&\leq L^2 \| \delta_{i-1}\|^2.
\end{align*}
Since $\nabla F(x^*)=0$, we have
$$(\E_{i-1}\|\epsilon_i-\nabla f(x^*,\xi_i)\|^2_2)^{\frac{1}{2}} \leq \|\nabla F(x_{i-1}) \|_2 + (\E_{i-1} \| \nabla f(x_{i-1},\xi_i)-\nabla f(x^*,\xi_i)\|^2_2)^{\frac{1}{2}}\leq L \| \delta_{i-1}\|_2.$$
We know that  $\mathbb{E}\|\delta_{i-1}\|^2_2 \lesssim   (i-1)^{-\alpha}$, so
$$ \E\|\epsilon_i-\nabla f(x^*,\xi_i)\|^2_2 \lesssim L^2(i-1)^{-\alpha}.$$
Since $|w_{n,i}| \leq Cn^{-1}$,
\begin{align*}\E \| \sqrt{n}\sum^{n}_{i=1}w_{n,i}A^{-1}(\epsilon_i-\nabla f(x^*,\xi_i))\|_2^2 &\lesssim \frac{1}{n}\sum_{i=1}^n \mathbb{E}\|\epsilon_i-\nabla f(x^*,\xi_i)\|^2_2 \\
&\lesssim \frac{1}{n}(\sum_{i=2}^n (i-1)^{-\alpha}+\| \delta_0\|^2)\asymp n^{-\alpha}
\end{align*}
which vanishes as $n \rightarrow \infty$. Then it suffices to prove the CLT for $\sqrt{n}\sum_{i=1}^n w_{n,i}A^{-1}\nabla f(x^*,\xi_i)$. Let $Z_i=A^{-1}\nabla f(x^*,\xi_i)$ be a sequence of mean-zero variables with finite second moments, and denote the covariance matrix as $\Sigma_n=\sum_{i=1}^n (\sqrt{n}w_{n,i})^2A^{-1}SA^{-1}$. Since $\sum_{i=1}^n (\sqrt{n}w_{n,i})^2 \rightarrow w$ and $|w_{n,i}| \leq Cn^{-1}$, we have 
$$\| \Sigma_n^{-\frac{1}{2}}\sqrt{n}w_{n,i}Z_i \|^2 \leq \frac{4}{w}n w^2_{n,i} \| (A^{-1}SA^{-1})^{-1}\| \|Z_i\|\lesssim \frac{\| Z_i\|^2}{n}. $$
Hence for any $\nu>0$,
\begin{align*}\sum_{i=1}^n\mathbb{E} \{\| \Sigma_n^{-\frac{1}{2}}\sqrt{n}w_{n,i}Z_i \|^2 \mathbbm{1}_{\|\Sigma_n^{-\frac{1}{2}}\sqrt{n}w_{n,i}Z_i \|^2 \ge \nu}  \} &\lesssim \lim_{n\rightarrow \infty} \sum_{i=1}^n \E\{ \frac{\|Z_i\|^2}{n} \mathbbm{1}_{\|Z_i \|^2 \ge n \nu}  \} \\
& = \E\{ \|Z_i\|^2 \mathbbm{1}_{\|Z_i \|^2 \ge n \nu}  \} \rightarrow 0,
\end{align*}
which converges to $0$ because of the fact that $\mathbbm{1}_{\|Z_i \|^2 \ge n \nu} \rightarrow 0$ and Lebesgue's dominated convergence theorem. We have verified the Lindeberg condition. As a result,

$$\Sigma_n^{-\frac{1}{2}}\sqrt{n}\sum_{i=1}^nw_{n,i}Z_i \stackrel{D}{\to}  \mathcal{N}(0,{\textbf{I}}_d).$$
and since $\Sigma_n\rightarrow wA^{-1}SA^{-1}$,
$$\sqrt{n}\sum_{i=1}^nw_{n,i}Z_i \stackrel{D}{\to}  \mathcal{N}(0,wA^{-1}SA^{-1}).$$

Here we also illustrate that $nV_n \rightarrow A^{-1}SA^{-1}$ under the setting of Theorem \ref{tm2}. Recall the SGD iterates
$$x_i = x_{i-1}-(Ax_{i-1}-A^{\frac{1}{2}}\xi_i)=({\bf I}_d-A)x_{i-1}+z_i.$$
where $\E z_i=0$ and $\text{Cov}(z_i)=S$. The weighted averaged SGD is formulated as
$$\tilde{x}_n= \eta_1^{-1}({\bf I}_d-\eta_1A)\Phi_{n,1}x_0+\sum_{i=1}^n\Phi_{n,i}z_i,$$
with its covariance $V_n = \sum_{i=1}^n \Phi_{n,i}S\Phi_{n,i}.$ On the other hand,
\begin{equation}
	\begin{aligned}	\sqrt{n}\tilde{x}_n&=\sqrt{n}\sum^{n}_{i=1}w_{n,i}A^{-1}z_i+\sqrt{n}\sum^n_{i=1}w_{n,i} Y_0^i\delta_0
		\\&+\sqrt{n}\sum^n_{i=1}w_{n,i} a^n_iz_i+\sqrt{n}\sum^n_{i=1}b^n_iz_i,
	\end{aligned}	 
\end{equation}
and we have shown that the last three terms on the left-hand side $L^2$ converge to $0$. Hence
$$\lim_{n\rightarrow \infty}\text{Cov} (\sqrt{n}\tilde{x}_n)= \lim_{n\rightarrow \infty}nV_n=\lim_{n\rightarrow \infty}\text{Cov}(\sqrt{n}\sum^{n}_{i=1}w_{n,i}A^{-1}z_i)= wA^{-1}SA^{-1}.$$
\end{proof}

\subsection{Proof of Theorem \ref{th:fclt}}
\begin{proof}
Recall  (\ref{sgddecomp}) for the decomposition of SGD iterates. Without loss of generality, we can assume $x^*=0$ here.
Let 
$$y_n=y_{n-1}-\eta_nAy_{n-1}-\eta_n\nabla f(x^*,\xi_n), \ \ y_0=0, \ \ S_{L,n}=\sum_{i=1}^ny_i,$$ 
$$\Delta_{D,n}=\Delta_{D,n-1}-\eta_nA\Delta_{D,n-1}-\eta_nD_n, \ \ \Delta_{D,0}=0, \ \ S_{D,n}=\sum_{i=1}^n\Delta_{D,i},$$ 
$$\Delta_{R,n}=\Delta_{R,n-1}-\eta_nA\Delta_{R,n-1}-\eta_nR_n, \ \ \Delta_{R,0}=0, \ \ S_{R,n}=\sum_{i=1}^n\Delta_{R,i}.$$ 
Then the error term due to the starting point goes to $0$ as
$$\frac{1}{\sqrt{n}}\max_{1\leq i \leq n}|S_i-S_{L,i}-S_{D,i}-S_{R,i}| \lesssim \frac{1}{\sqrt{n}}\sum_{i=1}^n |\prod_{j=2}^i ({\bf I}_d-\eta_jA)|)\lesssim \frac{1}{\sqrt{n}}.$$ 
We prove the functional CLT in two steps. First, we prove the function CLT for $S_{L,n}$. To this end, we need to verify the following tightness condition: for any $r_1,r_2>0$, there exists $\delta>0$ and $n_0\in\N^+$ such that for all $n \ge n_0$,
\begin{equation}\label{tight}
    \P\Big(\max_{|i_1-i_2|\leq n\delta}\frac{|S_{L,i_1}-S_{L,i_2}|}{\sqrt{n}}\ge r_1\Big) \leq r_2.
\end{equation}
Without loss of generality, we take $r_1=r_2=r$. Notice that the probability in (\ref{tight}) can be upper bounded by
\begin{equation}
     \sum_{l=0}^{\lceil \frac{1}{\delta} \rceil}\P\Big(\max_{t_l\leq i\leq t_{l+1}}\frac{|S_{L,i}-S_{L,,t_l}|}{\sqrt{n}}\ge \frac{r}{2}\Big),
\end{equation}
which can be further upper bounded by 
\begin{equation}\label{segment}
 \sum_{l=0}^{\lceil \frac{1}{\delta} \rceil} \E \max_{t_l\leq i\leq t_{l+1}}\frac{2^q|S_{L,i}-S_{L,t_l}|^q}{(r\sqrt{n})^q}\leq \frac{2^q}{(r\sqrt{n})^q}\sum_{l=0}^{\lceil \frac{1}{\delta} \rceil}\E \max_{0\leq i\leq \lfloor n\delta\rfloor}|S_{L,t_l+i}-S_{L,t_l}|^q .
\end{equation}
By Markov's inequality, where $t_1=1$, $t_l=t_{l-1}+\lfloor n\delta\rfloor $ for $1< l \leq \lceil 1/\delta \rceil$ and $t_{\lceil1/\delta\rceil+1}=n$.
We apply the powerful maximal inequality Proposition 1 in \cite{Wu2007} to bound the expectation of the maximum as
\begin{align*}
\big(\E\max_{0\leq i\leq \lfloor n\delta\rfloor}|S_{L,t_l+i}-S_{L,t_l}|^q \big)^{1/q}&\leq \big(\E\max_{0\leq i\leq 2^d}|S_{L,t_l+i}-S_{L,t_l}|^q \big)^{1/q}\\
&\leq\sum_{k=0}^d \Big[ \sum_{m=1}^{2^{d-k}}\E|S_{L,t_l+2^km}-S_{L,t_l+2^k(m-1)}|^q\Big]^{1/q} ,
\end{align*} 
where $d=\lceil\log_2(\lfloor n\delta\rfloor)\rceil$. Now we prove the following lemma:
\begin{lemma}$\| y_{m+1}+...+y_{m+j}  \|_q \lesssim \sqrt{j}.$
\end{lemma}
\begin{proof}
When $j\leq m^{\alpha}$, $\| \sum_{i=m+1}^{m+j}y_{i}  \|_q  \leq \sum_{i=m+1}^{m+j}\| y_{i}\|_q\lesssim \sum_{i=m+1}^{m+j} i^{-\alpha/2} \lesssim j(m+1)^{-\alpha/2}\lesssim j^{1-1/2}=\sqrt{j}$. When $j>m^{\alpha}$, the high level idea is to decompose the sum to a linear combination of $Z_k=\nabla f(x^*,\xi_k)$, which is also the sum of martingale differences. The relationship between $y_i$ and $y_j$ can be written as
	\begin{equation}
		y_{i} =  \prod_{k=j+1}^i ({\bf {I}}-\eta_kA)y_j + \sum_{k=j+1}^i  \prod_{l=k+1}^i ({\bf {I}}-\eta_lA) \eta_k\nabla f(x^*,\xi_k). 
		\quad 0 \le j < i,
	\end{equation}
So $y_{m+1}+...+y_{m+j}$ can be expressed as the following linear combination of independent random variables,
$$\sum_{i=m+1}^{m+j}\prod_{k=m+2}^i ({\bf {I}}-\eta_kA)y_{m+1}+\sum_{i=m+2}^{m+j}\prod_{k=m+3}^i({\bf {I}}-\eta_kA)\eta_{m+2}Z_{m+2}+...+\eta_{m+j}Z_{m+j}.$$
We then use Burkholder's inequality since independent mean zero sequences are also martingale differences,
$$ \| y_{m+1}+...+y_{m+j}  \|^2_q \leq (q-1) (\| \psi_{m,j,1}\|_q^2+...+\|\psi_{m,j,j}  \|_q^2).$$
where $\psi_{m,j,1}=\sum_{i=m+1}^{m+j}\prod_{k=m+2}^i ({\bf {I}}-\eta_kA)y_{m+1}$ and 
$$\psi_{m,j,l}= \sum_{i=m+l}^{m+j}\prod_{k=m+l+1}^i({\bf {I}}-\eta_kA)\eta_{m+l}Z_{m+l}, \ 2\leq l \leq j.$$
Recall that $\|y_{m+1}\|_q^2 \lesssim (m+1)^{-\alpha}$, so we have
$$\|\psi_{m,j,1} \|_q^2 \lesssim (m+1)^{-\alpha} \Big(\sum_{i=m+1}^{m+j}\prod_{k=m+2}^i |{\bf {I}}-\eta_kA|\Big)^2 \lesssim (m+1)^\alpha \leq 2j.$$
Since $\|Z_n\|_q^2$ is finite, we also have
$$\|\psi_{m,j,k} \|_q^2 \lesssim (m+l)^{-2\alpha}\Big(\sum_{i=m+l}^{m+j}\prod_{k=m+l+1}^i|{\bf {I}}-\eta_kA|\Big)^2=\mathcal{O}(1).$$
As a result,
$$\| y_{m+1}+...+y_{m+j}  \|^2_q \lesssim 2j+(j-1)\mathcal{O}(1)\lesssim j$$
and the conclusion is proven.
\end{proof}
By the lemma, $\E|S_{L,t_l+2^km}-S_{L,t_l+2^k(m-1)}|^q \lesssim 2^{kq/2}$. Hence there exists a constant $C$ such that
\begin{align*}
    \P\Big(\max_{|i_1-i_2| \leq n\delta}\frac{|S_{L,i_1}-S_{L,i_2}|}{\sqrt{n}}\ge r\Big) &\leq C\frac{2^q}{\delta(r\sqrt{n})^q}(\sum_{k=0}^d ( \sum_{m=1}^{2^{d-k}}2^{kq/2})^{1/q})^q\\
    &=C \frac{2^q}{\delta(r\sqrt{n})^q}2^{qd/2}(\sum_{k=0}^d  (2^{1/q-1/2})^{d-k})^q.
\end{align*}
Since $q>2$, the geometric series $\sum_{k=0}^d  (2^{1/q-1/2})^{d-k}$ converges. We also have $2^d \lesssim n\delta$. So there exists a universal constant $\tilde{C}$ such that 
$$\P\Big(\max_{|i_1-i_2|\leq n\delta}\frac{|S_{L,i_1}-S_{L,i_2}|}{\sqrt{n}}\ge r\Big) \leq  \frac{\tilde{C}(n\delta)^{q/2}}{\delta(r\sqrt{n})^q}=\frac{\tilde{C}\delta^{q/2}}{\delta r^q}.$$
We can choose 
$$\delta \leq \big(\frac{r^{1+q}}{\tilde{C}}\big)^{2/(q-2)}$$ to ensure that 
$$\P\Big(\max_{|i_1-i_2|\leq n\delta}\frac{|S_{L,i_1}-S_{L,i_2}|}{\sqrt{n}}\ge r\Big)\leq r.$$
Hence the tightness condition holds.

Next, we prove the finite-dimensional convergence. In the proof of Lemma \ref{tm1} and Lemma \ref{mclt}, we have shown that 
$$\| \frac{1}{\sqrt{n}}\sum_{i=1}^n y_i - \frac{1}{\sqrt{n}}\sum_{i=1}^n A^{-1}\nabla f(x^*,\xi_i)\| \rightarrow 0.$$
Let $S^*_n=\sum_{i=1}^n\nabla f(x^*,\xi_i)$. For all finite $\{t_j\}_{j=1}^k \in (0,1)$, it entails 
$$  \sum_{j=1}^k\|S^*_{\lfloor nt_j \rfloor}/\sqrt{nt_j}-S_{L,\lfloor nt_j \rfloor}/\sqrt{nt_j}\| \rightarrow0.$$ So the finite-dimensional convergence follows from the Donsker's invariance principle on i.i.d sequence $\nabla f(x^*,\xi_i)$. The tightness condition and the finite-dimensional convergence ensure the FCLT for $S_{L,n}$.
\\

Next, we show that
$$\frac{1}{\sqrt{n}}\| \max_{1\leq i \leq n} |S_{D,i} | \|_2 \rightarrow 0, \ \ \frac{1}{\sqrt{n}}\| \max_{1\leq i \leq n} |S_{R,i} | \|_2 \rightarrow 0.$$
Recall that $\Delta_{D,n} =  \sum_{i=1}^n \eta_i\prod_{j=i+1}^n ({\bf I}_d-\eta_jA)D_i$ where $D_{i}= \nabla f(x_{i-1}, \xi_i)-\nabla F(x_{i-1})- \nabla f(x^*, \xi_i)$.
Again by Proposition 1 in \cite{Wu2007}, for $n\leq 2^d<2n$, we have
\begin{align*}
\|\max_{1\leq i \leq n } |S_{D,i}| \|_2 &\leq \|\max_{1\leq i \leq 2^d} |S_{D,i}| \|_2\\
&\leq \sum_{k=0}^d \Big[ \sum_{m=1}^{2^{d-k}}\|S_{D,2^km}-S_{D,2^k(m-1)}\|_2^2\Big]^{1/2}.
\end{align*} 
Notice that 
$$\|S_{D,2^km}-S_{D,2^k(m-1)}\|_2\leq\sum_{i=1}^{2^k}\| \Delta_{D,2^k(m-1)+i} \|_2\lesssim \sum_{i=1}^{2^k} (2^k(m-1)+i)^{-\alpha}.$$
Here we use the fact that $\Delta_{D,n}$ is the linear combination of a martingale difference sequence, and hence
$$ \|\Delta_{D,n}\|_2^2\leq \sum_{i=1}^n |\eta_i\prod_{j=i+1}^n ({\bf I}_d-\eta_jA)|^2\E |D_i|^2\lesssim \sum_{i=1}^ni^{-3\alpha}\prod_{j=i+1}^n |({\bf I}_d-\eta_jA)|^2\asymp n^{-2\alpha}.$$
Now we can further bound the polynomial series as
$$\sum_{i=1}^{2^k} (2^k(m-1)+i)^{-\alpha} \lesssim (2^km)^{1-\alpha}-(2^k(m-1))^{1-\alpha}.$$
As a result,
\begin{align*}
    \|\max_{1\leq i \leq n } |S_{D,i}| \|_2&\lesssim \sum_{k=0}^d \Big[ \sum_{m=1}^{2^{d-k}} [(2^km)^{1-\alpha}-(2^k(m-1))^{1-\alpha}]^2\Big]^{1/2}\\
 &\leq\sum_{k=0}^d \Big[ \sum_{m=1}^{2^{d-k}} (2^km)^{2-2\alpha}-(2^k(m-1))^{2-2\alpha}\Big]^{1/2}\\
 &=\sum_{k=0}^d\sqrt{(2^k2^{d-k})^{2-2\alpha}}=\sum_{k=0}^d 2^{d(1-\alpha)}.
\end{align*}
Finally, we have
$$ \frac{1}{\sqrt{n}}\|\max_{1\leq i \leq n } |S_{D,i}| \|_2 \lesssim \frac{1}{\sqrt{n}}\sum_{k=1}^d 2^{d(1-\alpha)}\lesssim n^{1-\alpha-1/2}\log_2n=n^{1/2-\alpha}\log_2n \rightarrow 0.$$
Recall that the second term $$\Delta_{R,n} =  \sum_{i=1}^n \eta_i\prod_{j=i+1}^n ({\bf I}_d-\eta_jA)R_i.$$ So
\begin{align*}
    \frac{1}{\sqrt{n}}\| \max_{1\leq i \leq n} |S_{R,i} | \|_2  &\leq\frac{1}{\sqrt{n}}\| \max_{1\leq i \leq n} \sum_{j=1}^i|\Delta_{R,j} | \|_2 \\
    &\leq \frac{1}{\sqrt{n}}\sum_{i=1}^n|\Delta_{R,i} | \\
    & \leq \frac{1}{\sqrt{n}}\sum_{i=1}^n  \sum_{k=1}^i \eta_k\|R_k \| |\prod_{j=k+1}^i ({\bf I}_d-\eta_jA)| \\
    &\lesssim \frac{1}{\sqrt{n}} \sum_{i=1}^n \sum_{k=1}^ii^{-2\alpha}|\prod_{j=k+1}^i ({\bf I}_d-\eta_jA)| \\
    &\lesssim \frac{1}{\sqrt{n}} \sum_{i=1}^n i^{-\alpha}\asymp n^{1/2-\alpha}\rightarrow 0.
\end{align*}
\end{proof}

\subsection{Proof of Corollary \ref{poly}}
\begin{proof}
	Recall the definition of $\theta_{n,i}$:
	\begin{equation*}
		\begin{aligned}	
			\theta_{n,i}  &=\frac{\gamma+1}{\gamma+i} \prod_{j=i+1}^{n} \frac{j-1}{j+\gamma}
			\\&=\frac{\gamma+1}{n}\frac{\Gamma(\gamma+i+1)\Gamma(n+1)}{\Gamma(\gamma+n+1)\Gamma(i+1)}.
		\end{aligned}	 
	\end{equation*}
 
We propose another Lemma here and defer the proof to section \ref{AL}.

\begin{lemma}\label{lemmap}
	The weight $w_{n,i} = \theta_{n,i}$ satisfies $\sum_{i=1}^n w_{n,i}=1$, $w_{n,i} \leq (\gamma+1)/n$ and 
	$$\lim_{n \rightarrow \infty} n\sum_{i=1}^n (w_{n,i})^2=\frac{(\gamma+1)^2}{2\gamma+1}.$$ 
\end{lemma}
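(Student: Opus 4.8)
The plan is to verify the three claims in turn, with the asymptotic variance computation being the only substantive step. For the normalization $\sum_{i=1}^n w_{n,i}=1$, I would induct on $n$ directly from the defining recursion \eqref{eq:2}: the base case $\tilde{x}_1=x_1$ has total weight $1$, and if $\sum_{i=1}^{n-1}w_{n-1,i}=1$, then the update $\tilde{x}_n=(1-\frac{\gamma+1}{\gamma+n})\tilde{x}_{n-1}+\frac{\gamma+1}{\gamma+n}x_n$ gives total weight $(1-\frac{\gamma+1}{\gamma+n})+\frac{\gamma+1}{\gamma+n}=1$. For the pointwise bound, I would use the Gamma form $\theta_{n,i}=\frac{\gamma+1}{n}\cdot\frac{g(i)}{g(n)}$ with $g(m)=\Gamma(\gamma+m+1)/\Gamma(m+1)$, and note $g(m+1)/g(m)=(\gamma+m+1)/(m+1)\ge 1$ for $\gamma\ge 1$, so $g$ is nondecreasing and $g(i)/g(n)\le 1$ whenever $i\le n$, yielding $w_{n,i}\le (\gamma+1)/n$.

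The heart of the lemma is the limit of $n\sum_{i=1}^n(w_{n,i})^2$. Writing $R_{n,i}=g(i)/g(n)=\frac{\Gamma(\gamma+i+1)\Gamma(n+1)}{\Gamma(\gamma+n+1)\Gamma(i+1)}$, we have $n\sum_{i=1}^n\theta_{n,i}^2=\frac{(\gamma+1)^2}{n}\sum_{i=1}^n R_{n,i}^2$. The key analytic input is the standard ratio asymptotic $\Gamma(m+a)/\Gamma(m+b)\sim m^{a-b}$, which gives $g(m)\sim m^{\gamma}$ and hence $R_{n,i}\sim (i/n)^{\gamma}$. Substituting, $\frac{(\gamma+1)^2}{n}\sum_{i=1}^n (i/n)^{2\gamma}$ is a Riemann sum for $(\gamma+1)^2\int_0^1 x^{2\gamma}\,dx=\frac{(\gamma+1)^2}{2\gamma+1}$, which is exactly the claimed value.

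The main obstacle is making the replacement $R_{n,i}\mapsto (i/n)^{\gamma}$ rigorous uniformly over $1\le i\le n$, since $g(m)\sim m^{\gamma}$ degrades for small $m$. I would control this with two-sided Gautschi/Wendel-type bounds $c_1 m^{\gamma}\le g(m)\le c_2 m^{\gamma}$ (or, more sharply, $g(m)=m^{\gamma}(1+O(1/m))$) to squeeze $\frac1n\sum R_{n,i}^2$ between Riemann sums of $x^{2\gamma}$ with vanishing error. The problematic small-$i$ block contributes negligibly because the summand there is $O((i/n)^{2\gamma})$, so a squeeze argument closes the gap and convergence of the Riemann sum to $\int_0^1 x^{2\gamma}\,dx$ finishes. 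As a cleaner alternative that avoids Stirling entirely, I would derive the exact recursion $T_n=\left(\tfrac{n-1}{n+\gamma}\right)^2 T_{n-1}+\left(\tfrac{\gamma+1}{n+\gamma}\right)^2$ for $T_n=\sum_{i=1}^n\theta_{n,i}^2$ (using $\theta_{n,i}=\frac{n-1}{n+\gamma}\theta_{n-1,i}$ for $i\le n-1$ and $\theta_{n,n}=\frac{\gamma+1}{\gamma+n}$), and then show $nT_n$ converges to the unique attracting fixed point $c$ solving $c(2\gamma+1)=(\gamma+1)^2$, i.e. $c=\frac{(\gamma+1)^2}{2\gamma+1}$.
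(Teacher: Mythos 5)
Your proposal is correct, and its primary route is essentially the paper's own proof: both establish the normalization from the recursion, the bound $\theta_{n,i}\le(\gamma+1)/n$ from the Gamma-ratio inequality (the paper merely asserts $\Gamma(\gamma+i+1)\Gamma(n+1)<\Gamma(\gamma+n+1)\Gamma(i+1)$, while your observation that $g(m+1)/g(m)=(\gamma+m+1)/(m+1)\ge 1$ is a clean one-line proof of it), and the limit by replacing $n\theta_{n,i}$ with $(\gamma+1)(i/n)^{\gamma}$ uniformly away from small $i$ and passing to the Riemann sum $(\gamma+1)^2\int_0^1 x^{2\gamma}\,dx=(\gamma+1)^2/(2\gamma+1)$. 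The only difference in this main route is the standard fact invoked to justify the replacement: the paper uses Euler's limit formula $\Gamma(x)=\lim_n n^x n!/\bigl(x(x+1)\cdots(x+n)\bigr)$ together with a split of the sum at a fixed index $N$ whose block contributes $O(N/n)$, whereas you use Gamma-ratio asymptotics with Wendel/Gautschi two-sided bounds and the same negligible-block argument; these tools are interchangeable. Your alternative route via the exact recursion $T_n=\bigl(\tfrac{n-1}{n+\gamma}\bigr)^2T_{n-1}+\bigl(\tfrac{\gamma+1}{n+\gamma}\bigr)^2$ (which follows correctly from $\theta_{n,i}=\tfrac{n-1}{n+\gamma}\theta_{n-1,i}$ for $i\le n-1$ and $\theta_{n,n}=\tfrac{\gamma+1}{\gamma+n}$) is genuinely different from the paper and arguably cleaner, since it avoids Gamma asymptotics and Riemann sums entirely. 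To close it, note that ``attracting fixed point'' alone is not a proof: set $u_n=nT_n$, $c=(\gamma+1)^2/(2\gamma+1)$ and $v_n=u_n-c$; the recursion gives the exact identity $v_n=\tfrac{n(n-1)}{(n+\gamma)^2}v_{n-1}-\tfrac{c\gamma^2}{(n+\gamma)^2}$, hence $|v_n|\le\bigl(1-\tfrac{c_1}{n}\bigr)|v_{n-1}|+\tfrac{c_2}{n^2}$ for large $n$, and a standard Chung-type recursion lemma (or direct iteration) yields $v_n\to 0$, i.e.\ $nT_n\to c$. With that one computation supplied, your alternative is a complete and more elementary proof of the limit than the paper's.
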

Now we only have to verify the smoothness condition. We first show that there exists a constant $\tilde{C}=\gamma(\gamma+1)$ such that for all $1\leq i < n$,
$$|w_{n,i+1}-w_{n,i}| \leq \tilde{C} n^{-2}.$$
Notice for any $n \in \mathbb{N}^+$,
	$$|\theta_{n+1,n+1}-\theta_{n,n}|=\frac{(\gamma+1)(\gamma+n)-n(\gamma+1)}{(\gamma+n+1)(\gamma+n)}=\frac{\gamma(\gamma+1)}{(\gamma+n+1)(\gamma+n)} \leq \frac{\gamma(\gamma+1)}{(n+1)^2},$$
	and
	\begin{equation}
		\begin{aligned}	 
			|\theta_{n+1,n}-\theta_{n+1,n-1}|&=(1-\frac{\gamma+1}{\gamma+n})(\theta_{n,n}-\theta_{n,n-1})
			\\&=\frac{\gamma(\gamma+1)}{(\gamma+n)(\gamma+n-1)}\frac{n}{\gamma+n+1}.
		\end{aligned}	 
	\end{equation}
Since $n\ge 1$, we have $|\theta_{n+1,n}-\theta_{n+1,n-1}|\leq|\theta_{n+1,n+1}-\theta_{n,n}|$. Similarly we can prove that $|\theta_{n+1,i+1}-\theta_{n+1,i}|\leq|\theta_{n+1,n+1}-\theta_{n,n}|$ for any $1\leq i \leq n$. So $|\theta_{n+1,i+1}-\theta_{n+1,i}| \leq \gamma(\gamma+1)/(n+1)^2$ for any $1\leq i \leq n$, or equivalently, $|w_{n,i+1}-w_{n,i}| \leq \tilde{C} n^{-2}$ for all $1\leq i < n$. 

Let $f_n(k/n)=n\theta_{n,i}$, it is clear that this $f_n$ meets the piecewise Lipshitz condition with $\sum_{i=1}^n f_n(k/n) =n$ and $L_1=\gamma(\gamma+1)$. It is actually global Lipshitz with no exception. So all assumptions for Theorem \ref{tm2} are verified and the CLT holds.

\end{proof}

\subsection{Proof of Corollary \ref{Co2}}
\begin{proof}
It is obvious that for $\kappa-$suffix averaging $|w_{n,i}|\lesssim 1/n$, $w=1/\kappa$ and the piecewise Lipshitz condition holds with only one exception. So the CLT holds by Theorem \ref{tm2}.
	
\end{proof}

\subsection{Proof of Proposition \ref{1dlm0}}

We should prove the following Proposition \ref{withinitial}, a more detailed version of Proposition \ref{1dlm0}. Instead of requiring $\eta_1=a_1^{-2}$ in Proposition \ref{1dlm0}, we first consider a general step size. Recall that we have the squared loss function $f(x,\xi_i  = (a_{i}, b_{i}))=(a_i x-b_i)^2/{2}$, and SGD iterates 
\begin{equation}\label{1dlinear}
    x_i=x_{i-1}-\eta_ia_i(a_ix_{i-1}-b_i).
\end{equation}
Here $\eta_i=\eta i^{-\alpha}$ with $0.5<\alpha<1$. 

\begin{proposition}\label{withinitial}
The unique solution to the optimization problem $$
\min_{c = (c_0,\ \cdots,\ c_n): c^{T}\1 = 1}\mathbb{E}\|\sum_{i=0}^n c_i(x_i-x^{*})\|^2$$ 
with $x_i$ defined in (\ref{1dlinear}) is given by
$$c%=\frac{\Sigma^{-1} \mathbbm{1}}{\mathbbm{1}^T\Sigma^{-1}\mathbbm{1}}
=\frac{\Theta^TD^{-1}\Theta\mathbbm{1}}{\mathbbm{1}^T\Theta^TD^{-1}\Theta\mathbbm{1}},$$
where $$D=\begin{pmatrix}
		(x_0-x^*)^2&0&\cdots&\cdots&0 \\
		0&\sigma^2a_1^2\eta^2_1&0&\cdots&0\\
		0&0&\sigma^2a_2^2\eta^2_2&\cdots&0\\
		\vdots&\ddots&\ddots&\ddots&\vdots\\
		0&0&0&\cdots&\sigma^2a_n^2\eta^2_n&
\end{pmatrix},$$ 
$$
\Theta = \begin{pmatrix}
		1&0&\cdots&\cdots&0 \\
		\eta_1a^2_1-1&1&0&\cdots&0\\
		0&\eta_2a^2_2-1&1&\cdots&0\\
		\vdots&\ddots&\ddots&\ddots&\vdots\\
		0&0&0&\eta_na^2_n-1&1&.
\end{pmatrix} 
$$
More explicitly,
$$c_{n,0}=\frac{(\dfrac{\sigma}{x_0-x^*})^{2}+a_1^2-\eta^{-1}_1}{S_n},$$
$$c_{n,i}=\frac{\eta_i^{-1}+a_{i+1}^2-\eta_{i+1}^{-1}}{S_n}, 1\leq i \leq n-1, $$
$$c_{n,n}=\frac{1}{\eta_nS_n},$$
where $S_n=(\dfrac{\sigma}{x_0-x^*})^{2}+\sum_{i=1}^n a_i^2.$ 
\end{proposition}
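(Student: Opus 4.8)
The plan is to express the whole error sequence as an invertible linear transform of an independent noise vector, so that the second-moment matrix factors through $\Theta$ and $D$, and then read off the answer from the closed form \eqref{matrixform}. First I would derive the error recursion. Writing $\delta_i = x_i - x^*$ and substituting $b_i = a_i x^* + \epsilon_i$ into \eqref{1dlinear} gives
$$\delta_i = (1 - \eta_i a_i^2)\,\delta_{i-1} + \eta_i a_i \epsilon_i,\qquad i\ge 1,\qquad \delta_0 = x_0 - x^*.$$
Collecting $\delta = (\delta_0,\dots,\delta_n)^T$ together with the noise vector $u=(u_0,\dots,u_n)^T$ defined by $u_0=\delta_0$ and $u_i=\eta_i a_i\epsilon_i$ for $i\ge1$, the recursion reads exactly $\Theta\delta = u$, where $\Theta$ is the lower-bidiagonal matrix in the statement (unit diagonal, subdiagonal entries $\eta_i a_i^2-1$). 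Since $\Theta$ is lower-triangular with unit diagonal it is invertible, so $\delta=\Theta^{-1}u$.

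Next I would compute the second-moment matrix of $\delta$. Because $\epsilon_1,\epsilon_2,\dots$ are i.i.d.\ mean-zero with variance $\sigma^2$ while $\delta_0$ is deterministic, the entries of $u$ are uncorrelated, with $\mathbb{E}[u_0^2]=(x_0-x^*)^2$ and $\mathbb{E}[u_i^2]=\sigma^2 a_i^2\eta_i^2$; hence $\mathbb{E}[uu^T]=D$. A point worth stressing is that the objective is the full second moment $\mathbb{E}\|\sum_i c_i\delta_i\|^2 = c^T\mathbb{E}[\delta\delta^T]c$, which already absorbs the bias coming from the deterministic $\delta_0$ (the cross terms $u_0u_i$ vanish in expectation precisely because $\mathbb{E}\epsilon_i=0$). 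Thus with $\Sigma:=\mathbb{E}[\delta\delta^T]=\Theta^{-1}D\Theta^{-T}$ the problem is exactly $\min_{c^T\1=1}c^T\Sigma c$, and \eqref{matrixform} together with $\Sigma^{-1}=\Theta^T D^{-1}\Theta$ delivers the matrix form $c=\Theta^T D^{-1}\Theta\1/(\1^T\Theta^T D^{-1}\Theta\1)$ immediately; uniqueness follows since $\Sigma$ is positive definite ($\Theta$ invertible and $D\succ0$).

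For the explicit coordinates I would unwind the bidiagonal products. One computes $\Theta\1=(1,\eta_1 a_1^2,\dots,\eta_n a_n^2)^T$, so $g:=D^{-1}\Theta\1$ has entries $g_0=1/(x_0-x^*)^2$ and $g_i=1/(\sigma^2\eta_i)$ for $i\ge1$; applying the upper-bidiagonal $\Theta^T$, whose action is $(\Theta^T g)_i=g_i+(\eta_{i+1}a_{i+1}^2-1)g_{i+1}$ for $i<n$ and $(\Theta^T g)_n=g_n$, yields the stated numerators $(\sigma/(x_0-x^*))^2+a_1^2-\eta_1^{-1}$, then $\eta_i^{-1}+a_{i+1}^2-\eta_{i+1}^{-1}$, and finally $\eta_n^{-1}$, all carrying a common factor $\sigma^{-2}$. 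Summing them, the $\eta^{-1}$-terms telescope to zero and the $a_i^2$-terms collapse to $\sum_{i=1}^n a_i^2$, so the normalizer is $\1^T\Theta^T D^{-1}\Theta\1=S_n/\sigma^2$ with $S_n=(\sigma/(x_0-x^*))^2+\sum_i a_i^2$; the $\sigma^{-2}$ cancels and gives the claimed $c_{n,i}$. The same telescoping shows the minimal value equals $1/(\1^T\Sigma^{-1}\1)=\sigma^2/S_n$, which is $1/n$ in the standardized (mean-estimation) regime. Proposition \ref{1dlm} is then the analogous statement over indices $1,\dots,n$: taking $\eta_1=a_1^{-2}$ forces $1-\eta_1 a_1^2=0$, so $\delta_1=\eta_1a_1\epsilon_1$ is a genuine-noise fresh start, and the identical argument (now with $\delta_1$ in the role of the initial term) reproduces its weights.

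The main obstacle I expect is getting the factorization bookkeeping exactly right, namely reading off $\Theta$ and $D$ from the recursion and, in particular, justifying that treating the deterministic initial error $\delta_0$ as a variance-$(x_0-x^*)^2$ diagonal entry of $D$ reproduces the true second moment rather than merely a variance. Once $\Sigma=\Theta^{-1}D\Theta^{-T}$ is in hand, everything else is the routine, if slightly tedious, evaluation of $\Theta^T D^{-1}\Theta\1$ and the telescoping of its coordinate sum.
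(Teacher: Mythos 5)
Your proposal is correct and follows essentially the same route as the paper: you write the error recursion in matrix form $\Theta\delta = u$, observe that the noise vector $u$ has uncorrelated entries so $\Theta\Sigma\Theta^{T}=D$, i.e.\ $\Sigma^{-1}=\Theta^{T}D^{-1}\Theta$, and then plug into the Lagrange/BLUE formula $c=\Sigma^{-1}\mathbbm{1}/(\mathbbm{1}^{T}\Sigma^{-1}\mathbbm{1})$ exactly as the paper does. The only (cosmetic) difference is that you evaluate $\Theta^{T}D^{-1}\Theta\mathbbm{1}$ coordinate-by-coordinate with the telescoping sum, where the paper instead displays the full tridiagonal matrix $\Theta^{T}D^{-1}\Theta$ and leaves that final verification to the reader; your handling of the deterministic initial error and of uniqueness via positive definiteness is a slightly more careful writing of the same argument.
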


\begin{proof}
    
The SGD error sequence $x_{i}-x^*$ takes the recursion form 
$$x_i-x^*=(1-\eta_ia^2_i)(x_{i-1}-x^*)+\eta_ia_i(b_i-a_ix^*)$$

Let 
$$
\Theta = \begin{pmatrix}
		1&0&\cdots&\cdots&0 \\
		\eta_1a^2_1-1&1&0&\cdots&0\\
		0&\eta_2a^2_2-1&1&\cdots&0\\
		\vdots&\ddots&\ddots&\ddots&\vdots\\
		0&0&0&\eta_na^2_n-1&1&.
\end{pmatrix} 
$$
Then we have
\begin{equation}\label{matrix1d}
\Theta 
\begin{pmatrix}
x_0-x^*\\x_1-x^*\\x_2-x^*\\\vdots\\x_n-x^*
\end{pmatrix}=
\begin{pmatrix}
x_0-x^*\\ \eta_1(b_1-a_1x^*)a_1\\ \eta_2(b_2-a_2x^*)a_2\\\vdots\\\eta_n(b_n-a_nx^*)a_n
\end{pmatrix}.
\end{equation}

We further treat $a_i$ as fixed and denote $\Theta$ as the matrix in the left hand side above. Similar as the mean estimation model, here the optimal weights solution is also determined by $\Sigma = (\mathbb{E}(x_ix_j))_{i,j \ge 0}$, the ``covariance'' matrix of $(x_0,\ x_1,\ x_2,\ \cdots,\ x_n)$. 

We further define 
$$\Phi=\begin{pmatrix}
x_0-x^*\\ \eta_1(b_1-a_1x^*)a_1\\ \eta_2(b_2-a_2x^*)a_2\\\vdots\\\eta_n(b_n-a_nx^*)a_n
\end{pmatrix}, \ X=\begin{pmatrix}
x_0-x^*\\x_1-x^*\\x_2-x^*\\\vdots\\x_n-x^*
\end{pmatrix},$$
and
$$D=\begin{pmatrix}
		(x_0-x^*)^2&0&\cdots&\cdots&0 \\
		0&\sigma^2a_1^2\eta^2_1&0&\cdots&0\\
		0&0&\sigma^2a_2^2\eta^2_2&\cdots&0\\
		\vdots&\ddots&\ddots&\ddots&\vdots\\
		0&0&0&\cdots&\sigma^2a_n^2\eta^2_n&
\end{pmatrix}.$$  

Then $\Theta X=\Phi$ implies that $\E \Theta XX^T \Theta^T=\Theta \E XX^T \Theta^T=\Theta \Sigma \Theta^T=\E \Phi\Phi^T$. By the fact that $\E (x_i-x^*)(x_j-x^*)=0$ for all $i\neq j$, we have $\E \Phi\Phi^T=D$. Thus we have a diagonalization of $\Sigma$ as $\Theta \Sigma \Theta^T=D$.

Using the Lagrangian multiplier method, we can obtain the closed-form solution as
$$\frac{\Sigma^{-1} \mathbbm{1}}{\mathbbm{1}^T\Sigma^{-1} \mathbbm{1}},$$ 
and it remains to show that the solution $$\frac{\Sigma^{-1} \mathbbm{1}}{\mathbbm{1}^T\Sigma^{-1} \mathbbm{1}}=\frac{\Theta^TD^{-1}\Theta\mathbbm{1}}{\mathbbm{1}^T\Theta^TD^{-1}\Theta\mathbbm{1}}$$
is the form of Proposition \ref{withinitial}. Here we give the closed form of the matrix $\Theta^TD^{-1}\Theta$, 
\begin{align*}
&\Theta^TD^{-1}\Theta=\\
&\begin{pmatrix}
		\frac{1}{(x_0-x^*)^2}+\frac{(\eta_1a^2_1-1)^2}{\sigma^2a_1^2\eta_1^2}&\frac{\eta_1a^2_1-1}{\sigma^2a_1^2\eta_1^2}&0&0&\cdots&0 \\
		\frac{\eta_1a^2_1-1}{\sigma^2a_1^2\eta_1^2}&\frac{1}{\sigma^2a_1^2\eta_1^2}+\frac{(\eta_2a^2_2-1)^2}{\sigma^2a_2^2\eta_2^2}&\frac{\eta_2a^2_2-1}{\sigma^2a_2^2\eta_2^2}&0&\cdots&0\\
		0&\frac{\eta_2a^2_2-1}{\sigma^2a_2^2\eta_2^2}&\frac{1}{\sigma^2a_2^2\eta_2^2}+\frac{(\eta_3a^2_3-1)^2}{\sigma^2a_3^2\eta_3^2}&\ddots&\ddots&0\\
		\vdots&\ddots&\ddots&\ddots&\ddots&\frac{\eta_na^2_n-1}{\sigma^2a_n^2\eta_n^2}\\
		0&0&\cdots&0&\frac{\eta_na^2_n-1}{\sigma^2a_n^2\eta_n^2}&\frac{1}{\sigma^2a_n^2\eta_n^2}&
\end{pmatrix},
\end{align*}
and the conclusion can be easily verified.
\end{proof}

The optimal weight of the initialization term $c_{n,0}$ in Proposition \ref{withinitial} depends on $\sigma^2$ and the initial error $x_0-x^*$, both of which can not be observed. To solve this problem, we may consider the two-step estimation: first estimate $\sigma$ and $x^*$ using ASGD or other averaged schemes with a small batch of SGD iterates, then plug them in to obtain the optimal weights. Another approach is to modify the structure of $\Theta$ and equation \ref{matrix1d}. Choosing $\eta_1=a_1^{-2}$, we can exclude $x_0-x^{*}$ from (\ref{matrix1d}) and reduce it to
$$
\begin{pmatrix}
            1&0&\cdots&0\\
		\eta_2a^2_2-1&1&\cdots&0\\
		\ddots&\ddots&\ddots&\vdots\\
		0&0&\eta_na^2_n-1&1&
\end{pmatrix} 
\begin{pmatrix}
x_1-x^*\\x_2-x^*\\\vdots\\x_n-x^*
\end{pmatrix}=
\begin{pmatrix}
 \eta_1(b_1-a_1x^*)a_1\\ \eta_2(b_2-a_2x^*)a_2\\\vdots\\\eta_n(b_n-a_nx^*)a_n
\end{pmatrix}.
$$
In other words, if we plug $\eta_1=a_1^{-2}$ in Proposition \ref{withinitial}, we have $x_1=\eta_ib_ia_i$ and all SGD iterates will be free of $x_0-x^*$. 

Denote $\tilde{\Theta}$ as this reduced matrix in the left hand side above, and we can perform a diagonalization of $\Sigma_{-0}=(\mathbb{E}x_ix_j)_{i,j \ge 1}$, the ``covariance'' matrix of SGD sequence without $x_0$, as follows
$$\tilde{\Theta}\Sigma_{-0} \tilde{\Theta}^T=D_{-0}=\begin{pmatrix}
		\sigma^2a_1^2\eta^2_1&0&\cdots&0\\
		0&\sigma^2a_2^2\eta^2_2&\cdots&0\\
		\vdots&\ddots&\ddots&\vdots\\
		0&0&\cdots&\sigma^2a_n^2\eta^2_n&
\end{pmatrix}.$$
Instead of the optimization problem in proposition \ref{1dlm0}, the decomposition of $\Sigma_{-0}$ enables us to solve a reduced problem excluding the weight on $x_0$
$$
\min_{c = (c_1,\ \cdots,\ c_n): c^{T}\1 = 1}\mathbb{E}\|\sum_{i=1}^n c_i(x_i-x^{*})\|^2,
$$ 
and get the minimizer in the form of  
$$c=\frac{\Sigma_{-0} ^{-1} \mathbbm{1}}{\mathbbm{1}^T\Sigma_{-0} ^{-1}\mathbbm{1}}=\frac{\tilde{\Theta}^TD_{-0} ^{-1}\tilde{\Theta}\mathbbm{1}}{\mathbbm{1}^T\tilde{\Theta}^TD_{-0} ^{-1}\tilde{\Theta}\mathbbm{1}}.$$
The $\sigma^2$ terms in $D_{-0}$ cancel out. Finally, the weighting scheme with $\eta_1=a_1^{-2}$ is 
$$c_{n,i}=\frac{\eta_i^{-1}+a_{i+1}^2-\eta_{i+1}^{-1}}{S_n}, 1\leq i \leq n-1, $$
$$c_{n,n}=\frac{1}{\eta_nS_n},$$
where $S_n=\sum_{i=1}^n a_i^2$.

\subsection{Proof of Corollary \ref{Co3}}
\begin{proof}
	We define $\tilde{x}'_n=\sum_{i=1}^{n-1} c_{n,i} x_i$. From Corollary \ref{CLTlast} we know
    $$n^{\frac{\alpha}{2}}(x_n-x^*) = O_p(1).$$
    Notice that $$\sqrt{n}(\tilde{x}_n-x^*)-\sqrt{n}(\tilde{x}'_n-(1-\frac{1}{n\eta_n})x^*)=\frac{1}{\sqrt{n}\eta_n}(x_n-x^*)=o_p(1)$$
    because
	$$n^{\alpha-\frac{1}{2}}(x_n-x^*)=n^{\frac{\alpha}{2}-\frac{1}{2}}n^{\frac{\alpha}{2}}(x_n-x^*)=o_p(1).$$
	So $\sqrt{n}(\tilde{x}_n-x^*)$ and $\sqrt{n}(\tilde{x}'_n-(1-1/n\eta_n)x^*)=\sqrt{n}\sum_{i=1}^{n-1}c_{n,i}(x_i-x^*)$ has the same asymptotic distribution. Moreover, consider
    $$\sqrt{n}\E |  \sum_{i=1}^{n-1} c_{n,i}(x_i-x^*)- \sum_{i=1}^{n-1}\frac{1}{n}(x_i-x^*)|\leq \sqrt{n}\sum_{i=1}^{n-1}\frac{\eta_i^{-1}-\eta_{i+1}^{-1}}{n}\E |x_i-x^*|\lesssim \frac{1}{\sqrt{n}}\sum_{i=1}^{n-1}i^{\alpha-1}i^{-\alpha/2}\lesssim n^{\alpha/2-1/2}$$
    which vanishes as $n\rightarrow \infty$. So  $\tilde{x}'_n=\sum_{i=1}^{n-1} c_{n,i} x_i$ has the same asymptotic distribution as ASGD. Therefore the conclusion is proved.
\end{proof}

\section{Proof of Auxiliary Lemmas}\label{AL}
\subsection{Proof of Lemma \ref{lemma Yij}}

\begin{proof}
\vphantom{1}
\begin{enumerate}
	\item Clearly it holds for $j = i$. If $j < i$, noticing that $-x \ge \log(1-x)\ge -x-x^2/2$ for all $x \in [0,1/2]$, and the set $\{k: \lambda\eta_k > 1/2\}$ is finite, then we have 
 
\begin{align*}
|Y_{(\lambda)}{}_j^i| & = \prod_{k = j+1}^i |1- \lambda\eta_k|
		\lesssim \exp\bigg( - \lambda \sum_{k=j+1}^i \eta_k \bigg), 
\end{align*}
and
\begin{align*}
\log|Y_{(\lambda)}{}_j^i| & = \sum_{k = j+1}^i \log|1- \lambda\eta_k|
		\gtrsim - \lambda \sum_{k=j+1}^i \eta_k - \lambda^2 \sum_{k=j+1}^i \eta_k^2
\end{align*}
which implies 
 \begin{align*}
|Y_{(\lambda)}{}_j^i| & = \sum_{k = j+1}^i \log|1- \lambda\eta_k|
		\gtrsim \exp\bigg( - \lambda \sum_{k=j+1}^i \eta_k - \lambda^2 \sum_{k=j+1}^i \eta_k^2\bigg) \gtrsim \exp\bigg( - \lambda \sum_{k=j+1}^i \eta_k \bigg)
\end{align*}
since $\sum_{k=j+1}^i \eta_k^2$ is uniformly bounded for all $i$ and $j$. As a result,
	\begin{align*}
		|Y_{(\lambda)}{}_j^i| & = \prod_{k = j+1}^i |1- \lambda\eta_k|
		\asymp \exp\bigg( - \lambda \sum_{k=j+1}^i \eta_k \bigg) \\
		 & = \exp\bigg( - \lambda\eta\sum_{k=j+1}^i k^{-\alpha} \bigg)  
		 \le \exp\left( - \lambda\eta \int_{j+1}^{i} u^{-\alpha} \dif u \right) \\
		 & = \exp\left\{ - \frac{\lambda\eta}{1 - \alpha} \left(i^{1-\alpha} - (j+1)^{1-\alpha} \right)  \right\} \\
		 & = \exp\left\{ \frac{\lambda\eta}{1 - \alpha} \left(j^{1-\alpha} - i^{1-\alpha}\right) + \mathcal{O}(1) \right\}
		  \\
		& \asymp \exp\left\{ \frac{\lambda\eta}{1 - \alpha} \left(j^{1-\alpha} - i^{1-\alpha}\right) \right\}.
	\end{align*}
	Further, by Taylor series,
	\begin{align*}
		& \exp\left\{ \frac{\lambda\eta}{1 - \alpha} (j^{1-\alpha} - i^{1-\alpha}) \right\} \nonumber \\
		= {} & \exp\left\{ -\lambda\eta i^{-\alpha}(i-j) - \frac12 \lambda\eta\alpha u^{-\alpha - 1}(i-j)^2 \right\}
		\tag{for some $u \in (j, i)$} \nonumber \\
		\le {} & \exp\left\{ -\lambda\eta i^{-\alpha}(i-j) \right\}.
	\end{align*}

	\item Since $\exp\left(\beta j^{1-\alpha}\right) j^{-\gamma\alpha}$ is ultimately increasing in $j$, we have
	\begin{align}
		\sum_{j=1}^i \exp\left(\beta j^{1-\alpha}\right) j^{-\gamma\alpha}
		& \asymp \int_{1}^{i+1} \exp\left(\beta t^{1-\alpha}\right) t^{-\gamma\alpha} \dif t
		\nonumber \\
		& \asymp \int_{\beta}^{\beta (i+1)^{1-\alpha}} \e^u u^{-\frac{(\gamma - 1)\alpha}{1-\alpha}} \dif u.  \label{lemma Yij-int}
	\end{align}
	Using integration by parts, we have the following:
	\begin{align*}
		&\int_{\beta}^{\beta(i+1)^{1-\alpha}} \e^u u^{-\frac{(\gamma - 1)\alpha}{1-\alpha}} \dif u \\
		& \asymp  \e^u u^{-\frac{(\gamma - 1)\alpha}{1-\alpha}} \Big|_{\beta}^{\beta(i+1)^{1-\alpha}}
		+ \int_{\beta}^{\beta(i+1)^{1-\alpha}} \e^u u^{-\frac{(\gamma - 1)\alpha}{1-\alpha} - 1} \dif u \\
		& \asymp \exp\left\{ \beta(i+1)^{1-\alpha} \right\} (i+1)^{-(\gamma - 1)\alpha}+ (\beta(i+1)^{1-\alpha}-\beta)\exp\left\{ \beta(i+1)^{1-\alpha} \right\} (i+1)^{-(\gamma - 1)\alpha-(1-\alpha)}\\
		& \asymp \exp\left( \beta i^{1-\alpha} \right) i^{-(\gamma - 1)\alpha}.
	\end{align*}
	Therefore by (\ref{lemma Yij-1}), we have
	\begin{align*}
		\sum_{j=1}^i |Y_{(\lambda)}{}_j^i|^\beta |j^{-\alpha}|^\gamma 
		& \asymp \sum_{j=1}^i \exp\left\{\frac{\lambda\eta \beta}{1-\alpha} \left(j^{1-\alpha} - i^{1-\alpha} \right)  \right\} j^{-\gamma\alpha} \\
		& = \exp\left( - \frac{\lambda\eta \beta}{1-\alpha} i^{1-\alpha} \right)  \sum_{j=1}^i \exp\left(\frac{\lambda\eta \beta}{1-\alpha} j^{1-\alpha}  \right) j^{-\gamma\alpha} \\
		& \asymp  i^{-(\gamma - 1)\alpha}.
	\end{align*}
    \item 
Let $\psi(x)=\exp(-\beta x^{1-\alpha})$. It's a decreasing function of $x$. So we can bound the target term as
\begin{equation}
		\begin{aligned}	 
			\sum^n_{k=j}\exp (-\beta k^{1-\alpha}) &\lesssim \int_{j-1}^{n}\exp (-\beta x^{1-\alpha}))dx
			\\&=\int_{\beta(j-1)^{1-\alpha}}^{\infty} \frac{s^{\frac{\alpha}{1-\alpha}}}{\beta(1-\alpha)}e^{-s}ds\\
			&\lesssim (j-1)^{\alpha}\exp (-\beta (j-1)^{1-\alpha})\\
			&\asymp \exp(-\beta j^{1-\alpha})j^{\alpha}.
		\end{aligned}	 
	\end{equation}
    \end{enumerate}
\end{proof}

\subsection{Proof of Lemma \ref{lemmap}}
\begin{proof}
	Since $\Gamma(\gamma+i+1)\Gamma(n+1)<\Gamma(\gamma+n+1)\Gamma(i+1)$, we have 
	$$|\theta_{n,i}| \leq \frac{\gamma+1}{n}.$$
	From the recursive form of polynomial decay averaged SGD, it is easy to see $\sum_{i=1}^n \theta_{n,i}=1$. The last step is to prove the limitation holds. Define
	$$\Gamma_n(x)=\int^n_0t^{x-1}(1-\frac{t}{n})^n dt=\frac{n^xn!}{z(z+1)(z+2)\cdot\cdot\cdot(z+n)},$$
	where the last equality is from integration by parts. It's well known that $(1-\frac{t}{n})^n\leq (1-\frac{t}{n+1})^{n+1}$ and $\lim_{n\rightarrow \infty}(1-\frac{t}{n})^n = e^{-t}$ for any $t$. So $\Gamma_n(x)\leq \Gamma(x)$. Meanwhile we have an equivalent definition of $\Gamma(x)$:
	
	$$\Gamma(x)=\lim_{n\rightarrow \infty}\frac{n^xn!}{x(x+1)(x+2)\cdot\cdot\cdot(x+n)}=\lim_{n\rightarrow \infty}\Gamma_n(x).$$
	
	So for any $\tau>0$, there exists an $N>0$ such that for all $ n\geq N$, 
	$$0 \leq \Gamma(\gamma)-\frac{n^{\gamma}n!}{\gamma(\gamma+1)(\gamma+2)\cdot\cdot\cdot(\gamma+n)} \leq \tau.$$
	
	As a result, for $n\ge i \ge N$, we have $0\leq \frac{\Gamma(\gamma+i+1)}{\Gamma(i+1)i^{\gamma}}-1 \leq \Gamma(\gamma)\tau$ and $0\leq \frac{\Gamma(\gamma+n+1)}{\Gamma(n+1)i^{\gamma}}-1 \leq \Gamma(\gamma)\tau$, which implies
	
	$$\bigg| \frac{\Gamma(\gamma+n+1)}{\Gamma(n+1)i^{\gamma}}-\frac{\Gamma(\gamma+i+1)}{\Gamma(i+1)i^{\gamma}}\bigg| \leq 2\Gamma(\gamma)\tau.$$
	Furthermore we have 
	\begin{equation}\label{f1}
		\begin{aligned}	 
			\bigg| \frac{i^{\gamma}}{n^{\gamma}}-\frac{\Gamma(\gamma+i+1)\Gamma(n+1)}{\Gamma(\gamma+n+1)\Gamma(i+1)}\bigg|&=\bigg|\frac{t^{\gamma}\Gamma(n+1)}{\Gamma(\gamma+n+1)}  \bigg|\bigg| \frac{\Gamma(\gamma+n+1)}{\Gamma(n+1)i^{\gamma}}-\frac{\Gamma(\gamma+i+1)}{\Gamma(i+1)i^{\gamma}}\bigg| 
			\\ &\leq  2\Gamma(\gamma)\tau \bigg|\frac{n^{\gamma}\Gamma(n+1)}{\Gamma(\gamma+n+1)}  \bigg|
			\\ &\leq 2\Gamma(\gamma)\tau.
		\end{aligned}	 
	\end{equation}
	Now we estimate the following summation
	\begin{equation}\label{f2}
		\begin{aligned}	 
			&\frac{1}{n}\sum^n_{i=1}[(\gamma+1)(\frac{i}{n})^{\gamma}-n(\theta_{n,i})]
			\\ \leq&\frac{1}{n}\sum^N_{i=1}|(\gamma+1)(\frac{i}{n})^{\gamma}|+\sum^N_{i=1}|(\theta_{n,i})|+\frac{\gamma+1}{n}\sum^n_{i=N+1}\bigg|(\frac{i}{n})^{\gamma}-\frac{\Gamma(\gamma+i+1)\Gamma(n+1)}{\Gamma(\gamma+n+1)\Gamma(i+1)}\bigg|
			\\ \leq& \frac{N(\gamma+1)}{n}+\frac{N}{n}\theta_{n,N}+\frac{n-N}{n}\tau(\gamma+1)
			\\ \leq& (\gamma+1)(\frac{N}{2n}+\tau).
		\end{aligned}	 
	\end{equation}
	Let $\tau \rightarrow 0$ and $n \rightarrow \infty$,
	\begin{equation}\label{f3}
		\lim_{n\rightarrow\infty}\frac{1}{n}\sum^n_{i=1}[(\gamma+1)(\frac{i}{n})^{\gamma}-n(\theta_{n,i})]=0.
	\end{equation}
	Since 
	\begin{equation}
		\begin{aligned}	 
			[(\gamma+1)^2(\frac{i}{n})^{2\gamma}-n^2(\theta_{n,i})^2]&=[(\gamma+1)(\frac{i}{n})^{\gamma}-n(\theta_{n,i})][(\gamma+1)(\frac{i}{n})^{\gamma}+n(\theta_{n,i})]
			\\&\leq 2(\gamma+1)[(\gamma+1)(\frac{i}{n})^{\gamma}-n(\theta_{n,i})],
		\end{aligned}	 
	\end{equation}
	we have
	\begin{equation}\label{f4}
		\lim_{n\rightarrow\infty}\frac{1}{n}\sum^n_{i=1}[(\gamma+1)^2(\frac{i}{n})^{2\gamma}-n^2(\theta_{n,i})^2]\leq 2(\gamma+1)\lim_{n\rightarrow\infty} \frac{1}{n}\sum^n_{i=1} [(\gamma+1)(\frac{i}{n})^{\gamma}-n(\theta_{n,i})]=0.
	\end{equation}
	Finally, notice that $$\lim_{n\rightarrow\infty}\frac{1}{n}\sum^n_{i=1}(\gamma+1)^2(\frac{i}{n})^{2\gamma}=\int^1_0 (\gamma+1)^2x^{2\gamma}dx=\frac{(\gamma+1)^2}{2\gamma+1},$$
	We have proved the limitation in Theorem \ref{tm2} holds with 
	$$\lim_{n\rightarrow\infty} n\sum^n_{i=1} \theta_{n,i}^2=\frac{(\gamma+1)^2}{2\gamma+1}.$$
	
\end{proof}

\section{Supplemental Information of Simulation Studies}

%We conducted the experiments in R version 4.1.1 (2021-08-10) on a MacBook Air with a GPU Apple M1, 4 performance and 4 efficiency cores, and 8 GB LPDDR4 memory, equipped with macOS Big Sur version 11.5.1.

The choice of $\alpha=0.505$ follows the work of \citep{zhu2021online}. A similar choice can be found in \citep{chen2020statistical} to make $\alpha$ slightly larger than $0.5$. To verify our result when $\alpha$ is near $1$, we also set $\alpha=0.8$. The choices of parameters of polynomial decay averaging ($\gamma=3$) and suffix averaging ($\kappa=0.5$) are the same as the settings in \citep{shamir2013stochastic} and \citep{rakhlin2011making}.

We report MSE (Table \ref{tb1} and \ref{tb2}) and standard deviations of squared errors (Table \ref{tb3} and \ref{tb4}) in the simulation of the mean estimation model and the squared loss model. As is shown in Tables \ref{tb3} and \ref{tb4}, the standard deviation of optimal weighted averaging is also the smallest among all averaging schemes.

\begin{table}[ht]
\caption{MSE for mean estimation model.}
\vskip 0.15in
\begin{center}
\begin{small}
\begin{sc}
\begin{tabular}{lcccr}
\toprule
& & $\alpha=0.505$ & & \\
\midrule
	Average Scheme      & $n=400$  &  $ n=800$ &  $n=1200$    & $n= 1600$ \\
\midrule
	Optimal Weight    & 0.00251625 &    0.00116611       & 0.00080357 &0.00064877  \\
			ASGD    &0.00254422 &    0.00118549 &	0.00080872&	0.00066366  \\
			$\kappa$-suffix   &  0.00450813&0.00232837&	0.00170125&	0.00143947  \\ 
			Polynomial-Decay    &  0.00464877	&0.00192599	&0.00150453	&0.00010830 \\
			Last Iterate    &  0.02771139	&0.01643044	&0.01495235	&0.01226139 \\
			\midrule
			& & $\alpha=0.8$ & & \\
			\midrule
			Average Scheme      & $n=400$  &  $ n=800$         &  $n=1200$    & $n= 1600$\\\hline %Platform}
			Optimal Weight    & 0.00251625 &    0.00116611      & 0.00080357  &0.00064877  \\
			ASGD    &0.00308790 &    0.00139040&	0.00091637 &0.00071408  \\
			$\kappa$-suffix   & 0.00366752&	 0.00168517&	0.00119971 &	0.00108589  \\ 
			Polynomial-Decay    & .00328634	&0.00154893	&0.00119466	&0.00103280 \\
			Last Iterate    & 0.00472179	&0.00250589	&0.00214805	&0.00168590 \\
\bottomrule
\end{tabular}
\label{tb1}
\end{sc}
\end{small}
\end{center}
\end{table}

\begin{table}[ht]
\caption{MSE for squared loss model. Here $\alpha=0.8$.}
\vskip 0.15in
\begin{center}
\begin{small}
\begin{sc}
\begin{tabular}{lccccr}
\toprule
& & squared loss & & \\
\midrule
Average Scheme      & $n=2000$  &  $ n=4000$         &  $n=6000$    & $n= 8000$& $n= 10000$\\
\midrule
				    Optimal Weight    & 0.00052560 &    0.00025011& 0.00016126& 0.00012099 &0.00009732\\
				   ASGD    &0.00061791& 0.00028463& 0.00018354& 0.00013391 & 0.00010643 \\
				   $\kappa$-suffix   & 0.00104944 & 0.00046774& 0.00029666& 0.00022174  &0.00018105\\ 
				    Polynomial-Decay    & 0.00075921& 0.00038657&  0.00025183& 0.00019944&0.00016392 \\
				    Last Iterate    &  0.00121235& 0.00071972& 0.00046712& 0.00041441&0.00033334 \\
\bottomrule
\label{tb2}
\end{tabular}
\end{sc}
\end{small}
\end{center}
\vskip -0.1in
\end{table}

\begin{table}[ht]
\caption{Standard deviations of squared errors for mean estimation model.}
\label{tb3}
\vskip 0.15in
\begin{center}
\begin{small}
\begin{sc}
\begin{tabular}{lcccr}
\toprule
& & $\alpha=0.505$ & & \\
\midrule
	Average Scheme      & $n=400$  &  $ n=800$ &  $n=1200$    & $n= 1600$ \\
\midrule
	Optimal Weight    & 0.00378955 & 0.00179229 & 0.00121118 & 0.00103080  \\
			ASGD    &0.00387207 & 0.00185897 &0.00119596 &0.00106196  \\
			$\kappa$-suffix   &  0.00592971 &0.00372515 &0.00247292& 0.00227080  \\ 
			Polynomial-Decay    &0.00609187 &0.00360348 &0.00289309& 0.00231915 \\
			Last Iterate    &  0.04230295 &0.02266438 &0.02185713&0.01771681 \\
			\midrule
			& & $\alpha=0.8$ & &\\
			\midrule
			Average Scheme      & $n=400$  &  $ n=800$         &  $n=1200$    & $n= 1600$\\\hline %Platform}
			Optimal Weight    & 0.00378955 &    0.00179229      & 0.00121118  &0.00103080 \\
			ASGD    &0.00486709 &    0.00217177&	0.00139634
   &0.00108259  \\
			$\kappa$-suffix   & 0.00538974& 0.00313332& 0.00169102& 0.00169910 \\ 
			Polynomial-Decay    & 0.00465715& 0.00276466& 0.00164636& 0.00166781 \\
			Last Iterate    &0.00672886& 0.00371866& 0.00338224 &0.00253484 \\
\bottomrule
\end{tabular}
\end{sc}
\end{small}
\end{center}
\vskip -0.1in
\end{table}

\begin{table}[ht]
\caption{Standard deviations of squared errors for squared loss model. Here $\alpha=0.8$.}
\label{tb4}
\vskip 0.15in
\begin{center}
\begin{small}
\begin{sc}
\begin{tabular}{lccccr}
\toprule
& & squared loss & & \\
\midrule
Average Scheme      & $n=2000$  &  $ n=4000$         &  $n=6000$    & $n= 8000$& $n= 10000$\\
\midrule
				    Optimal Weight    & 0.00033900&    0.00015772& 0.00010264& 0.00007903&0.00006308 \\
				   ASGD    &0.00046388& 0.00018273& 0.00011797& 0.00008705&0.00007048  \\
				   $\kappa$-suffix   & 0.00088597& 0.00035209&  0.00019249& 0.00014681 &0.00012083 \\ 
				    Polynomial-Decay    & 0.00047245& 0.00025178& 0.00015650& 0.00013407 &0.00010312\\
				    Last Iterate    &  0.00069180& 0.00046476& 0.00031687& 0.00025734&0.00019574\\
\bottomrule
\end{tabular}
\end{sc}
\end{small}
\end{center}
\vskip -0.1in
\end{table}
\end{document}